\documentclass{article}

\PassOptionsToPackage{numbers, compress}{natbib}

\usepackage[preprint]{neurips_2025}




\usepackage[utf8]{inputenc} 
\usepackage[T1]{fontenc}    
\usepackage{hyperref}       
\usepackage{url}            
\usepackage{booktabs}       
\usepackage{amsfonts}       
\usepackage{nicefrac}       
\usepackage{microtype}      
\usepackage{xcolor}         

\usepackage[figuresright]{rotating}
\usepackage{ulem}
\usepackage{bm}
\usepackage{algorithm}
\usepackage{algorithmic}
\usepackage{amsmath}
\usepackage{wrapfig}

\usepackage{listings}
\usepackage{subfig}
\usepackage{multirow}

\newtheorem{theorem}{Theorem}
\newtheorem{lemma}{Lemma}
\newtheorem{remark}{Remark}
\newtheorem{proof}{Proof}

\newcommand{\ours}{NDCL}
\newcommand{\citeay}[1]{(\cite{#1})}

\setlength{\arraycolsep}{1pt} 

\definecolor{codebg}{RGB}{248,248,248}
\definecolor{keywordcolor}{RGB}{0,0,160}
\definecolor{stringcolor}{RGB}{0,128,0}
\definecolor{commentcolor}{RGB}{150,150,150}
\definecolor{numbercolor}{RGB}{100,0,100}

\lstdefinestyle{pythonstyle}{
	language=Python,
	backgroundcolor=\color{codebg},
	basicstyle=\ttfamily\tiny,
	keywordstyle=\color{keywordcolor}\bfseries,
	stringstyle=\color{stringcolor},
	commentstyle=\color{commentcolor}\itshape,
	numberstyle=\tiny\color{numbercolor},
	numbers=left,
	numbersep=8pt,
	frame=single,
	breaklines=true,
	tabsize=4,
	showstringspaces=false,
	captionpos=b
}

\title{Negatives-Dominant Contrastive Learning for Generalization in Imbalanced Domains}

\author{%
	Meng Cao$^{1, 3}$\quad Jiexi Liu$^{2}$\quad Songcan Chen$^{1, 3}$\thanks{Corresponding authors} \\
	\small $^1$ College of Computer Science and Technology, Nanjing University of Aeronautics and Astronautics\\
	\small $^2$ School of Computer and Artificial Intelligence, Nanjing University of Finance and Economics\\
	\small $^3$ MIIT Key Laboratory of Pattern Analysis and Machine Intelligence\\
	\texttt{\{meng.cao, liujiexi, s.chen\}@nuaa.edu.cn} \\
}

\begin{document}
	
	\maketitle
	
	\begin{abstract}
		Imbalanced Domain Generalization (IDG) focuses on mitigating both \textit{domain and label shifts}, both of which fundamentally shape the model's decision boundaries, particularly under heterogeneous long-tailed distributions across domains.
		Despite its practical significance, it remains underexplored, primarily due to the \textit{technical} complexity of handling their entanglement and the paucity of \textit{theoretical} foundations.
		In this paper, we begin by \textit{theoretically} establishing the generalization bound for IDG, highlighting the role of posterior discrepancy and decision margin. 
		This bound motivates us to focus on directly steering decision boundaries, marking a clear departure from existing methods.
		Subsequently, we \textit{technically} propose a novel Negative-Dominant Contrastive Learning (\ours) for IDG to enhance discriminability while enforce posterior consistency across domains.
		Specifically, inter-class decision-boundary separation is enhanced by placing greater emphasis on negatives as the primary signal in our contrastive learning, naturally amplifying gradient signals for minority classes to avoid the decision boundary being biased toward majority classes.
		Meanwhile, intra-class compactness is encouraged through a re-weighted cross-entropy strategy, and posterior consistency across domains is enforced through a prediction-central alignment strategy.
		Finally, rigorous yet challenging experiments on benchmarks validate the effectiveness of our \ours.
		The code is available at \href{https://github.com/Alrash/NDCL}{https://github.com/Alrash/NDCL}.
	\end{abstract}
	
\section{Introduction} \label{sec:introduction}

Real-world applications are often affected by domain shifts \cite{krueger2021out}, caused by factors such as weather variations \cite{eastwood2022probable} or stylistic changes \cite{nam2021reducing}, leading to a decline in model generalization performance.
To handle this issue, Domain Generalization (DG) has been developed over the past decade \cite{wang2022generalizing}, aiming to learn a model on multiple source domains that generalizes effectively to diverse unseen target domains. 
Nevertheless, most existing DG methods primarily tackle covariate shift \cite{lin2022mitigating}, where $P^i\left(\bm{X}\right) \neq P^j\left(\bm{X}\right), \forall i \neq j$, while largely neglecting label shift \cite{cao2019learning}, where $P^i\left(\bm{Y}\right) \neq P^j\left(\bm{Y}\right), \forall i \neq j$, which frequently arises during real-world data collection.
For instance, label shift can occur when rare hematological cell types are significantly underrepresented in samples from certain medical institutions \cite{umer2023imbalanced}, resulting in imbalanced class distributions across domains.
This highlights the practical significance of Imbalanced Domain Generalization (IDG) \cite{yang2022multi}, where label shift often manifests as heterogeneous long-tailed distributions across domains.

Due to the inherent difficulty of handling both entangled shifts, IDG has received limited attention and remains underexplored.
This coupling leads to several additional challenges: 1) Label shift, especially under heterogeneous long-tailed class distributions, can compromise the effectiveness of alignment strategies that are primarily designed to mitigate domain shift.
For instance, if pneumonia is more prevalent in Hospital A while cardiomegaly dominates in Hospital B, domain alignment strategies \cite{arjovsky2019invariant, ganin2016domain} alone may inadvertently match features of these distinct conditions across domains, i.e., effectively aligning pneumonia with cardiomegaly, leading to poor generalization.
2) Domains with abundant samples tend to dominate the training process, effectively absorbing underrepresented domains and suppressing their unique characteristics.
This imbalance biases the learned representation space and harms generalization to both source minorities and unseen target domains.
Corresponding empirical analyses on toy datasets are presented in the Appendix \ref{subsec:apx:exp:failure}.

\begin{figure}
	\includegraphics[width=\linewidth]{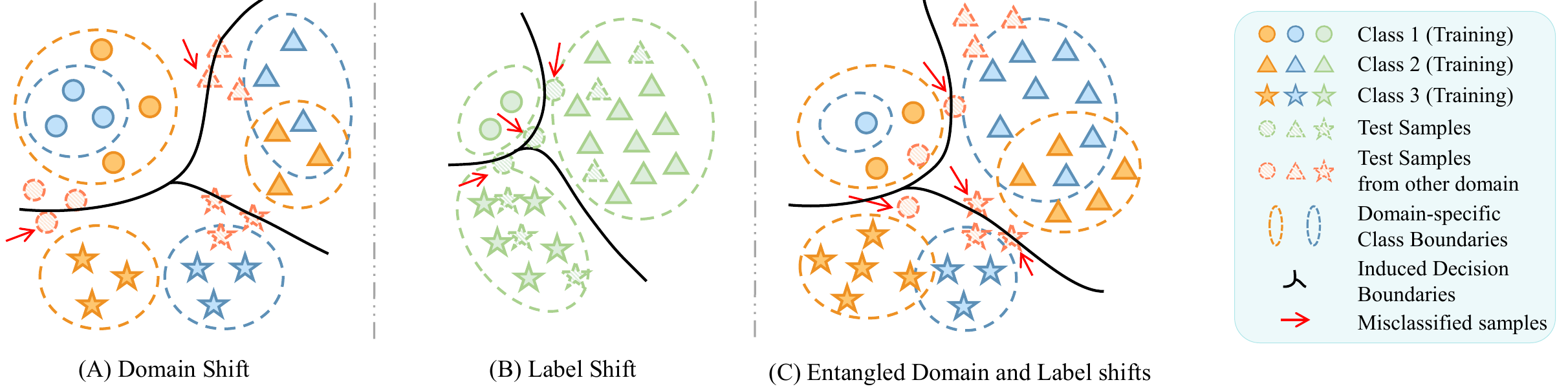}
	\caption{
		Impact of Domain and Label Shifts on Decision Boundaries.
		Orange, blue, and peach represent three distinct domains.
		(A) Domain shift leads to domain-specific class boundaries (dashed lines) for each class.
		(B) Label shift causes majority classes to dominate, compressing margins for minority classes.
		The dashed and solid lines are equivalent, with the former included for visual consistency.
		(C) Their interaction amplifies the challenge, requiring robustness to both simultaneously.
	}
	\label{fig:compare}
\end{figure}

While prior work proposes transferability-based surrogates \cite{yang2022multi} to heuristically guide representation learning in IDG, it lacks formal generalization guarantees for unseen target domains.
To fill this gap, in this paper, we establish the first theoretical generalization bound for IDG based on $\mathcal{H}$-divergence \cite{ben2010theory}, where posterior discrepancy and decision margin play pivotal roles alongside prior discrepancy.
Crucially, departing from conventional domain generalization theory \cite{albuquerque2019generalizing, lu2024towards}, which primarily focuses on aligning prior distributions $P\left(\bm{X}\right)$ across domains, IDG further demands posterior $P\left(\bm{Y} \vert \bm{X}\right)$ consistency across domains and well-separated decision boundaries between neighboring classes simultaneously.
Meanwhile, in essence, domain shift challenges the model to synthesize a unified decision boundary across heterogeneous domains \cite{zhu2022localized}, as shown in Fig. \ref{fig:compare} (A), while label shift pulls minority class boundaries toward majority classes due to imbalance-induced bias \cite{Kang2020Decoupling}, as shown in Fig. \ref{fig:compare} (B).
Fig. \ref{fig:compare} (C) illustrates that when both shifts co-occur, the model's inability to disentangle their respective effects becomes the underlying cause of severely distorted decision boundaries.
This theoretical insight and the essential factors motivate us to directly reshape the model's decision boundaries to be domain-consistent and margin-enlarged, thereby enhancing robustness in IDG.

\begin{wrapfigure}{r}{0.4\textwidth}
	\centering 
	\setlength{\abovecaptionskip}{0pt}
	\includegraphics[width=0.395\textwidth]{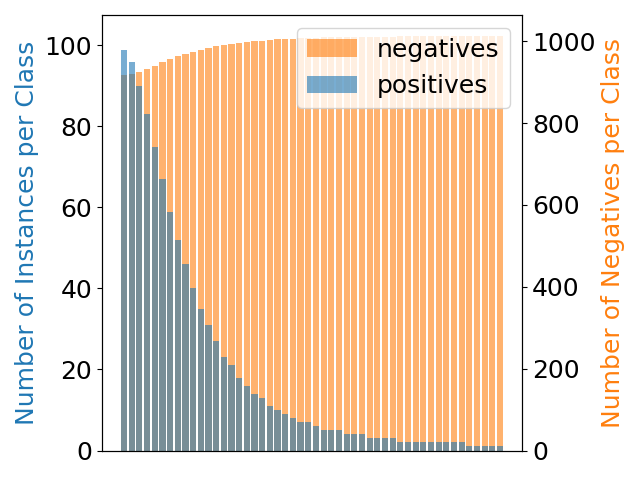}
	\caption{Long-Tailed Positives vs. Relatively Balanced yet Abundant Negatives.}
	\label{fig:negatives}
\end{wrapfigure}

To this end, we propose a novel contrastive learning method at the prediction space for IDG, namely Negative-Dominant Contrastive Learning (\ours).
We observe that the complement of each class, i.e., what it's not \cite{malisiewicz2011ensemble} or its negative samples, is especially abundant for minority classes.
More importantly, as illustrated in Fig. \ref{fig:negatives}, these negatives remain relatively balanced between classes, providing a perspective to \textit{naturally alleviate class imbalance}.
In addition, prior studies \cite{kalantidis2020hard, malisiewicz2011ensemble} have empirically demonstrated that such negatives exhibit favorable transferability across domains.
Buliding on these insights, we advocate leveraging the natural structure of the data itself, rather than relying on explicit re-weighting \cite{yang2022multi} or re-sampling \cite{xia2023generative} strategies, to shape class boundaries.
This offers a unified and principled method to simultaneously mitigate both types of shifts, fundamentally differing from prior works \cite{su2024sharpness, xia2023generative, yang2022multi} in IDG.
Specifically, inter-class decision-boundary separation is enhanced by treating negatives as the primary signal in our contrastive learning, where the InfoNCE objective \cite{khosla2020supervised} is reformulated by replacing similarity measure with cosine dissimilarity and positioning negative pairs in the numerator.
To reinforce decision-boundary separation, a hard negative augmentation strategy has been employed instead of randomly sampling from all negatives.
Meanwhile, intra-class compactness is encouraged via a cross-entropy loss that adaptively re-weights samples within each class, and posterior consistency across domains is enforced through a prediction-central alignment strategy.

Our main contributions can be highlighted as follows:
\begin{itemize}
	\item Theoretically, a generalization bound for IDG is provided based on $\mathcal{H}$-divergence, highlighting the role of posterior discrepancy and decision margin.
	\item Methodologically, a novel Negative-Dominant Contrastive Learning (\ours) is provided to enhance discriminability while enforce posterior consistency across domains.
	\item Empirically, the effectiveness of our \ours is validated through rigorous yet challenging experiments under various forms of imbalance on domain generalization benchmarks.
\end{itemize}

\section{Methodology} \label{sec:method}

In this section, a more detailed discussion will be presented, including both theoretical formulation and practical methodology.

\subsection{Preliminaries}

In DG, it is commonly assumed that each domain follows a joint distribution $P\left(\bm{X}, \bm{Y}\right)$ \cite{wang2022generalizing}, where $\bm{X}$ denotes input instances and $\bm{Y}$ denotes their corresponding labels.
Under this assumption, domain shift \cite{krueger2021out} can be regarded as the discrepancy between these joint distributions, i.e., $P^{d}\left(\bm{X}, \bm{Y}\right) \neq P^{d'}\left(\bm{X}, \bm{Y}\right), \forall {d} \neq {d'}$.
Given $N$ domains, each domain can sample $N_d$ data, which can be defined as $\left\{\left(\bm{x}^d_i, y^d_i\right)\right\}_{i=1}^{N_d}$.
While existing DG methods typically assume that the marginal label distribution $P\left(\bm{Y}\right)$ is shared across domains and focus mainly on the domain shift in the instance space $P\left(\bm{X}\right)$, our work explicitly considers the case where both $P\left(\bm{X}\right)$ and $P\left(\bm{Y}\right)$ may vary across domains.
Despite this, the label space remains consistent across domains, containing $K$ semantic classes as in standard DG settings.

For notational simplicity, let $\bm{X}_k^d = \left\{\left(\bm{x}_d^k\right)_i\right\}_{i=1}^{n_k^d}$ denote the set of samples from the $k$-th class within the $d$-th domain.
Accordingly, the aggregated sample set of the $k$-th class across all $N$ observed domains is then given by $\bm{X}_k = \bigcup_{d=1}^{N}\bm{X}_k^d = \left\{\left(\bm{x}_k\right)_i\right\}_{i=1}^{n_k}$, where $n_k = \sum_{d=1}^{N}n_k^d$.
Let $f_\theta: \mathcal{X} \rightarrow \mathcal{Y}_E$ be a mapping from the instance space $\mathcal{X}$ to the encoded label space $\mathcal{Y}_E$, and denote $\bm{p} = f_\theta\left(\cdot\right)$ as the corresponding output coding vector.

\subsection{Theoretical Guarantee}	

Building on the definition of a domain as a joint distribution \cite{wang2022generalizing} and leveraging the $\mathcal{H}$-divergence \cite{ben2010theory}, we obtain the following generalization bound:

\begin{theorem}[Generalization Bound for Imbalanced Domain Generalization] \label{thm:ours}
	Given $N$ domains with joint distributions $\left\{P_S^d(\bm{X}, \bm{Y})\right\}_{d=1}^{N}$, the target risk $\epsilon_T(h)$ for any hypothesis $h \in \mathcal{H}$ is defined via their linear mixture. 
	To capture prediction-induced discrepancies, the classical $\mathcal{H}$-divergence is reformulated over mappings $m = yh(x): \bm{X}\times\bm{Y}\to\left\{-1,1\right\} \in \mathcal{M}$, which is an auxiliary class.
	\begin{align*}
		\epsilon_T\left(h\right) \leq & \sum_{d=1}^{N}\pi_d \epsilon_S^d\left(h\right) + \underbrace{\ell_{max} \cdot \Pr\left[\gamma\left(h\right) \leq \delta\right]}_{\textnormal{margin}} + \underbrace{\zeta + d^{\sum_{d=1}^{N} \pi_d P_S^d\left(\bm{Y} \vert \bm{X}\right)}_{\mathcal{M} \triangle \mathcal{M}}\left(\sum_{d=1}^{N}\pi_dP_S^d\left(\bm{X}\right), P_T\left(\bm{X}\right)\right)}_{\textnormal{prior distribution discrepancy}} \\
		& + \underbrace{\eta + \mathbb{E}_{x \sim P_T\left(\bm{X}\right)} d_{\mathcal{M} \triangle \mathcal{M}}\left( \sum_{d=1}^{N} \pi_d P_S^d\left(\bm{Y} \vert \bm{X} = x\right), P_T\left(\bm{Y} \vert \bm{X} = x\right) \right)}_{\textnormal{posterior distribution discrepancy}} + \lambda_{\pi}
	\end{align*}
	\noindent where $\sum_{d=1}^{N} \pi_d = 1$, and $\sum_{d=1}^{N}\pi_dP_S^d\left(\bm{X}, \bm{Y}\right)$ denotes the closest projection of the target distribution $P_T\left(\bm{X}, \bm{Y}\right)$ onto the convex hull \cite{albuquerque2019generalizing} of the source domains. 
	Here, $\ell_{max}$ denotes the maximum loss across all source domains, $\gamma\left(h\right)$ denotes the margin, and $\delta$ denotes a margin threshold, with $\Pr\left[\gamma\left(h\right) \leq \delta\right]$ quantifing the fraction of samples with margin less than or equal to $\delta$.
	Moreover, $\zeta$ ($\eta$) denotes the maximum $\mathcal{H}$-divergence between any pair of the prior (posterior) distributions.
	Notably, $d^{\sum_{d=1}^{N} \pi_d P_S^d\left(\bm{Y} \vert \bm{X}\right)}_{\mathcal{M} \triangle \mathcal{M}}\left(\sum_{d=1}^{N}\pi_dP_S^d\left(\bm{X}\right), P_T\left(\bm{X}\right)\right)$ denotes the alignment of source and target prior distributions under the source posterior distribution.
	Finally, $\lambda_{\pi} := \inf_{h \in \mathcal{H}} \left[\sum_{d=1}^{N}\pi_d\epsilon_S^d\left(h\right) + \epsilon_T\left(h\right)\right]$ denotes the error of the ideal joint hypothesis across source and target domains.
	For the detailed proof, please refer to Appendix \ref{sec:apx:thm}.
\end{theorem}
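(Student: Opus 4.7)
The plan is to build the bound in four stages: (i) reduce the target risk to a weighted source risk via the convex-hull argument, (ii) reformulate the $\mathcal{H}$-divergence over the auxiliary class $\mathcal{M}$ of mappings $m(x,y) = yh(x)$, (iii) decompose the resulting joint-distribution divergence into prior and posterior components via the chain rule, and (iv) insert the margin term by a standard cut-off argument on the loss.

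First, I would invoke the classical Ben-David $\mathcal{H}$-divergence bound to relate $\epsilon_T(h)$ to a single source risk and then generalize to a convex combination of the $N$ source domains following Albuquerque et al. Applying the triangle inequality on the $\mathcal{M}\triangle\mathcal{M}$-divergence twice (once to bridge $P_T$ and the mixture $\sum_d \pi_d P_S^d$, once to absorb intra-source variability) gives
\begin{equation*}
\epsilon_T(h) \;\leq\; \sum_{d=1}^N \pi_d \epsilon_S^d(h) + d_{\mathcal{M}\triangle\mathcal{M}}\!\left(\sum_{d=1}^N \pi_d P_S^d(\bm{X},\bm{Y}),\, P_T(\bm{X},\bm{Y})\right) + \xi + \lambda_\pi,
\end{equation*}
where $\xi$ denotes the maximum pairwise divergence across source-domain joint distributions and $\lambda_\pi$ is the ideal joint-hypothesis error. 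The term $\xi$ will later split into $\zeta + \eta$ after the prior--posterior decomposition.

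Second, I would decompose the joint-distribution divergence via the chain rule $P(\bm{X},\bm{Y}) = P(\bm{X})\,P(\bm{Y}\vert\bm{X})$. Using the auxiliary mapping $m(x,y) = yh(x)$, the $\mathcal{M}\triangle\mathcal{M}$-divergence can be written as a supremum of signed expectations, and by nesting an outer expectation over $\bm{X}$ and an inner expectation over $\bm{Y}\vert\bm{X}$, a direct add-and-subtract argument yields
\begin{equation*}
d_{\mathcal{M}\triangle\mathcal{M}}(P_S, P_T) \;\leq\; d^{P_S(\bm{Y}\vert\bm{X})}_{\mathcal{M}\triangle\mathcal{M}}(P_S(\bm{X}), P_T(\bm{X})) + \mathbb{E}_{x \sim P_T(\bm{X})}\, d_{\mathcal{M}\triangle\mathcal{M}}(P_S(\bm{Y}\vert\bm{X}=x), P_T(\bm{Y}\vert\bm{X}=x)).
\end{equation*}
The first term is the prior discrepancy evaluated \emph{under} the source posterior, while the second is the posterior discrepancy integrated against the target marginal. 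Applying the same decomposition to the pairwise source residual $\xi$ isolates the constants $\zeta$ (prior part) and $\eta$ (posterior part). Finally, I would insert the margin term by the standard cut-off trick: for every sample the loss is bounded by $\ell_{max}$ when $\gamma(h) \leq \delta$ and by a ramp-loss surrogate absorbed into $\epsilon_S^d(h)$ otherwise, yielding the additive contribution $\ell_{max} \cdot \Pr[\gamma(h) \leq \delta]$.

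The main obstacle I anticipate is the clean decomposition in the second step: separating a joint $\mathcal{M}\triangle\mathcal{M}$-divergence into a prior term weighted by the source posterior and a posterior term weighted by the target marginal requires a careful interchange between the supremum over $m \in \mathcal{M}\triangle\mathcal{M}$ and the iterated expectation, together with verifying that $\mathcal{M}$ is closed under the symmetric difference so that no slack is introduced. An additional subtlety is that the source side involves the \emph{mixture} posterior $\sum_{d=1}^N \pi_d P_S^d(\bm{Y}\vert\bm{X})$ rather than a single posterior, so one must check that the mixture weights survive both the chain-rule decomposition and the pairwise-residual split into $\zeta + \eta$ without distortion.
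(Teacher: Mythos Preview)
Your proposal is correct and follows essentially the same route as the paper: the convex-hull reduction of Albuquerque et al.\ combined with the triangle inequality, the chain-rule decomposition of the joint $\mathcal{M}\triangle\mathcal{M}$-divergence into a source-posterior-weighted prior term plus a target-marginal-weighted posterior term (the paper states this as a standalone lemma first and only then extends to the mixture, whereas you do the convex-hull step first, but this is cosmetic), and the margin cut-off on the source loss. Your anticipated obstacle about interchanging the supremum with the iterated expectation is not actually needed---the add-and-subtract you already describe, followed by the triangle inequality on the absolute value \emph{inside} the $\sup$, suffices, and the mixture weights pass through the decomposition linearly since the divergence is defined via expectations.
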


Compared to classical generalization bounds \cite{albuquerque2019generalizing, ben2010theory} for conventional domain generalization,our bound is derived from the joint distribution factorization that \textbf{explicitly expresses each key term in posterior form}.
This formulation makes the effect of imbalance immediate.
Label imbalance perturbs $P\left(\bm{Y}\right)$, which alters the posterior $P\left(\bm{Y} \vert \bm{X}\right)$ \cite{tian2020posterior} and skews the decision boundary toward majority classes \cite{nagarajan2021understanding} with a reduced margin.
Domain imbalance disrupts cross-domain posterior consistency by yielding unstable posterior estimates for underrepresented domains, a phenomenon empirically reflected in the Appendix \ref{subsec:apx:exp:failure}, where minor domains exhibit larger losses.
Through this posterior-based decomposition, imbalance naturally manifests in both the posterior discrepancy term and the margin term of our bound.

Importantly, when source and target posteriors coincide, our bound reduces to the classical ones.
In this case, the prior terms should be interpreted conditionally, i.e., measuring prior differences under the source posterior.
When posteriors differ due to imbalance, however, aligning priors under mismatched posteriors can distort semantics.
Thus, our bound offers a principled extension that explicitly accounts for imbalance-induced posterior mismatch, which classical bounds do not model.

Together, \textit{the prior and posterior terms highlight the need for prediction consistency across inputs, while the margin term emphasizes learning a larger inter-class separation}.
Consequently, our method deliberately avoids explicit alignment of $P\left(\bm{X}\right)$, focusing instead on posterior consistency and margin-based separation.

\subsection{Negative-Dominant Contrastive Learning}

\begin{figure}[t]
	\centering
	\includegraphics[width=\linewidth]{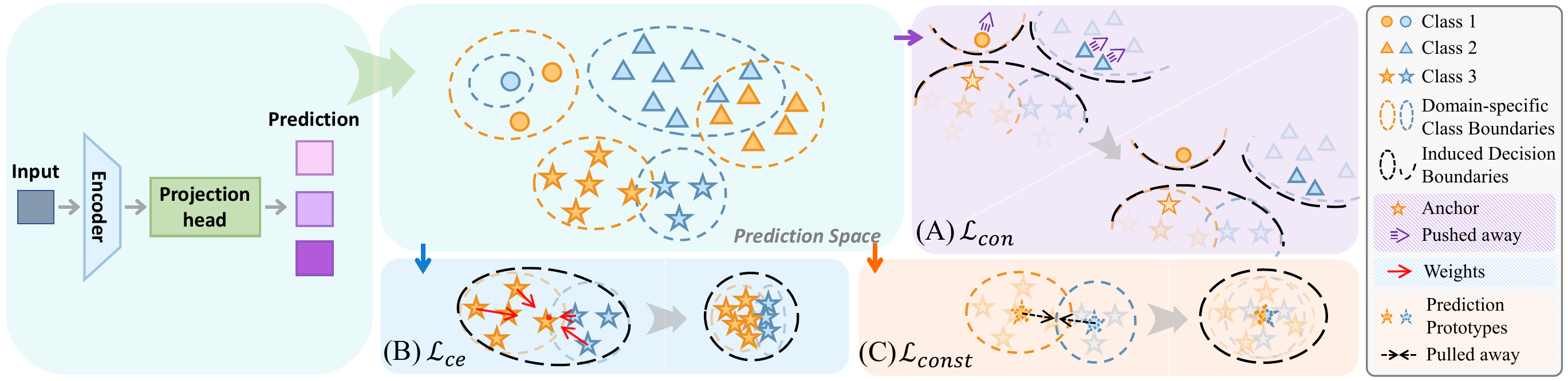}
	\caption{An illustration of \ours, composed of three sub-objectives operating in the prediction space.
		(A) A contrastive loss that pushes away nearby negatives to enhance inter-class decision-boundary separation.
		(B) A class-wise re-weighted cross-entropy loss that encourages intra-class compactness and implictly aligns posteriors. Longer arrows indicate higher weights.
		(C).A prediction-central alignment strategy that explicitly promotes posterior consistency across domains.}
	\label{fig:ours:framework}
\end{figure}

Building on the theoretical insights and empirical observations discussed above, we now detail our proposed method, Negative-Dominant Contrastive Learning (\ours), as illustrated in Fig. \ref{fig:ours:framework}.

\textbf{To enhance inter-class decision-boundary separation}, as required for achieving larger margins in our theoretical framework, we reformulate the InfoNCE-like objective:
\begin{equation} \label{eq:con}
	\mathcal{L}_{con} = \sum_{i \in \mathcal{I}} -\log\left\{ \frac{1}{\lvert \mathcal{N}\left(i\right) \rvert} \sum_{n \in \mathcal{N}\left(i\right)} \frac{1 - \textnormal{s}\left(\bm{p}_i, \bm{p}_n\right)}{\sum_{a \in \mathcal{A}\left(i\right)}\left(1 - \textnormal{s}\left(\bm{p}_i, \bm{p}_a\right)\right)} \right\} 
\end{equation}
\noindent where $i \in \mathcal{I}$ indexes the instance $\bm{x}_i$ drawn from all examples across $N$ source domains, $n \in \mathcal{N}(i)$ indexes the $n$-th negative sample from the set of negatives $\bm{X}_{k^-}$, i.e., the complement of the class $\bm{X}_k$ corresponding to the label $k$ of $\bm{x}_i$, and $\mathcal{A}\left(i\right) \equiv \mathcal{I} \setminus \left\{i\right\}$.
$\bm{p}_{\left(\cdot\right)}$ denotes the prediction of the corresponding instance, and $\textnormal{s}(\cdot, \cdot)$ is a similarity function bounded in $[0, 1]$, instantiated as cosine similarity in this manuscript.

From a functional perspective, Eq. \eqref{eq:con} can indeed be viewed as a simple modification of the InfoNCE form.
However, \textit{this seemingly minor change induces a fundamentally different optimization bias}.
While this objective still leverages labeled data to push apart instances from different classes and pull together those from the same class, our formulation shifts the dominant learning signal toward negatives. Unlike SupCon \cite{khosla2020supervised}, which suggests assigning uniform gradient contributions across all positives,our InfoNCE-like objective purposefully amplifies the influence of informative negatives, especially those that are closer.

Next, a gradient-based interpretation is provided to demonstrate how our proposed Eq. \eqref{eq:con} inherently amplifies minority-class signals, mitigating decision boundary dominance by 
\begin{equation*}
	\frac{\partial \mathcal{L}_{con}}{\partial \bm{p}_i} = \frac{1}{Z_i} \left[- \sum_{p\in \mathcal{P}\left(i\right)} \nabla_{\bm{p}_i}\textnormal{s}\left(\bm{p}_i, \bm{p}_p\right) + \sum_{n \in \mathcal{N}\left(i\right)} \nabla_{\bm{p}_i}\textnormal{s}\left(\bm{p}_i, \bm{p}_n\right) \cdot \underbrace{ \frac{\sum_{p\in \mathcal{P}\left(i\right)} 1 - \textnormal{s}\left(\bm{p}_i, \bm{p}_p\right)}{\sum_{n\in \mathcal{N}\left(i\right)} 1 - \textnormal{s}\left(\bm{p}_i, \bm{p}_n\right)} }_{\textnormal{amplification factor}} \right]
\end{equation*}
\noindent where $p \in \mathcal{P}\left(i\right)$ indexes the $p$-th positive sample from the set of positives $\bm{X}_{k}$, $\nabla_{\bm{p}_i}\textnormal{s}\left(\bm{p}_i, \cdot\right)$ denotes the gradient of $\textnormal{s}\left(\bm{p}_i, \cdot\right)$ w.r.t. $\bm{p}_i$, and $Z_i = \sum_{a \in \mathcal{A}\left(i\right)}\left(1 - \textnormal{s}\left(\bm{p}_i, \bm{p}_a\right)\right)$.
This amplification factor dynamically strengthens negative gradients, especially when the local neighborhood of the anchor is dominated by negatives, which leads to a decrease in$\sum_{n\in \mathcal{N}\left(i\right)} 1 - \textnormal{s}\left(\bm{p}_i, \bm{p}_n\right)$.
In such cases, minority-class samples are often surrounded by nearby majority-class negatives with small margin, making the denominator small and thus increasing the amplification factor.
This enhances repulsion against confusing negatives, helping to preserve class boundaries under imbalance.
Meanwhile, such negative-driven contrastive learning encourages local separation from dominant-class negatives, which is particularly beneficial in multi-domain settings where decision boundaries are susceptible to being skewed by domain-specific majority classes, as discussed in \cite{arjovsky2019invariant, nagarajan2021understanding}.	
In contrast, classical InfoNCE objective strengthens positive compactness via a similar amplification factor while treating all negatives uniformly, which fails to mitigate decision boundary bias.
For more details, please refer to Appendix \ref{sec:apx:grad}.


While an abundance of negatives can facilitate learning, those that are easily distinguishable and typically far from the positives tend to render the task trivial, thereby limiting the model's discriminative capacity.
To mitigate this issue, we adopt a hard negative mining strategy inspired by \citet{kalantidis2020hard}, prioritizing semantically ambiguous samples during contrastive learning.
Specifically, for each class $k$, we \textit{rank all instances by their predicted confidence $p_k$}, the $k$-th component of the prediction vector $\bm{p}$.
A refined negative set $\widehat{\mathcal{N}}_k$ is then constructed via mixup \cite{yan2020improve} between low-confidence instances from the positive set, i.e., with small $p_k$, and high-confidence instances from the negative set, i.e., with large $p_k$, encouraging the model to focus on more ambiguous and informative instances:
\begin{equation} \label{eq:neg_aug}
	\widehat{\mathcal{N}}_k := \left\{ \hat{\bm{x}} \mid \hat{\bm{x}} = \lambda \bm{x}_{k^+} + (1 - \lambda) \bm{x}_{k^-},\ \bm{x}_{k^+} \sim \bm{X}_k^{\textnormal{low}},\ \bm{x}_{k^-} \sim \bm{X}_{k^-}^{\textnormal{high}},\ \lambda \in [0,1] \right\}
\end{equation}
\noindent where $\bm{X}_k^{\textnormal{low}}$ denotes the set of in-class samples with lower $p_k = f_\theta\left(\bm{x}\right)_k$, that is uncertain positives, $\bm{X}_{k^-}^{\textnormal{high}}$ denotes the set of out-of-class samples with higher $p_k$, that is hard negatives, and $\lambda \sim \textnormal{Beta}\left(\rho, \rho\right)$ where $\rho$ denotes the coefficient in Beta distribution.
Importantly, $p_k$ is adaptively determined within each mini-batch rather than treated as a fixed hyper-parameter. 
Further details are provided in the Appendix \ref{sec:apx:algo}.


\textbf{To promote posterior consistency across domains}, as required by our theoretical framework, we adopt a prediction-central alignment strategy.
By aligning first-order statistics instead of individual samples, this strategy can effectively mitigate imbalance-induced bias by decoupling alignment from sample proportions \cite{yang2022multi}.
Formally, this alignment objective can be formulated as:
\begin{equation} \label{eq:align}
	\mathcal{L}_{const} = \sum_{i \in \mathcal{I}^{\mu}}{ -\frac{1}{\vert \mathcal{P}^{\mu}\left(i\right) \vert} \sum_{p \in \mathcal{P}^{\mu}\left(i\right)}{ \log \frac{\exp\left(\textnormal{s}\left(\bm{\mu}_i, \bm{\mu}_p\right)\right)}{ \sum_{a \in \mathcal{A}^{\mu}\left(i\right)}{\exp\left(\textnormal{s}\left(\bm{\mu}_i, \bm{\mu}_a\right)\right)} } } }
\end{equation}
\noindent where $\bm{\mu}_i \in \left\{\bm{\mu}^d_k = \mathbb{E}_{\bm{x} \in \bm{X}^d_k}f_\theta\left(\bm{x}\right) \vert k \in \left\{1, \dots, K\right\}, d \in \left\{1, \dots, N\right\}\right\}$, and $\bm{\mu}_k^d$ denotes the prediction prototype of $k$-th class in $d$-th domain.
This SupCon-inspired loss encourages the alignment of same-class prototypes across domains while repelling those of different classes.
Here, $i \in \mathcal{I}^{\mu}$ indexes the domain-class prototypes, $p \in \mathcal{P}^{\mu}\left(i\right)$ indexes the same-class prototypes from other domains, and $\mathcal{A}^{\mu}\left(i\right) \equiv \mathcal{I}^{\mu} \setminus \left\{i\right\} $.
This objective is particularly beneficial in IDG, where prediction-central alignment avoids being dominated by sample-rich classes.

\textbf{To encourage intra-class compactness}, we adopt a cross-entropy loss that adaptively re-weights samples within each class, where its objective can be formulated as:	
\begin{equation} \label{eq:ce}
	\mathcal{L}_{ce} = \frac{1}{K} \sum_{k=1}^{K}{ \sum_{i=1}^{n_k}{ \left(\omega_k\right)_i \cdot \ell \left(\left(\bm{p}_k\right)_i, k\right) } }
\end{equation}
\noindent where $\left(\bm{p}_k\right)_i = f_\theta\left(\left(\bm{x}_k\right)_i\right)$ is the prediction of the $i$-th instance with class $k$, where $\left(\bm{x}_k\right)_i \in \bm{X}_k$.
The weight coefficient $\left(\omega_k\right)_i = \frac{\exp\left(\ell\left(\left(\bm{p}_k\right)_i, k\right)\right)}{\sum_{j=1}^{n_k} \exp\left(\ell\left(\left(\bm{p}_k\right)_j, k\right)\right)}$ adaptively assigns higher weights to samples within $k$-th class that incur larger losses, encouraging the model to focus on hard examples.
This facilitates tighter decision boundaries and promotes better generalization \cite{lin2017focal}.

\textbf{Finally}, the total objective function can be formulated as:
\begin{equation} \label{eq:total}
	\mathcal{L} = \mathcal{L}_{ce} + \alpha \mathcal{L}_{con} + \beta \mathcal{L}_{const}
\end{equation}
\noindent where $\alpha$ and $\beta$ are trade-offs. Detailed Algorithm is provided in the Appendix \ref{sec:apx:algo}.

\section{Experiments} \label{sec:exp}

In this section, extensive yet challenging experiments are conducted to comprehensively evaluate the effectiveness of \ours{} on three widely used domain generalization benchmarks, involving three distinct types of entangled domain and label shifts.

\subsection{Datasets and Settings}

For the architecture, our \ours {} follows the configuration specified in DomainBed \cite{gulrajani2021in}, utilizing ResNet-50 as the backbone and adopting the training-domain validation method for model selection.
Extensive experiments are constructed on three standard Benchmarks, i.e., VLCS, PACS, and OfficeHome.
To assess performance under varying imbalance conditions, we define three distinct settings.
The GINIDG setting follows the protocol proposed in \cite{xia2023generative}, but represents a relatively mild form of imbalance.
\textbf{To facilitate more rigorous evaluation}, we further propose two challenging settings, namely \textit{TotalHeavyTail} and \textit{Duality}, which depicts long-tailed and compound imbalances, respectively.
Detailed descriptions of these settings are provided in the Appendix \ref{sec:app:exp}.

Twenty-one recent strong comparison methods and another representative methods are deplyed to compare with our \ours, including representative methods for domain generalization, imbalanced data, and recent methods tailored for IDG.
The methods for domain generalization can be divided into five categories: 1) distribution robust method (GroupDRO \cite{sagawa2020distributionally}), 2) causal methods learning invariance (IRM \cite{arjovsky2019invariant}, VREx \cite{krueger2021out}, EQRM \cite{eastwood2022probable}, TCRI \cite{salaudeen2024causally}), 3) gradient matching (Fish \cite{shi2022gradient}, Fishr \cite{rame2022fishr}, PGrad \cite{wang2023pgrad}), 4) representation distribution matching (DANN\cite{ganin2016domain}, MMD \cite{li2018domain}, CORAL \cite{sun2016deep}, RDM \cite{nguyen2024domain}), 5) and other variants (Mixup \cite{yan2020improve}, MLDG\cite{li2018learning}, SagNet \cite{nam2021reducing}).
The methods for imbalanced data can be regarded as the re-weighting \cite{lin2017focal, yang2022multi} and margin-based \cite{cao2019learning, ren2020balanced} methods.
Finally, the methods tailored for IDG can be regarded as the divide-and-conquer strategies \cite{su2024sharpness, xia2023generative} and the representation-aware weighting method \cite{yang2022multi}.
All the aforementioned methods are reproduced following the DomainBed training protocol \cite{gulrajani2021in}, with involved hyperparameters randomly sampled and evaluated over 3 trials, each comprising 10 independent runs.
The overall average results are reported from Tab. \ref{tab:idg:average} to \ref{tab:duality:average}.

\begin{table}[t]
	\centering
	\tiny
	\tabcolsep 0.85em
	\caption{Overall averaged accuracy under the GINIDG setting on VLCS and PACS Benchmarks. The \textbf{bold}, \underline{underline}, and \dashuline{dashline} items are the best, the second-best, and the third-best results, respectively.}
	\label{tab:idg:average}
	\begin{tabular}{l|cccc|cccc}
		\toprule
		& \multicolumn{4}{|c|}{VLCS} & \multicolumn{4}{|c}{PACS} \\
		& Average & Many & Medium & Few & Average & Many & Medium & Few \\
		\midrule
		ERM \citeay{vapnik1998statistical}                 & 74.4 $\pm$ 1.3            & 73.6 $\pm$ 1.2            & 66.0 $\pm$ 1.9            & 58.6 $\pm$ 2.6            & 82.2 $\pm$ 0.3            & 83.9 $\pm$ 1.7            & 86.2 $\pm$ 0.9            & 71.5 $\pm$ 1.0            \\
		IRM \citeay{arjovsky2019invariant}                 & 76.6 $\pm$ 0.4            & 76.8 $\pm$ 1.1            & 61.9 $\pm$ 1.6            & 58.1 $\pm$ 2.9            & 81.2 $\pm$ 0.3            & 83.7 $\pm$ 1.5            & 83.7 $\pm$ 1.6            & 69.9 $\pm$ 1.3            \\
		GroupDRO \citeay{sagawa2020distributionally}            & 75.7 $\pm$ 0.4            & 75.1 $\pm$ 0.6            & 65.2 $\pm$ 0.8            & 60.8 $\pm$ 0.7            & 79.7 $\pm$ 0.2            & 80.5 $\pm$ 1.4            & 84.7 $\pm$ 1.0            & 68.2 $\pm$ 0.6            \\
		Mixup \citeay{yan2020improve}               & 74.3 $\pm$ 0.8            & 75.7 $\pm$ 1.3            & 54.7 $\pm$ 4.8            & 55.7 $\pm$ 1.2            & 82.2 $\pm$ 0.7            & \dashuline{85.7 $\pm$ 0.2}& 85.6 $\pm$ 0.9            & 64.3 $\pm$ 3.6            \\
		MLDG \citeay{li2018learning}                & 75.7 $\pm$ 0.2            & 74.4 $\pm$ 0.9            & 62.7 $\pm$ 3.2            & 62.5 $\pm$ 1.6            & 82.0 $\pm$ 0.1            & 84.5 $\pm$ 0.6            & 85.2 $\pm$ 0.3            & 67.3 $\pm$ 0.7            \\
		CORAL \citeay{sun2016deep}               & 75.8 $\pm$ 0.3            & 76.7 $\pm$ 0.4            & 63.1 $\pm$ 1.3            & 60.8 $\pm$ 1.1            & 82.8 $\pm$ 1.0            & 84.6 $\pm$ 1.3            & 86.1 $\pm$ 0.6            & 72.6 $\pm$ 0.7            \\
		MMD \citeay{li2018domain}                 & 76.2 $\pm$ 0.4            & 76.5 $\pm$ 0.7            & 62.0 $\pm$ 1.4            & 57.5 $\pm$ 1.7            & 82.2 $\pm$ 0.4            & 83.1 $\pm$ 0.2            & 86.8 $\pm$ 0.3            & 70.7 $\pm$ 1.2            \\
		DANN \citeay{ganin2016domain}                & 76.7 $\pm$ 0.3            & 75.9 $\pm$ 0.5            & 65.0 $\pm$ 1.2            & 62.8 $\pm$ 1.6            & 82.8 $\pm$ 0.5            & 82.7 $\pm$ 0.3            & 86.0 $\pm$ 0.7            & \dashuline{76.1 $\pm$ 0.6}\\
		SagNet \citeay{nam2021reducing}              & 75.8 $\pm$ 0.6            & 76.9 $\pm$ 1.0            & 58.9 $\pm$ 2.3            & 59.0 $\pm$ 0.2            & 81.3 $\pm$ 1.4            & 81.6 $\pm$ 3.0            & 86.1 $\pm$ 0.3            & 68.8 $\pm$ 2.3            \\
		VREx \citeay{krueger2021out}                & 73.7 $\pm$ 1.1            & 73.8 $\pm$ 1.7            & 63.3 $\pm$ 3.0            & 61.3 $\pm$ 1.7            & 82.8 $\pm$ 0.5            & 84.9 $\pm$ 1.2            & 86.2 $\pm$ 0.7            & 70.2 $\pm$ 1.1            \\
		Fish \citeay{shi2022gradient}                & \dashuline{77.3 $\pm$ 0.9}& \textbf{78.3 $\pm$ 0.9}   & 62.8 $\pm$ 0.5            & 61.4 $\pm$ 1.4            & 80.7 $\pm$ 0.9            & 81.3 $\pm$ 1.0            & 85.9 $\pm$ 0.7            & 66.4 $\pm$ 0.3            \\
		Fishr \citeay{rame2022fishr}               & 76.2 $\pm$ 0.0            & 75.5 $\pm$ 0.4            & \textbf{66.5 $\pm$ 0.2}   & \dashuline{63.7 $\pm$ 0.8}& 82.1 $\pm$ 0.5            & 81.0 $\pm$ 2.1            & 87.5 $\pm$ 0.4            & 72.5 $\pm$ 1.6            \\
		EQRM \citeay{eastwood2022probable}                & 74.5 $\pm$ 0.2            & 75.0 $\pm$ 0.6            & 64.5 $\pm$ 1.0            & 57.9 $\pm$ 1.5            & 82.3 $\pm$ 1.2            & 82.3 $\pm$ 1.3            & 86.6 $\pm$ 0.6            & 74.9 $\pm$ 3.2            \\
		RDM \citeay{nguyen2024domain}                 & 75.0 $\pm$ 0.2            & 74.7 $\pm$ 0.4            & 61.3 $\pm$ 1.8            & 57.6 $\pm$ 1.0            & 83.2 $\pm$ 1.1            & 84.4 $\pm$ 2.0            & 86.4 $\pm$ 0.5            & 73.3 $\pm$ 2.2            \\
		PGrad \citeay{wang2023pgrad}               & \underline{77.6 $\pm$ 0.4}& \underline{78.2 $\pm$ 0.4}& 63.5 $\pm$ 1.4            & 60.3 $\pm$ 1.1            & \underline{84.3 $\pm$ 0.2}& \textbf{87.6 $\pm$ 0.4}   & 86.6 $\pm$ 0.5            & 71.4 $\pm$ 2.2            \\
		TCRI \citeay{salaudeen2024causally}                & 75.6 $\pm$ 0.6            & 74.2 $\pm$ 1.1            & 64.9 $\pm$ 2.5            & 58.7 $\pm$ 1.6            & \dashuline{83.3 $\pm$ 0.6}& 82.3 $\pm$ 1.2            & \textbf{89.2 $\pm$ 0.5}   & 69.6 $\pm$ 1.6            \\
		\midrule
		Focal \citeay{lin2017focal}               & 75.3 $\pm$ 0.6            & 76.1 $\pm$ 1.5            & 60.1 $\pm$ 1.8            & 57.4 $\pm$ 0.8            & 81.1 $\pm$ 0.3            & 83.2 $\pm$ 1.5            & 84.1 $\pm$ 1.6            & 68.4 $\pm$ 0.5            \\
		ReWeight \citeay{yang2022multi}            & 74.4 $\pm$ 0.4            & 72.7 $\pm$ 1.5            & 63.1 $\pm$ 2.8            & 62.8 $\pm$ 0.9            & 82.1 $\pm$ 0.8            & 82.7 $\pm$ 0.6            & 87.3 $\pm$ 1.1            & 70.2 $\pm$ 1.1            \\
		BSoftmax \citeay{ren2020balanced}            & 73.0 $\pm$ 0.4            & 69.9 $\pm$ 0.6            & \dashuline{66.2 $\pm$ 3.7}& \textbf{68.7 $\pm$ 1.0}   & 83.2 $\pm$ 0.6            & 80.8 $\pm$ 1.1            & \dashuline{87.6 $\pm$ 0.5}& \textbf{81.8 $\pm$ 0.8}   \\
		LDAM \citeay{cao2019learning}                & 74.5 $\pm$ 0.7            & 73.6 $\pm$ 1.3            & 61.0 $\pm$ 2.0            & 61.1 $\pm$ 1.4            & 83.0 $\pm$ 0.5            & 84.3 $\pm$ 0.2            & 86.4 $\pm$ 0.9            & 74.2 $\pm$ 0.8            \\
		\midrule
		GINIDG \citeay{xia2023generative}              & 74.3 $\pm$ 0.4            & 73.7 $\pm$ 0.5            & 60.7 $\pm$ 2.2            & 58.7 $\pm$ 1.3            & 81.1 $\pm$ 1.0            & 81.8 $\pm$ 2.6            & 85.5 $\pm$ 1.4            & 69.9 $\pm$ 0.5            \\
		BoDA \citeay{yang2022multi}                & 76.3 $\pm$ 0.4            & 76.8 $\pm$ 0.6            & 62.3 $\pm$ 1.8            & 62.8 $\pm$ 1.4            & 82.1 $\pm$ 0.6            & 85.6 $\pm$ 0.8            & 84.1 $\pm$ 0.9            & 70.8 $\pm$ 2.8            \\
		SAMALTDG \citeay{su2024sharpness}            & 74.5 $\pm$ 0.6            & 72.5 $\pm$ 1.1            & \underline{66.4 $\pm$ 2.2}& \underline{64.3 $\pm$ 1.9}& 82.4 $\pm$ 1.3            & 84.5 $\pm$ 0.4            & 86.2 $\pm$ 1.4            & 71.6 $\pm$ 2.3            \\
		\midrule
		\ours{} (\textit{ours})                & \textbf{78.0 $\pm$ 0.2}   & \dashuline{77.8 $\pm$ 0.9}& 66.0 $\pm$ 2.5            & 61.9 $\pm$ 1.2            & \textbf{85.7 $\pm$ 0.4}   & \underline{86.5 $\pm$ 0.3}& \underline{88.4 $\pm$ 0.5}& \underline{79.4 $\pm$ 1.9}\\
		\bottomrule
	\end{tabular}
\end{table}

\begin{table}[ht]
	\centering
	\tiny
	\setlength\tabcolsep{0.325em}
	\caption{Overall averaged accuracy under the TotalHeavyTail setting on three Benchmarks.}
	\label{tab:totalheavytail:average}
	\begin{tabular}{l|cccc|cccc|cccc}
		\toprule
		& \multicolumn{4}{|c|}{VLCS} & \multicolumn{4}{|c|}{PACS} & \multicolumn{4}{|c}{OfficeHome} \\
		& Average & Many & Medium & Few & Average & Many & Medium & Few & Average & Many & Medium & Few \\
		\midrule
		ERM                  & 73.6 $\pm$ 0.8            & 79.2 $\pm$ 1.5            & 61.0 $\pm$ 3.0            & 50.8 $\pm$ 1.2            & 69.6 $\pm$ 0.9            & 87.4 $\pm$ 0.7            & 62.0 $\pm$ 0.5            & 71.2 $\pm$ 1.5            & 46.1 $\pm$ 0.6            & 74.9 $\pm$ 0.7            & 64.8 $\pm$ 0.1            & 25.4 $\pm$ 1.0            \\
		IRM                  & 70.9 $\pm$ 0.7            & 82.0 $\pm$ 0.8            & 55.6 $\pm$ 1.4            & 40.8 $\pm$ 3.1            & 71.8 $\pm$ 2.1            & 87.2 $\pm$ 1.3            & 62.9 $\pm$ 1.7            & 73.4 $\pm$ 3.8            & 43.8 $\pm$ 1.1            & 73.6 $\pm$ 0.4            & 61.2 $\pm$ 1.6            & 23.6 $\pm$ 1.4            \\
		GroupDRO             & 73.1 $\pm$ 0.6            & 79.4 $\pm$ 1.1            & 62.4 $\pm$ 1.9            & 47.4 $\pm$ 0.9            & 72.3 $\pm$ 1.4            & 87.3 $\pm$ 1.0            & 67.8 $\pm$ 3.6            & 69.9 $\pm$ 1.1            & 46.0 $\pm$ 0.6            & 75.6 $\pm$ 0.5            & 64.3 $\pm$ 0.3            & 25.0 $\pm$ 0.8            \\
		Mixup                & 70.9 $\pm$ 1.1            & 81.7 $\pm$ 0.8            & 56.5 $\pm$ 2.6            & 41.9 $\pm$ 2.4            & 68.3 $\pm$ 1.2            & 86.5 $\pm$ 0.7            & 65.0 $\pm$ 1.3            & 62.9 $\pm$ 1.6            & 44.9 $\pm$ 0.6            & 75.0 $\pm$ 0.4            & 64.2 $\pm$ 0.4            & 23.1 $\pm$ 0.9            \\
		MLDG                 & 73.2 $\pm$ 0.1            & 78.9 $\pm$ 1.4            & 58.9 $\pm$ 3.2            & 51.4 $\pm$ 1.2            & 70.1 $\pm$ 1.2            & 86.7 $\pm$ 0.8            & 67.7 $\pm$ 1.8            & 64.8 $\pm$ 2.1            & 45.4 $\pm$ 0.4            & 73.8 $\pm$ 1.0            & 63.3 $\pm$ 0.7            & 24.9 $\pm$ 0.4            \\
		CORAL                & 73.1 $\pm$ 0.7            & \underline{82.7 $\pm$ 0.7}& 59.4 $\pm$ 1.9            & 46.2 $\pm$ 1.0            & 69.8 $\pm$ 0.8            & 88.1 $\pm$ 1.3            & 66.9 $\pm$ 1.8            & 62.3 $\pm$ 1.4            & 45.7 $\pm$ 0.9            & \textbf{78.5 $\pm$ 0.2}   & 64.9 $\pm$ 1.0            & 22.9 $\pm$ 0.9            \\
		MMD                  & 72.0 $\pm$ 0.4            & 81.3 $\pm$ 1.1            & 57.5 $\pm$ 1.9            & 45.1 $\pm$ 2.6            & 70.3 $\pm$ 0.5            & 85.8 $\pm$ 1.6            & 66.1 $\pm$ 1.0            & 67.9 $\pm$ 0.7            & 45.4 $\pm$ 0.4            & 74.6 $\pm$ 0.2            & 63.9 $\pm$ 0.5            & 24.7 $\pm$ 0.8            \\
		DANN                 & 72.4 $\pm$ 1.1            & 75.9 $\pm$ 1.0            & 64.1 $\pm$ 1.2            & 49.6 $\pm$ 1.8            & \dashuline{73.9 $\pm$ 0.7}& 86.3 $\pm$ 2.3            & \underline{71.2 $\pm$ 3.5}& 71.1 $\pm$ 0.7            & 44.2 $\pm$ 0.9            & 74.9 $\pm$ 1.2            & 63.1 $\pm$ 0.7            & 22.1 $\pm$ 0.9            \\
		SagNet               & 72.7 $\pm$ 0.4            & 79.8 $\pm$ 0.3            & 61.8 $\pm$ 4.3            & 45.7 $\pm$ 3.9            & 72.5 $\pm$ 0.8            & 85.5 $\pm$ 0.5            & 68.2 $\pm$ 1.5            & 70.4 $\pm$ 2.8            & 46.3 $\pm$ 0.5            & \dashuline{77.6 $\pm$ 0.3}& \underline{66.0 $\pm$ 1.0}& 24.1 $\pm$ 0.4            \\
		VREx                 & 71.6 $\pm$ 1.4            & 80.1 $\pm$ 1.7            & 61.4 $\pm$ 1.7            & 45.2 $\pm$ 0.7            & 70.3 $\pm$ 0.6            & 85.9 $\pm$ 1.6            & 66.0 $\pm$ 0.4            & 66.9 $\pm$ 2.0            & 45.5 $\pm$ 0.4            & 74.7 $\pm$ 0.2            & 63.2 $\pm$ 0.3            & 25.1 $\pm$ 1.0            \\
		Fish                 & \dashuline{74.3 $\pm$ 0.2}& 81.6 $\pm$ 0.7            & 59.3 $\pm$ 2.1            & 50.3 $\pm$ 1.7            & 69.9 $\pm$ 1.7            & 86.2 $\pm$ 2.0            & 63.0 $\pm$ 2.5            & 70.7 $\pm$ 2.1            & 46.5 $\pm$ 0.2            & 77.3 $\pm$ 0.7            & \dashuline{65.5 $\pm$ 1.0}& 24.8 $\pm$ 0.2            \\
		Fishr                & 74.2 $\pm$ 0.4            & 80.5 $\pm$ 0.3            & 60.6 $\pm$ 1.7            & 49.3 $\pm$ 0.2            & 71.6 $\pm$ 0.3            & 87.1 $\pm$ 0.6            & 67.1 $\pm$ 0.6            & 71.1 $\pm$ 1.9            & 46.2 $\pm$ 0.5            & 75.9 $\pm$ 0.7            & 64.7 $\pm$ 0.4            & 25.3 $\pm$ 0.5            \\
		EQRM                 & 72.1 $\pm$ 0.4            & 77.9 $\pm$ 1.4            & 58.8 $\pm$ 3.6            & 47.5 $\pm$ 1.3            & 72.5 $\pm$ 0.2            & 86.4 $\pm$ 0.0            & 69.3 $\pm$ 1.8            & 72.0 $\pm$ 0.9            & 46.3 $\pm$ 0.1            & 77.0 $\pm$ 0.2            & 63.4 $\pm$ 0.2            & 25.4 $\pm$ 0.4            \\
		RDM                  & 71.8 $\pm$ 0.2            & 79.7 $\pm$ 1.8            & 59.8 $\pm$ 2.5            & 44.6 $\pm$ 0.9            & 72.8 $\pm$ 0.6            & \underline{89.0 $\pm$ 0.7}& 66.0 $\pm$ 1.1            & 73.5 $\pm$ 0.9            & 46.9 $\pm$ 0.1            & 74.8 $\pm$ 0.5            & 64.1 $\pm$ 0.7            & 26.6 $\pm$ 0.3            \\
		PGrad                & 72.7 $\pm$ 0.2            & \textbf{82.9 $\pm$ 0.5}   & 58.6 $\pm$ 1.1            & 47.0 $\pm$ 1.0            & 71.3 $\pm$ 0.2            & \textbf{91.9 $\pm$ 0.5}   & 64.1 $\pm$ 1.6            & 68.7 $\pm$ 1.8            & \dashuline{48.0 $\pm$ 0.4}& \underline{78.2 $\pm$ 0.5}& \textbf{66.5 $\pm$ 0.6}   & 26.5 $\pm$ 0.6            \\
		TCRI                 & 71.8 $\pm$ 0.9            & 78.0 $\pm$ 0.7            & 58.6 $\pm$ 0.4            & 51.1 $\pm$ 0.3            & 73.4 $\pm$ 0.7            & 87.7 $\pm$ 0.7            & 67.7 $\pm$ 2.4            & 72.0 $\pm$ 1.5            & \dashuline{48.0 $\pm$ 0.4}& 73.4 $\pm$ 0.8            & 65.0 $\pm$ 0.4            & \dashuline{29.4 $\pm$ 0.4}\\
		\midrule
		Focal                & 71.9 $\pm$ 0.9            & 81.6 $\pm$ 1.4            & 57.8 $\pm$ 2.5            & 46.5 $\pm$ 1.3            & 69.7 $\pm$ 1.3            & 87.9 $\pm$ 1.6            & 64.6 $\pm$ 2.4            & 66.8 $\pm$ 1.8            & 45.1 $\pm$ 0.2            & 75.2 $\pm$ 0.8            & 62.9 $\pm$ 0.5            & 24.2 $\pm$ 0.2            \\
		ReWeight             & 73.0 $\pm$ 0.6            & 72.1 $\pm$ 2.3            & \dashuline{64.5 $\pm$ 3.4}& \dashuline{59.2 $\pm$ 0.4}& \underline{74.1 $\pm$ 1.4}& 83.0 $\pm$ 1.5            & 70.3 $\pm$ 1.9            & \dashuline{75.7 $\pm$ 1.0}& 47.1 $\pm$ 0.6            & 74.3 $\pm$ 1.4            & 64.6 $\pm$ 0.2            & 27.4 $\pm$ 0.5            \\
		BSoftmax             & \underline{74.7 $\pm$ 0.6}& 74.4 $\pm$ 0.9            & 62.9 $\pm$ 1.5            & \textbf{62.1 $\pm$ 1.6}   & \dashuline{73.9 $\pm$ 1.1}& 77.9 $\pm$ 1.5            & \dashuline{70.7 $\pm$ 2.2}& \textbf{80.8 $\pm$ 1.8}   & \underline{48.6 $\pm$ 0.5}& 69.5 $\pm$ 0.9            & 61.9 $\pm$ 0.5            & \textbf{33.0 $\pm$ 0.5}   \\
		LDAM                 & 72.4 $\pm$ 1.6            & 80.5 $\pm$ 1.0            & 59.2 $\pm$ 3.1            & 49.7 $\pm$ 2.2            & 71.9 $\pm$ 1.8            & 84.6 $\pm$ 1.1            & 65.5 $\pm$ 3.6            & 73.1 $\pm$ 1.2            & 46.3 $\pm$ 0.4            & 74.2 $\pm$ 0.8            & 64.1 $\pm$ 0.5            & 26.0 $\pm$ 0.6            \\
		\midrule
		GINIDG               & 70.0 $\pm$ 0.9            & 80.4 $\pm$ 1.0            & 56.2 $\pm$ 1.4            & 38.3 $\pm$ 3.9            & 68.2 $\pm$ 0.2            & \dashuline{88.3 $\pm$ 1.5}& 61.6 $\pm$ 0.2            & 65.5 $\pm$ 0.4            & 44.9 $\pm$ 0.5            & 72.1 $\pm$ 0.5            & 62.2 $\pm$ 0.1            & 25.5 $\pm$ 0.8            \\
		BoDA                 & 70.1 $\pm$ 0.6            & \dashuline{82.6 $\pm$ 1.6}& 54.9 $\pm$ 1.4            & 40.7 $\pm$ 1.6            & 70.1 $\pm$ 0.8            & 86.1 $\pm$ 1.7            & 67.9 $\pm$ 1.4            & 64.5 $\pm$ 1.9            & 44.9 $\pm$ 0.3            & 77.5 $\pm$ 0.4            & 64.2 $\pm$ 0.9            & 22.0 $\pm$ 0.5            \\
		SAMALTDG             & 72.7 $\pm$ 1.5            & 71.4 $\pm$ 1.8            & \underline{66.0 $\pm$ 1.3}& 55.3 $\pm$ 2.3            & 70.7 $\pm$ 0.5            & 82.3 $\pm$ 1.3            & 66.0 $\pm$ 1.7            & 75.2 $\pm$ 1.4            & 46.1 $\pm$ 0.4            & 74.1 $\pm$ 1.0            & 64.0 $\pm$ 0.2            & 25.6 $\pm$ 0.5            \\
		\midrule
		\ours{} (\textit{ours})& \textbf{75.6 $\pm$ 0.4}   & 70.7 $\pm$ 0.1            & \textbf{66.1 $\pm$ 0.4}   & \underline{61.0 $\pm$ 1.2}& \textbf{76.4 $\pm$ 0.6}   & 84.9 $\pm$ 0.8            & \textbf{72.2 $\pm$ 1.3}   & \underline{79.2 $\pm$ 0.9}& \textbf{49.0 $\pm$ 0.2}   & 71.6 $\pm$ 0.8            & \underline{66.0 $\pm$ 0.2}& \underline{30.5 $\pm$ 0.3}\\
		\bottomrule
	\end{tabular}
\end{table}

\subsection{Results and Analyses}

\paragraph{Overall result analysis}
Tab. \ref{tab:idg:average} to \ref{tab:duality:average} report performance comparisons under the GINIDG, TotalHeavyTail, and Duality settings, respectively.
The TotalHeavyTail setting focuses on consistent long-tailed distributions across domains, while the Duality setting introduces heterogeneous label shifts along with imbalanced domain sizes, i.e., major and minor domains. 
The GINIDG setting, adapted from \cite{xia2023generative}, serves as an intermediate case, characterized by a milder degree of imbalance.

From Tab. \ref{tab:idg:average}, we can observe the following findings:
1) Our \ours{} achieves the best overall performance, outperforming all baselines in average accuracy on both datasets. 
Notably, it achieves first or second place in most of sub-group categories, including the challenging few-shot classes, where conventional domain generalization methods, such as IRM and GroupDRO, consistently underperform due to their lack of imbalance-awareness.
2) Compared to the methods tailored to imbalanced data, our \ours{} demonstrates superior robustness across sub-group categories, especially in PACS where it achieves the second-best few-shot accuracy.
While the methods, such as Fish and PGrad, show strong performance for major classes, they drop significantly for medium and minority classes, highlighting their limitations under imbalance.
3) Compared to IDG-specific baselines, i.e., BoDA, SAMALTDG, and GINIDG, our method consistently improves upon them over 1.5\% on average, demonstrating the effectiveness of modeling cross-domain posterior alignment while maintaining margin-aware separation across imbalanced labels.

\begin{table}[t]
	\centering
	\tiny
	\setlength\tabcolsep{0.325em}
	\caption{Overall averaged accuracy under the Duality setting on three Benchmarks.}
	\label{tab:duality:average}
	\begin{tabular}{l|cccc|cccc|cccc}
		\toprule
		& \multicolumn{4}{|c|}{VLCS} & \multicolumn{4}{|c|}{PACS} & \multicolumn{4}{|c}{OfficeHome} \\
		& Average & Many & Medium & Few & Average & Many & Medium & Few & Average & Many & Medium & Few \\
		\midrule
		ERM                  & 54.9 $\pm$ 2.4            & 29.5 $\pm$ 4.8            & 72.4 $\pm$ 2.9            & 65.3 $\pm$ 2.1            & 67.9 $\pm$ 1.1            & 58.7 $\pm$ 2.3            & 81.2 $\pm$ 1.5            & 61.7 $\pm$ 0.8            & 52.3 $\pm$ 0.2            & 68.9 $\pm$ 1.7            & 61.7 $\pm$ 0.1            & 42.9 $\pm$ 0.7            \\
		IRM                  & 52.6 $\pm$ 3.2            & 27.1 $\pm$ 6.4            & 73.5 $\pm$ 3.1            & 66.4 $\pm$ 2.1            & 65.2 $\pm$ 0.7            & 51.0 $\pm$ 4.5            & 80.3 $\pm$ 1.4            & 58.6 $\pm$ 1.5            & 49.4 $\pm$ 1.0            & 67.1 $\pm$ 2.3            & 60.1 $\pm$ 1.4            & 39.8 $\pm$ 1.0            \\
		GroupDRO             & 57.8 $\pm$ 2.3            & 36.3 $\pm$ 4.4            & 70.5 $\pm$ 2.2            & 69.2 $\pm$ 0.9            & 66.7 $\pm$ 0.1            & 52.1 $\pm$ 2.5            & 84.0 $\pm$ 1.9            & 57.6 $\pm$ 1.7            & 51.7 $\pm$ 0.1            & 70.5 $\pm$ 1.4            & 62.2 $\pm$ 0.7            & 41.7 $\pm$ 0.5            \\
		Mixup                & 54.4 $\pm$ 2.0            & 30.0 $\pm$ 3.7            & 69.2 $\pm$ 2.3            & 68.0 $\pm$ 0.4            & 63.9 $\pm$ 0.2            & 53.0 $\pm$ 3.5            & 81.8 $\pm$ 1.4            & 54.0 $\pm$ 1.7            & 52.2 $\pm$ 0.4            & 70.6 $\pm$ 0.9            & 62.6 $\pm$ 0.1            & 42.2 $\pm$ 0.8            \\
		MLDG                 & 51.3 $\pm$ 3.2            & 24.2 $\pm$ 2.9            & 71.9 $\pm$ 1.4            & 67.2 $\pm$ 2.4            & 66.1 $\pm$ 1.5            & 60.3 $\pm$ 5.4            & 82.6 $\pm$ 1.4            & 57.9 $\pm$ 5.2            & 52.5 $\pm$ 0.4            & 68.0 $\pm$ 1.2            & 62.9 $\pm$ 1.3            & 43.1 $\pm$ 0.2            \\
		CORAL                & 55.3 $\pm$ 1.8            & 36.4 $\pm$ 1.6            & 68.0 $\pm$ 2.7            & 66.1 $\pm$ 1.8            & 65.0 $\pm$ 1.1            & 60.1 $\pm$ 1.0            & \underline{85.3 $\pm$ 2.5}& 52.4 $\pm$ 1.0            & 54.5 $\pm$ 0.1            & \underline{74.3 $\pm$ 1.7}& \textbf{66.6 $\pm$ 0.1}   & 43.8 $\pm$ 0.4            \\
		MMD                  & 50.5 $\pm$ 1.9            & 26.6 $\pm$ 2.7            & 69.0 $\pm$ 1.9            & 64.3 $\pm$ 1.7            & 68.2 $\pm$ 0.7            & 59.3 $\pm$ 5.2            & 82.0 $\pm$ 1.0            & 59.7 $\pm$ 2.5            & 50.9 $\pm$ 0.3            & 69.0 $\pm$ 1.6            & 60.3 $\pm$ 0.5            & 41.4 $\pm$ 0.3            \\
		DANN                 & \underline{60.9 $\pm$ 3.6}& \textbf{41.4 $\pm$ 5.7}   & 70.1 $\pm$ 3.1            & 67.2 $\pm$ 1.2            & 68.3 $\pm$ 1.1            & \underline{65.8 $\pm$ 3.1}& 83.0 $\pm$ 3.6            & 60.4 $\pm$ 2.4            & 51.8 $\pm$ 0.3            & 70.0 $\pm$ 1.7            & 61.9 $\pm$ 1.0            & 42.2 $\pm$ 1.1            \\
		SagNet               & 53.6 $\pm$ 2.8            & 27.1 $\pm$ 3.7            & 73.4 $\pm$ 3.1            & 69.9 $\pm$ 1.9            & 66.0 $\pm$ 0.9            & 53.3 $\pm$ 4.9            & 82.5 $\pm$ 1.8            & 56.7 $\pm$ 2.3            & 53.9 $\pm$ 0.5            & \dashuline{73.0 $\pm$ 0.9}& 63.8 $\pm$ 0.3            & 43.9 $\pm$ 0.7            \\
		VREx                 & 51.5 $\pm$ 2.5            & 25.1 $\pm$ 3.4            & 72.0 $\pm$ 2.9            & 68.1 $\pm$ 1.2            & 65.6 $\pm$ 1.5            & 56.5 $\pm$ 2.2            & 80.4 $\pm$ 2.1            & 59.0 $\pm$ 1.8            & 52.9 $\pm$ 0.4            & 70.1 $\pm$ 1.5            & 62.5 $\pm$ 0.6            & 43.3 $\pm$ 0.7            \\
		Fish                 & 56.6 $\pm$ 2.2            & 31.5 $\pm$ 3.4            & 71.2 $\pm$ 3.0            & \dashuline{70.5 $\pm$ 1.3}& 66.1 $\pm$ 1.0            & 61.4 $\pm$ 3.4            & 80.1 $\pm$ 2.2            & 57.4 $\pm$ 2.4            & 52.9 $\pm$ 0.1            & 69.5 $\pm$ 1.6            & 62.6 $\pm$ 0.4            & 43.3 $\pm$ 0.0            \\
		Fishr                & 52.5 $\pm$ 4.1            & 25.9 $\pm$ 5.4            & 70.6 $\pm$ 0.4            & 68.3 $\pm$ 1.4            & 66.5 $\pm$ 1.7            & \dashuline{65.1 $\pm$ 2.1}& 79.5 $\pm$ 1.0            & 56.8 $\pm$ 3.2            & 52.5 $\pm$ 0.3            & 71.1 $\pm$ 1.5            & 62.0 $\pm$ 0.6            & 43.0 $\pm$ 0.7            \\
		EQRM                 & 52.9 $\pm$ 3.8            & 26.8 $\pm$ 4.6            & 70.6 $\pm$ 2.5            & 66.9 $\pm$ 2.1            & 66.7 $\pm$ 1.2            & 52.9 $\pm$ 1.2            & 80.8 $\pm$ 2.2            & 60.8 $\pm$ 1.0            & 52.4 $\pm$ 0.2            & 68.0 $\pm$ 1.4            & 63.0 $\pm$ 0.2            & 42.9 $\pm$ 0.3            \\
		RDM                  & 54.7 $\pm$ 4.5            & 30.6 $\pm$ 6.5            & 71.6 $\pm$ 1.4            & 67.1 $\pm$ 1.3            & 67.1 $\pm$ 1.3            & 56.8 $\pm$ 2.9            & 83.5 $\pm$ 1.0            & 58.6 $\pm$ 1.1            & 52.9 $\pm$ 0.3            & 71.1 $\pm$ 1.0            & 61.5 $\pm$ 0.4            & 43.7 $\pm$ 0.8            \\
		PGrad                & 55.7 $\pm$ 4.3            & 30.0 $\pm$ 6.2            & 68.6 $\pm$ 0.9            & 70.1 $\pm$ 1.2            & \underline{70.6 $\pm$ 0.8}& 57.0 $\pm$ 4.8            & 83.9 $\pm$ 1.6            & \underline{63.7 $\pm$ 1.2}& \underline{55.4 $\pm$ 0.2}& \dashuline{73.0 $\pm$ 0.7}& \dashuline{65.7 $\pm$ 0.6}& \underline{46.0 $\pm$ 0.4}\\
		TCRI                 & 52.0 $\pm$ 3.9            & 24.7 $\pm$ 6.9            & 69.3 $\pm$ 1.2            & \textbf{70.8 $\pm$ 1.6}   & \dashuline{69.0 $\pm$ 0.3}& \textbf{69.5 $\pm$ 2.4}   & 84.1 $\pm$ 1.8            & 57.1 $\pm$ 2.6            & \dashuline{54.6 $\pm$ 0.5}& 70.8 $\pm$ 1.4            & 64.5 $\pm$ 0.9            & \dashuline{45.4 $\pm$ 0.6}\\
		\midrule
		Focal                & 51.5 $\pm$ 3.3            & 25.0 $\pm$ 4.9            & 74.1 $\pm$ 1.7            & 67.6 $\pm$ 1.7            & 67.6 $\pm$ 1.6            & 61.0 $\pm$ 4.8            & 80.7 $\pm$ 3.5            & 60.6 $\pm$ 1.9            & 50.8 $\pm$ 0.7            & 67.5 $\pm$ 1.4            & 60.3 $\pm$ 1.3            & 41.5 $\pm$ 0.4            \\
		ReWeight             & 51.5 $\pm$ 2.2            & 25.6 $\pm$ 2.5            & 67.8 $\pm$ 3.1            & 70.4 $\pm$ 1.6            & 65.7 $\pm$ 1.4            & 56.8 $\pm$ 4.3            & 77.0 $\pm$ 3.2            & 59.7 $\pm$ 1.8            & 52.4 $\pm$ 0.3            & 68.9 $\pm$ 1.6            & 61.3 $\pm$ 0.8            & 43.7 $\pm$ 0.4            \\
		BSoftmax             & 50.3 $\pm$ 1.0            & 22.7 $\pm$ 2.3            & \dashuline{76.0 $\pm$ 1.8}& 69.2 $\pm$ 0.9            & 67.6 $\pm$ 0.9            & 52.4 $\pm$ 4.6            & 84.5 $\pm$ 2.1            & \dashuline{62.0 $\pm$ 0.2}& 52.8 $\pm$ 0.5            & 67.4 $\pm$ 1.4            & 62.1 $\pm$ 0.6            & 44.4 $\pm$ 1.0            \\
		LDAM                 & 50.2 $\pm$ 2.4            & 22.0 $\pm$ 4.1            & 73.0 $\pm$ 0.3            & 68.8 $\pm$ 1.1            & 66.6 $\pm$ 0.5            & 50.8 $\pm$ 4.1            & 79.7 $\pm$ 1.2            & 61.7 $\pm$ 1.9            & 50.1 $\pm$ 0.3            & 68.1 $\pm$ 1.3            & 59.6 $\pm$ 1.1            & 40.8 $\pm$ 0.2            \\
		\midrule
		GINIDG               & \dashuline{58.6 $\pm$ 1.5}& \underline{40.7 $\pm$ 2.6}& 68.4 $\pm$ 3.8            & 64.5 $\pm$ 1.2            & 63.2 $\pm$ 1.9            & 60.4 $\pm$ 4.0            & 78.4 $\pm$ 2.8            & 55.5 $\pm$ 1.0            & 50.2 $\pm$ 0.6            & 69.4 $\pm$ 0.5            & 59.8 $\pm$ 0.6            & 40.5 $\pm$ 0.9            \\
		BoDA                 & 58.2 $\pm$ 0.7            & \dashuline{40.2 $\pm$ 0.7}& 73.4 $\pm$ 2.1            & 65.8 $\pm$ 1.0            & 64.5 $\pm$ 1.8            & 55.2 $\pm$ 1.7            & \dashuline{84.7 $\pm$ 2.7}& 54.1 $\pm$ 2.2            & 53.5 $\pm$ 0.6            & \textbf{74.6 $\pm$ 0.9}   & 65.5 $\pm$ 0.2            & 42.7 $\pm$ 0.8            \\
		SAMALTDG             & 57.5 $\pm$ 1.2            & 32.4 $\pm$ 1.8            & \textbf{76.8 $\pm$ 0.1}   & 69.2 $\pm$ 1.0            & 66.1 $\pm$ 0.5            & 49.6 $\pm$ 3.0            & 81.7 $\pm$ 0.7            & 59.9 $\pm$ 0.3            & 51.7 $\pm$ 1.0            & 68.9 $\pm$ 0.9            & 61.3 $\pm$ 1.2            & 42.8 $\pm$ 1.1            \\
		\midrule
		\ours{} (\textit{ours})& \textbf{61.6 $\pm$ 0.8}   & 38.5 $\pm$ 1.5            & \underline{76.4 $\pm$ 1.9}& \underline{70.7 $\pm$ 1.7}& \textbf{71.1 $\pm$ 0.7}   & 53.3 $\pm$ 1.7            & \textbf{86.3 $\pm$ 1.1}   & \textbf{65.0 $\pm$ 0.7}   & \textbf{55.7 $\pm$ 0.2}   & 71.4 $\pm$ 1.0            & \underline{65.8 $\pm$ 0.5}& \textbf{47.6 $\pm$ 0.6}   \\
		\bottomrule
	\end{tabular}
\end{table}

In Tab. \ref{tab:totalheavytail:average}, our \ours{} still demonstrates remarkable robustness and consistency across multiple datasets and sub-group categories.
Nevertheless, under this setting, the performance of other methods has undergone significant reordering.
Interestingly, methods specifically designed for imbalanced data, e.g., BSoftmax, emerge among the top 3 performers, especially on minority classes.
This is because the TotalHeavyTail setting amplifies inter-class competition due to severe label imbalance, making fine-grained class discrimination more critical than domain alignment.
Sinces all domains have similar class distribution skews, domain shift becomes relatively less influential.
Conventional domain generalization methods, which perform well on major and medium classes, often struggle on minority classes, further validating this observation.

In Tab.\ref{tab:duality:average}, apart from the consistently strong performance of our \ours, the presence of heterogeneous label shift reduces inter-class competition, thereby amplifying the impact of domain shift.
As a result, conventional domain generalization methods achieve competitive performance, and the degradation in minority class accuracy is less severe compared to the TotalHeavyTail setting.

Surprisingly, the methods tailored to IDG underperform across all three settings.
One possible reason is that the random hyperparameter selection in DomainBed amplifies the optimization difficulties inherent to divide-and-conquer strategies commonly used in these methods.	

\begin{figure}[t!]
	\centering
	\includegraphics[width=\linewidth]{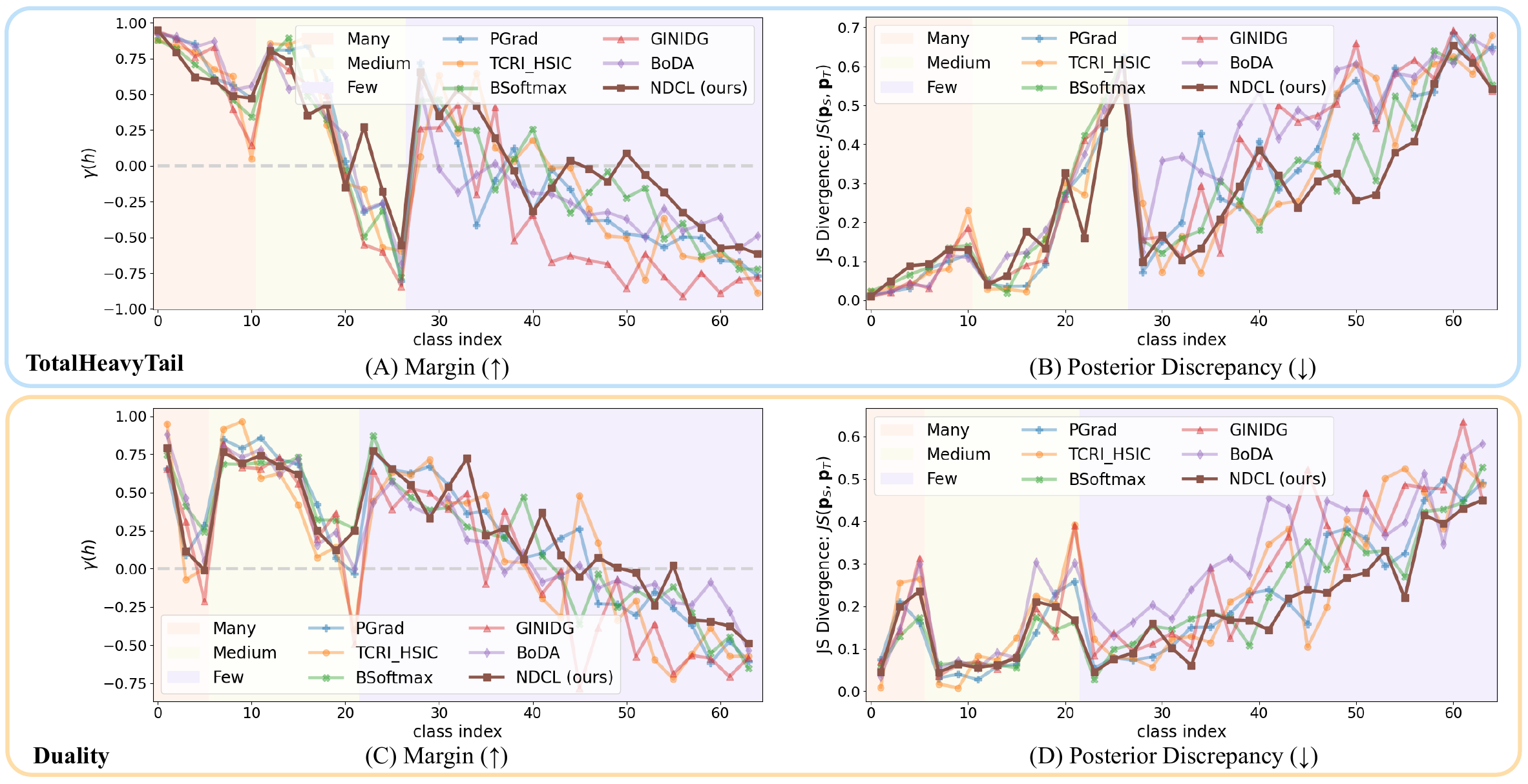}
	\caption{Illustrative per-class comparison of margin $\gamma\left(h\right)$ and posterior discrepancy based on JS divergence on OfficeHome under two different settings across several representative methods including ours. $\left( \uparrow \right)$ denotes the larger the better, while $\left(\downarrow\right)$ denotes the opposite.}
	\label{fig:exp:infos}
\end{figure}

\begin{table}[t]
	\centering
	\tiny
	\caption{Ablation studies on OfficeHome under two different settings.}
	\label{tab:exp:ablation}
	\begin{tabular}{l | cccc | cccc}
		\toprule
		& \multicolumn{4}{|c|}{TotalHeavyTail} & \multicolumn{4}{|c}{Duality} \\
		& Average & Many & Medium & Few & Average & Many & Medium & Few \\
		\midrule
		Baseline (ERM) & 46.1 $\pm$ 0.6 & 74.9 $\pm$ 0.7 & 64.8 $\pm$ 0.1 & 25.4 $\pm$ 1.0 & 52.3 $\pm$ 0.2 & 68.9 $\pm$ 1.7 & 61.7 $\pm$ 0.1 & 42.9 $\pm$ 0.7 \\
		\midrule
		NoWeight (w/o $\omega_k$) & 47.1 $\pm$ 0.5 & 75.6 $\pm$ 0.2 & 65.1 $\pm$ 0.9 & 26.8 $\pm$ 0.6 & 54.3 $\pm$ 0.2 & 69.8 $\pm$ 1.3 & 64.0 $\pm$ 0.2 & 45.0 $\pm$ 0.5 \\
		NoCon    (w/o $\mathcal{L}_{\textit{Con}}$) & 46.1 $\pm$ 0.4 & 74.1 $\pm$ 1.0 & 64.0 $\pm$ 0.2 & 25.6 $\pm$ 0.5 & 51.7 $\pm$ 1.0 & 68.8 $\pm$ 0.8 & 61.4 $\pm$ 1.3 & 42.7 $\pm$ 1.1 \\
		NoConst  (w/o $\mathcal{L}_{\textit{Const}}$) & 47.9 $\pm$ 0.1 & 76.6 $\pm$ 0.4 & 64.8 $\pm$ 0.4 & 27.4 $\pm$ 0.5 & 53.4 $\pm$ 0.2 & 69.7 $\pm$ 0.4 & 63.3 $\pm$ 0.4 & 43.8 $\pm$ 0.4 \\
		NoAug    (w/o $\widehat{\mathcal{N}}_{k}$) & 48.6 $\pm$ 0.4 & 76.0 $\pm$ 0.4 & 65.6 $\pm$ 0.5 & 27.9 $\pm$ 0.6 & 54.5 $\pm$ 0.3 & 71.9 $\pm$ 1.5 & 64.0 $\pm$ 0.4 & 45.3 $\pm$ 0.5 \\
		\midrule
		\ours    & 49.0 $\pm$ 0.2 & 71.6 $\pm$ 0.8 & 66.0 $\pm$ 0.2 & 30.5 $\pm$ 0.3 & 55.7 $\pm$ 0.2 & 71.4 $\pm$ 1.0 & 65.8 $\pm$ 0.5 & 47.6 $\pm$ 0.6 \\
		\bottomrule
	\end{tabular}
\end{table}

\begin{figure}[t]
	\centering
	\includegraphics[width=\linewidth]{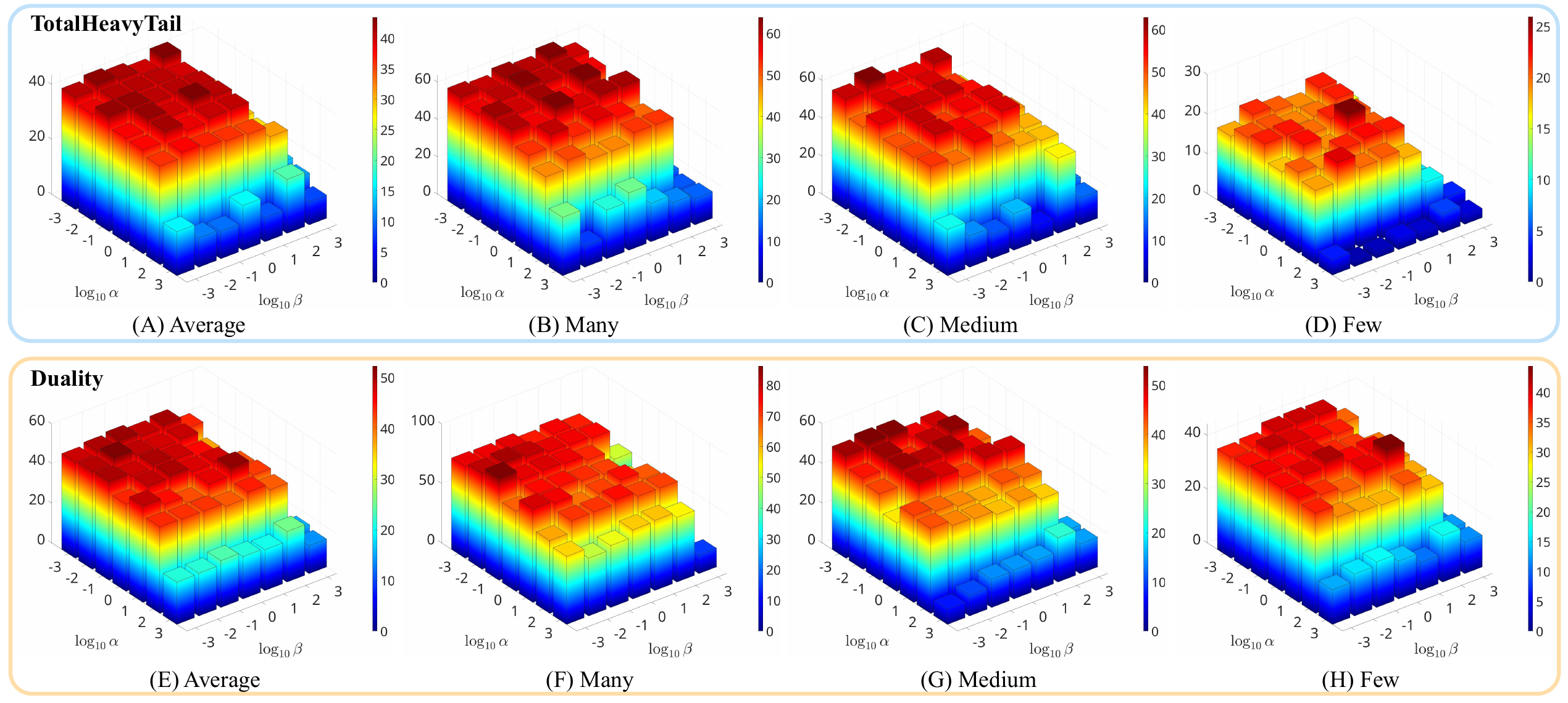}
	\caption{Joint influence of hyperparameters $\alpha$ and $\beta$ on OfficeHome under two different settings. The upper part corresponds to the TotalHeavyTail setting, and the lower part to the Duality setting. Log-scale axes are used, e.g., $0=\log_{10}1$.}
	\label{fig:exp:parameters}
\end{figure}

\paragraph{Ablation studies} 
Tab. \ref{tab:exp:ablation} reports the ablation studies of our \ours.
The removal of $\mathcal{L}_{con}$ leads to a significant performance drop, particularly for minority classes, demonstrating its essential role in promoting decision boundary separation.
Other components, i.e., $\omega_k$ and $\mathcal{L}_{const}$, also contribute, but their impact varies across the TotalHeavyTail and Duality settings.
This variation reflects the difference in imbalance patterns, where TotalHeavyTail presents consistent cross-domain long-tailed distributions, while Duality involves domain-specific imbalance, demanding stronger generalization under heterogeneous class distributions.
In addition, compared to NoAug, incorporating the augmented negative set $\widehat{\mathcal{N}}_k$ in \ours {} further enhances decision boundary separation.

\paragraph{Margin and Posterior Discrepancy Analysis}
Fig. \ref{fig:exp:infos} visualizes the averaged margin and posterior discrepancy of each class between the multiple source domains and the target domain during inference, providing an empirical quantification of the theoretical factors in our bound.
Overall, our \ours{} achieves larger margins and lower posterior discrepancy, represented by the brown line, across most classes under both settings.
These findings validate the effectiveness of our \ours{} in mitigating the compounded effects of entangled domain and label shifts in IDG.
In addition, under the TotalHeavyTail setting, negative margins are more prevalent, even among medium classes, which indicates intensified inter-class competition due to consistent cross-domain imbalance.
Correspondingly, posterior discrepancy across domains, measured by JS divergence between $P\left(\bm{Y} \vert \bm{X}\right)$s, is typically larger in regions with small or negative margins, and gradually decreases as the margin increases,  indicating a correlation between class separability and cross-domain prediction stability. 
In contrast, under the Duality setting, fewer negative margins and lower discrepancies are observed, likely due to its domain-specific imbalance allowing classes to dominate locally, thereby reducing overall competition.

\paragraph{Parameter studies}
Fig. \ref{fig:exp:parameters} reports the joint influence of the two hyperparameters involved in our \ours.
These results indicate that both of them are effective within the range of $10^{\left[-3, 2\right]}$, with optimal average performance typically around $10^{-1}$ and $10^{-2}$, respectively.
Notably, under the severely imbalanced TotalHeavyTail setting, larger values  lead to improved performance on minority classes, indicating that our design can effectively enhance class separability for underrepresented categories.

\paragraph{Extended Experimental Analyses}
Further analyses on sensitivity to hyper-parameters, computational complexity, the effect of contrastive objectives, sampling behaviors on traditional imbalance methods, prior distribution analysis, furhter quantitative evaluation, and significance test are provided in the Appendix \ref{sub:apx:exp:resampling}-\ref{sub:apx:exp:friedman}.

\section{Conclusion} \label{sec:con}

In this paper, we bridge the theoretical gap in imbalanced domain generalization (IDG) by establishing a novel generalization bound that jointly accounts for posterior discrepancy and decision margin, which are two critical factors overlooked by conventional bounds under imbalance.
Motivated by this theoretical insight, we introduce a unified framework that directly addresses entangled domain and label shifts through explicitly steering the decision boundary, eliminating the need for ad-hoc multi-level objective functions. 
Our key innovation, \ours, leverages a negative-dominant contrastive learning paradigm to simultaneously enhance discriminability and enforce posterior consistency across domains.
Rigorous yet challenging experiments on three benchmarks, involving three distinct types of entangled shifts, demonstrate the effectiveness of our \ours.

While \ours{} indirectly mitigates domain prior discrepancy via governed decision boundaries, explicit modeling of domain priors remains challenging under severe imbalance due to potential ill-posedness in representation alignment.
Future directions may integrate causality-inspired mechanisms to disentangle underlying factors, offering a promising path toward more generalizable models.

\section*{Acknowledgments}
This work was supported by the Natural Science Foundation of China (NSFC) (Grant No.62376126).

\bibliographystyle{abbrvnat}
{
	\small
	\normalem
	\setlength{\bibsep}{0.25em}
	\bibliography{ref}
}


\clearpage
\newpage

\appendix

\section{Error Bound for Imbalanced Domain Generalization} \label{sec:apx:thm}
\vspace{-0.5em}

We build upon the domain adaptation bound established in \cite[Thm. 2]{ben2010theory}, which relates the target error to the source error and a distributional discrepancy term.
Specifically, for any hypothesis $h \in \mathcal{H}$, the target domain risk $\epsilon_T\left(h\right)$ is upper-bounded as:
\begin{equation} \label{eq:apx:ben}
	\epsilon_T\left(h\right) \leq \epsilon_S\left(h\right) + d_{\mathcal{H} \triangle \mathcal{H}}\left(P_S\left(\bm{X}\right), P_T\left(\bm{X}\right)\right) + \lambda_{\mathcal{H}}
\end{equation}
\noindent where $\epsilon_S\left(h\right)$ is the source risk, $d_{\mathcal{H} \triangle \mathcal{H}}\left(P_S\left(\bm{X}\right), P_T\left(\bm{X}\right)\right)$ is the $\mathcal{H} \triangle \mathcal{H}$-divergence measuring the discrepancy between input distributions, and $\lambda_{\mathcal{H}} := \inf_{h \in \mathcal{H}} \left[\epsilon_S\left(h\right) + \epsilon_T\left(h\right)\right]$ denotes the error of the ideal joint hypothesis over both domains.
While this bound has been influential in understanding domain shift, it primarily focuses on aligning marginal input distributions $P\left(\bm{X}\right)$ and  subsequent variants of this bound have been widely used in existing domain generalization theory analysis \cite{albuquerque2019generalizing, lu2024towards}.

However, in IDG, aligning these marginal input distributions $P\left(\bm{X}\right)$ may fail to guarantee effective transfer when label distributions $P\left(\bm{Y}\right)$ vary significantly across domains.
Even under invariant class-conditional distributions across domains $P\left(\bm{X} \vert \bm{Y} \right)$, a shift in the label prior $P\left(\bm{Y}\right)$ can lead to a substantial change in the posterior $P\left(\bm{Y} \vert \bm{X} \right)$ via Bayesian rule.
Consequently, the classifier trained on the source domain may fail to generalize to the target domain, despite exhibiting a low $\mathcal{H} \triangle \mathcal{H}$-divergence.
To illustrate this limitation, we present concrete examples where input alignment alone is insufficient for generalization.
Consider a binary classification task with classes $\bm{Y} \in \left\{0, 1\right\}$, corresponding to two categories.
Assume that the class-conditional distributions are identical across the source and target domains:	
\begin{equation*}
	\begin{array}{c}
		P_S\left( \bm{X} \vert \bm{Y} = 0 \right) = P_T\left( \bm{X} \vert \bm{Y} = 0 \right) = \mathcal{N}\left(\left[-1, 0\right], \bm{I}\right) \\
		P_S\left( \bm{X} \vert \bm{Y} = 1 \right) = P_T\left( \bm{X} \vert \bm{Y} = 1 \right) = \mathcal{N}\left(\left[1, 0\right], \bm{I}\right)
	\end{array}
\end{equation*}
\noindent but the class priors differ significantly:
\begin{equation*}
	\begin{array}{c c}
		P_S\left(\bm{Y} = 0\right) = 0.1, P_S\left(\bm{Y} = 1\right) = 0.9; & P_T\left(\bm{Y} = 0\right) = 0.9, P_T\left(\bm{Y} = \bm{I}\right) = 0.1
	\end{array}
\end{equation*}
Under these conditions, the resulting marginal input distributions are:
\begin{equation*}
	\begin{array}{c c}
		P_S\left(\bm{X}\right) = 0.1 \cdot \mathcal{N}\left(\left[-1, 0\right], \bm{I}\right) + 0.9 \cdot \mathcal{N}\left(\left[1, 0\right], \bm{I}\right), \\
		P_T\left(\bm{X}\right) = 0.9 \cdot \mathcal{N}\left(\left[-1, 0\right], \bm{I}\right) + 0.1 \cdot \mathcal{N}\left(\left[1, 0\right], \bm{I}\right)
	\end{array}
\end{equation*}
\noindent which clearly diverge due to the shift in label priors, despite the shared class-conditional structure.
This example highlights that marginal alignment of input features is insufficient for generalization when posterior distributions are influenced by label imbalance.

Consequently, it is more appropriate to measure the divergence between the joint distributions $P\left(\bm{X}, \bm{Y}\right)$ across domains, rather than focusing solely on the marginal distributions $P\left(\bm{X}\right)$.

\paragraph{Step 1: Refining the error bound in Eq. \eqref{eq:apx:ben} with the joint distributions.} 
To better capture prediction-induced discrepancies under distribution shift, we reformulate the classical discrepancy $d_{\mathcal{H} \triangle \mathcal{H}}\left(P_S\left(\bm{X}\right), P_T\left(\bm{X}\right)\right)$ from \cite{ben2010theory}, where $h: \bm{X} \rightarrow \left\{0, 1\right\} \in \mathcal{H}$, into a form defined over the mappings $m = yh\left(x\right): \bm{X} \times \bm{Y} \rightarrow \left\{-1, 1\right\} \in \mathcal{M}$, with $h: \bm{X} \rightarrow \left\{-1, 1\right\}$ and $y \in \bm{Y} = \left\{-1, 1\right\}$.
We then define a new discrepancy $d_{\mathcal{M}\triangle\mathcal{M}}$ over the auxiliary class $\mathcal{M}$, which acts directly on this composed formulation:
\begin{small}
	\begin{equation*}
		\triangle := \sup_{m, m'} \lvert \mathbb{E}_{\left(\bm{X}, \bm{Y} \right) \sim P_S\left(\bm{X}, \bm{Y} \right)}\left[I\left[m\left(\bm{X}, \bm{Y}\right) \neq m'\left(\bm{X}, \bm{Y}\right)\right]\right] - \mathbb{E}_{\left(\bm{X}, \bm{Y} \right) \sim P_T\left(\bm{X}, \bm{Y} \right)}\left[I\left[m\left(\bm{X}, \bm{Y}\right) \neq m'\left(\bm{X}, \bm{Y}\right)\right]\right] \rvert
	\end{equation*}
\end{small}Notably, this transformation does not alter the underlying hypothesis $\mathcal{H}$ class but instead reorganizes the evaluation space to emphasize decision-relevant characteristics.

In this way, we reformulate their error bound as follows:
\begin{equation} \label{eq:apx:joint}
	\lvert \epsilon_T\left(h\right) - \epsilon_S\left(h\right) \lvert \leq d_{\mathcal{M} \triangle \mathcal{M}}\left(P_S\left(\bm{X}, \bm{Y} \right), P_T\left(\bm{X}, \bm{Y} \right)\right)
\end{equation}

\begin{lemma} \label{lemma:apx:joint}
	Let $P_S\left(\bm{X}, \bm{Y}\right)$ and $P_T\left(\bm{X}, \bm{Y}\right)$ be joint distributions from the source and target domains. 
	Then the $\mathcal{M} \triangle \mathcal{M}$-divergence between these joint distributions can be upper bounded by:
	\begin{equation*}
		\begin{array}{r l}
			d_{\mathcal{M} \triangle \mathcal{M}}\left(P_S\left(\bm{X}, \bm{Y}\right), P_T\left(\bm{X}, \bm{Y}\right)\right) \leq & d^{P_S\left(\bm{Y}\vert\bm{X}\right)}_{\mathcal{M} \triangle \mathcal{M}}\left(P_S\left(\bm{X}\right), P_T\left(\bm{X}\right)\right) \\
			& + \mathbb{E}_{x \sim P_T\left(\bm{X}\right)} \left[ d_{\mathcal{M} \triangle \mathcal{M}}\left(P_S\left(\bm{Y} \vert \bm{X} = x\right), P_T\left(\bm{Y} \vert \bm{X} = x\right)\right) \right]
		\end{array}
	\end{equation*}
	\noindent where the non-standard $d^{P_S\left(\bm{Y}\vert\bm{X}\right)}_{\mathcal{M} \triangle \mathcal{M}}\left(P_S\left(\bm{X}\right), P_T\left(\bm{X}\right)\right)$ can be defined as:
	\begin{equation*}
		\sup_{m, m'} \left| \int \left(P_S\left(\bm{X}\right) - P_T\left(\bm{X}\right)\right) \mathbb{E}_{y \sim P_S\left(\bm{Y} \vert \bm{X} \right)} \left[I\left[m\left(\bm{X}, \bm{Y}\right) \neq m'\left(\bm{X}, \bm{Y}\right)\right]\right] dx \right|
	\end{equation*}
	\noindent which denotes a posterior-weighted symmetric difference divergence between the source and target marginal distributions and measures the discrepancy between $P_S\left(\bm{X}\right)$ and $P_T\left(\bm{X}\right)$ under the influence of a fixed source posterior $P_S\left(\bm{Y} \vert \bm{X}\right)$.
\end{lemma}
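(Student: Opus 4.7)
The plan is to rewrite the joint expectations via the tower rule, introduce a hybrid intermediate quantity, and then apply the triangle inequality so that each resulting piece matches one of the two divergence terms in the statement. Concretely, I would first expand $\mathbb{E}_{(X,Y) \sim P(X,Y)}[I[m \neq m']]$ as $\int P(X)\, \mathbb{E}_{y \sim P(Y|X)}[I[m(X,y) \neq m'(X,y)]]\, dx$ for both the source and target joints. Then I would add and subtract the hybrid term $\int P_T(X)\, \mathbb{E}_{y \sim P_S(Y|X)}[I[m \neq m']]\, dx$, which cleanly splits the difference into two parts: (i) an integral of $(P_S(X) - P_T(X))$ weighted by the source posterior expectation of $I[m \neq m']$, and (ii) an integral of $P_T(X)$ against the difference of posterior expectations taken under $P_S(Y|X)$ versus $P_T(Y|X)$.

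Next, I would apply the triangle inequality to the sum and take the supremum over $m, m'$. For part (i), keeping the supremum outside the absolute value yields exactly the non-standard posterior-weighted divergence $d^{P_S(Y|X)}_{\mathcal{M}\triangle\mathcal{M}}(P_S(X), P_T(X))$ as defined in the statement. For part (ii), I would first bound $|\int P_T(X)\, g_{m,m'}(X)\, dx|$ by $\int P_T(X)\, |g_{m,m'}(X)|\, dx$ via the standard $|\int f| \leq \int |f|$ inequality, and then exchange supremum with integral using $\sup_{m,m'} \int f_{m,m'} \leq \int \sup_{m,m'} f_{m,m'}$. The pointwise supremum of $|\mathbb{E}_{P_S(Y|X=x)}[I[m\neq m']] - \mathbb{E}_{P_T(Y|X=x)}[I[m\neq m']]|$ is exactly $d_{\mathcal{M}\triangle\mathcal{M}}(P_S(Y|X=x), P_T(Y|X=x))$, producing the target expectation under $P_T(X)$.

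The main obstacle I anticipate is conceptual rather than technical: the decomposition is not unique, and the lemma's specific form dictates the right choice of hybrid. One could equally well add and subtract $\int P_S(X)\, \mathbb{E}_{P_T(Y|X)}[I[m\neq m']]\, dx$, which would swap the roles of source and target in the two terms. Pairing $P_T(X)$ with the source posterior $P_S(Y|X)$ inside the hybrid is precisely what causes the first term to align with the non-standard divergence introduced in the statement and puts the outer expectation of the second term under $P_T(X)$. A minor technical subtlety is the measurability of the pointwise supremum in part (ii), which is standard under mild regularity of $\mathcal{M}$, and the fact that the sup-integral swap is one-sided and therefore introduces some slack; this is acceptable since the claim is an upper bound.
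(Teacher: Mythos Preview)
Your proposal is correct and follows essentially the same approach as the paper: both introduce the hybrid term $\int P_T(\bm{X})\,\mathbb{E}_{y\sim P_S(\bm{Y}\vert\bm{X})}[I[m\neq m']]\,dx$ and apply the triangle inequality to split into the two desired pieces. If anything, you are slightly more careful than the paper in justifying part~(ii), since the paper jumps directly from the split to the conclusion without explicitly invoking $\bigl|\int P_T g\bigr|\le\int P_T|g|$ or the sup--integral swap that you correctly identify as the mechanism producing the pointwise $d_{\mathcal{M}\triangle\mathcal{M}}$ inside the expectation.
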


\begin{proof}
	By the chain rule of the joint distribution, we have:
	\begin{equation*}
		P\left(\bm{X}, \bm{Y}\right) = P\left(\bm{X}\right) \cdot P\left(\bm{Y} \vert \bm{X}\right)
	\end{equation*}	
	Consider the difference between two joint distributions, where $I\left[\cdot\right]$ denotes the indicator and the discrepancy between the hypothesis pairs $m, m' \in \mathcal{M}$ can be regarded as:
	\begin{small}
		\begin{equation*}
			\triangle := \sup_{m, m'} \lvert \mathbb{E}_{\left(\bm{X}, \bm{Y} \right) \sim P_S\left(\bm{X}, \bm{Y} \right)}\left[I\left[m\left(\bm{X}, \bm{Y}\right) \neq m'\left(\bm{X}, \bm{Y}\right)\right]\right] - \mathbb{E}_{\left(\bm{X}, \bm{Y} \right) \sim P_T\left(\bm{X}, \bm{Y} \right)}\left[I\left[m\left(\bm{X}, \bm{Y}\right) \neq m'\left(\bm{X}, \bm{Y}\right)\right]\right] \rvert
		\end{equation*}
	\end{small}Then, the chain rule of distributions can be applied as follows:
	\begin{small}
		\begin{equation*}
			\triangle = \sup \left| \int P_S\left(\bm{X}\right) P_S\left(\bm{Y} \vert \bm{X} \right) I dxdy - \int P_T\left(\bm{X}\right) P_T\left(\bm{Y} \vert \bm{X} \right) I dxdy \right|
		\end{equation*}		
	\end{small}where $I$ is short for $I\left[m\left(\bm{X}, \bm{Y}\right) \neq m'\left(\bm{X}, \bm{Y}\right) \right]$. 
	By introducing the term involving $P_T\left(\bm{X}\right)P_S\left(\bm{Y} \vert \bm{X}\right)$, and applying the triangle inequality, we obtain:
	\begin{small}
		\begin{equation*}
			\triangle \leq \left| \int \left(P_S\left(\bm{X}\right) - P_T\left(\bm{X}\right)\right) \mathbb{E}_{y \sim P_S\left(\bm{Y} \vert \bm{X} \right)} \left[I\right] dx \right| + \left| \int P_T\left(\bm{X}\right) \left( \mathbb{E}_{y \sim P_S\left(\bm{Y} \vert \bm{X} \right)} \left[I\right] - \mathbb{E}_{y \sim P_T\left(\bm{Y} \vert \bm{X} \right)} \left[I\right] \right) dxdy \right|
		\end{equation*}
	\end{small}
	So that, we can obtain:
	\begin{equation*}
		\begin{array}{r l}
			d_{\mathcal{M} \triangle \mathcal{M}}\left(P_S\left(\bm{X}, \bm{Y}\right), P_T\left(\bm{X}, \bm{Y}\right)\right) \leq & d^{P_S\left(\bm{Y}\vert\bm{X}\right)}_{\mathcal{M} \triangle \mathcal{M}}\left(P_S\left(\bm{X}\right), P_T\left(\bm{X}\right)\right) \\
			& + \mathbb{E}_{x \sim P_T\left(\bm{X}\right)} \left[ d_{\mathcal{M} \triangle \mathcal{M}}\left(P_S\left(\bm{Y} \vert \bm{X} = x\right), P_T\left(\bm{Y} \vert \bm{X} = x\right)\right) \right]
		\end{array}
	\end{equation*}
	\noindent\hspace*{-\parindent} \hfill $\square$
\end{proof}

\begin{theorem}[Generalization Bound with the Joint Distributions] \label{thm:apx:error:joint}
	For any hypothesis $h \in \mathcal{H}$ with an auxiliary class $\mathcal{M}$ where $h: \bm{X} \rightarrow \left\{-1, 1\right\} \in \mathcal{H}$ and $m = yh\left(x\right): \bm{X} \times \bm{Y} \rightarrow \left\{-1, 1\right\} \in \mathcal{M}$, the target domain risk can be bounded by:
	\begin{align*}
		\epsilon_T(h) \leq \; & \epsilon_S(h) + d^{P_S\left(\bm{Y}\vert\bm{X}\right)}_{\mathcal{M} \triangle \mathcal{M}}\left(P_S\left(\bm{X}\right), P_T\left(\bm{X}\right)\right) \\
		& + \mathbb{E}_{x \sim P_T\left(\bm{X}\right)} \left[ d_{\mathcal{M} \triangle \mathcal{M}}\left(P_S\left(\bm{Y} \vert \bm{X} = x\right), P_T\left(\bm{Y} \vert \bm{X} = x\right)\right) \right]
		+ \lambda_{\mathcal{H}}
	\end{align*}
\end{theorem}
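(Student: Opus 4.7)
The plan is to combine the three ingredients already laid out in the excerpt: the classical Ben-David style bound recalled in Eq.~\eqref{eq:apx:ben}, the joint-distribution reformulation of the discrepancy in Eq.~\eqref{eq:apx:joint}, and the chain-rule decomposition established in Lemma~\ref{lemma:apx:joint}. The overall logic is that Eq.~\eqref{eq:apx:joint} replaces the marginal input divergence by a joint-distribution divergence that is sensitive to label shift, Lemma~\ref{lemma:apx:joint} splits that joint divergence into a prior-side contribution and a posterior-side contribution, and the $\lambda_{\mathcal{H}}$ term absorbs the irreducible ideal-joint-hypothesis error exactly as in the classical case.

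Concretely, I would first invoke Eq.~\eqref{eq:apx:joint} to obtain
\begin{equation*}
\epsilon_T(h) - \epsilon_S(h) \leq d_{\mathcal{M}\triangle\mathcal{M}}\left(P_S(\bm{X},\bm{Y}),\, P_T(\bm{X},\bm{Y})\right).
\end{equation*}
Second, I would apply Lemma~\ref{lemma:apx:joint} to upper bound this joint divergence by the sum of the posterior-weighted marginal term $d^{P_S(\bm{Y}\vert\bm{X})}_{\mathcal{M}\triangle\mathcal{M}}(P_S(\bm{X}), P_T(\bm{X}))$ and the expected pointwise posterior divergence $\mathbb{E}_{x \sim P_T(\bm{X})}\left[d_{\mathcal{M}\triangle\mathcal{M}}(P_S(\bm{Y}\vert\bm{X}=x), P_T(\bm{Y}\vert\bm{X}=x))\right]$. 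Third, I would reintroduce $\lambda_{\mathcal{H}}$ via a Ben-David style triangle argument: fix the ideal joint hypothesis $h^{*} = \arg\min_{h' \in \mathcal{H}}\left[\epsilon_S(h') + \epsilon_T(h')\right]$, expand $\epsilon_T(h) \leq \epsilon_T(h^{*}) + \epsilon_T(h, h^{*})$, swap $\epsilon_T(h,h^{*})$ for $\epsilon_S(h,h^{*})$ at the cost of the joint divergence, and finally collapse $\epsilon_S(h^{*}) + \epsilon_T(h^{*})$ into $\lambda_{\mathcal{H}}$. Linearly chaining these estimates yields the stated inequality.

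The main obstacle I anticipate is justifying the transition from the $\mathcal{H}\triangle\mathcal{H}$ divergence over input marginals, which is tailored to bound the disagreement $\epsilon(h,h')$ between two hypotheses on $P(\bm{X})$, to the $\mathcal{M}\triangle\mathcal{M}$ divergence over joints in a way that remains compatible with the triangle argument used to introduce $\lambda_{\mathcal{H}}$. Specifically, I would need to verify that for any pair $h, h' \in \mathcal{H}$ the composite maps $m(x,y) = yh(x)$ and $m'(x,y) = yh'(x)$ disagree precisely when exactly one of $h$ and $h'$ misclassifies $(x,y)$, so that the symmetric-difference event over $\mathcal{M}$ genuinely represents a misclassification-disagreement event consistent with the $0/1$ risk. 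Once this equivalence is pinned down, the remaining steps reduce to routine bookkeeping and the target bound follows by linearly chaining Eq.~\eqref{eq:apx:joint}, Lemma~\ref{lemma:apx:joint}, and the triangle inequality.
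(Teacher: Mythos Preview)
Your proposal is correct and follows essentially the same route as the paper: combine Eq.~\eqref{eq:apx:joint} with Lemma~\ref{lemma:apx:joint}, and recover $\lambda_{\mathcal{H}}$ via the Ben-David triangle argument through the ideal joint hypothesis. The paper in fact leaves the $\lambda_{\mathcal{H}}$ step entirely implicit (it simply states Theorem~\ref{thm:apx:error:joint} after Lemma~\ref{lemma:apx:joint} and references the classical bound), so your explicit description of the triangle inequality through $h^*$, and your observation that one must check $m(x,y)\neq m'(x,y)$ iff exactly one of $h,h'$ errs on $(x,y)$, is more careful than what the paper provides but fully consistent with its intended argument.
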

\noindent where $\lambda_{\mathcal{H}} := \inf_{h \in \mathcal{H}} \left[\epsilon_S\left(h\right) + \epsilon_T\left(h\right)\right]$.
On the one hand, while retaining the original marginal discrepancy term $d^{P_S\left(\bm{Y}\vert\bm{X}\right)}_{\mathcal{M} \triangle \mathcal{M}}\left(P_S(\bm{X}), P_T(\bm{X})\right)$, this extended bound enhances the classical formulation by incorporating a conditional divergence term that captures shifts in $P\left(\bm{Y} \vert \bm{X} \right)$, which are typically overlooked by input alignment alone.
Moreover, based on Bayes rule $P\left(\bm{Y} \vert \bm{X} \right) = \frac{P\left(\bm{X} \vert \bm{Y} \right) P\left(\bm{Y}\right)}{P\left(\bm{X}\right)}$, aligning the posteriors $P\left(\bm{Y} \vert \bm{X} \right)$ can effectively mitigate the influence of the label prior $P\left(\bm{Y}\right)$ on the decision function.
Even in scenarios where $P\left(\bm{Y}\right)$ varies significantly \textit{across domains}, maintaining consistency in $P\left(\bm{Y} \vert \bm{X} \right)$ helps preserve a stable and generalizable decision boundary.
On the other hand, $d^{P_S\left(\bm{Y}\vert\bm{X}\right)}_{\mathcal{M} \triangle \mathcal{M}}\left(P_S(\bm{X}), P_T(\bm{X})\right)$ characterizes the alignment of the marginal distributions under the source conditional distribution $P_S\left(\bm{Y}\vert\bm{X}\right)$.
In other words, aligning only the marginals without taking the conditional distributions into account may distort the original representation structure, thereby weakening the discriminative information encoded in the feature space.
Such distortions can in turn destabilize the decision boundary and hinder its ability to generalize across domains.
Therefore, ensuring posterior consistency becomes crucial.

\paragraph{Step 2: Extending to multiple source domains}

\begin{theorem}[Generalization Bound with the Joint Distributions on multiple source domains] \label{thm:apx:error:domains}
	For any hypothesis $h \in \mathcal{H}$ with an auxiliary class $\mathcal{M}$, the target domain risk can be defined based on a linear mixture across multiple source domains:
	\begin{align*}
		\epsilon_T\left(h\right) \leq \sum_{i=1}^{N}\pi_i \epsilon_S^i\left(h\right) + \underbrace{\zeta_{\textnormal{in}} + \zeta_{\textnormal{out}}}_{\textnormal{prior distribution discrepancy}} + \underbrace{\eta_{\textnormal{in}} + \eta_{\textnormal{out}}}_{\textnormal{posterior distribution discrepancy}} + \lambda_{\pi}
	\end{align*}		
\end{theorem}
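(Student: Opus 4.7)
The plan is to reduce this multi-source result to the two-distribution bound of Theorem~\ref{thm:apx:error:joint} by treating the convex mixture $P_{\pi}(\bm{X},\bm{Y}) := \sum_{i=1}^{N} \pi_i P_S^i(\bm{X},\bm{Y})$ as a single surrogate source distribution. First, I would instantiate Theorem~\ref{thm:apx:error:joint} with $P_S$ replaced by $P_\pi$, obtaining
\begin{equation*}
\epsilon_T(h) \leq \epsilon_\pi(h) + d^{P_\pi(\bm{Y}\vert\bm{X})}_{\mathcal{M}\triangle\mathcal{M}}\!\left(P_\pi(\bm{X}), P_T(\bm{X})\right) + \mathbb{E}_{x \sim P_T(\bm{X})}\!\left[d_{\mathcal{M}\triangle\mathcal{M}}(P_\pi(\bm{Y}\vert\bm{X}=x), P_T(\bm{Y}\vert\bm{X}=x))\right] + \lambda_\pi.
\end{equation*}
By linearity of expectation, the mixture source risk collapses to $\epsilon_\pi(h) = \sum_{i=1}^{N} \pi_i \epsilon_S^i(h)$, which already matches the leading term of the claimed bound, and the infimum defining $\lambda_\pi$ is precisely the ideal-joint-hypothesis term of Theorem~\ref{thm:apx:error:joint} specialised to the mixture.

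Next, I would split each divergence into an intra-source component and a source-to-target component. For the prior divergence, I would insert an arbitrary reference source marginal $P_S^{i^\ast}(\bm{X})$ as a pivot and apply the triangle inequality to the symmetric-difference divergence; combined with $\sum_i \pi_i = 1$ and the convexity of $d_{\mathcal{M}\triangle\mathcal{M}}$ in each argument, this yields $d^{P_\pi(\bm{Y}\vert\bm{X})}_{\mathcal{M}\triangle\mathcal{M}}(P_\pi(\bm{X}), P_T(\bm{X})) \leq \zeta_{\text{in}} + \zeta_{\text{out}}$, where $\zeta_{\text{in}}$ absorbs the variability among source marginals (upper-bounded by the maximum pairwise discrepancy across $i,j$) and $\zeta_{\text{out}}$ isolates the residual posterior-weighted distance between the mixture marginal and the target marginal. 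Each manipulation here is routine once one notes that the supremum over $m,m'\in\mathcal{M}$ distributes across convex combinations of measures.

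The posterior term is the delicate step, and I expect it to be the main obstacle. The difficulty is that $P_\pi(\bm{Y}\vert\bm{X})$ is \emph{not} a $\pi$-convex combination of the $P_S^i(\bm{Y}\vert\bm{X})$: by Bayes' rule the effective mixing weights are $\pi_i P_S^i(\bm{X})/P_\pi(\bm{X})$, which depend on $\bm{X}$. To preserve the in/out decomposition, I would again pivot through an arbitrary source posterior $P_S^{i^\ast}(\bm{Y}\vert\bm{X}=x)$ and apply the triangle inequality pointwise in $x$ before taking the outer expectation under $P_T(\bm{X})$; the first summand is then controlled by the maximum pairwise posterior discrepancy across sources, yielding $\eta_{\text{in}}$, while the second contributes $\eta_{\text{out}}$. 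The subtlety is commuting the supremum over $\mathcal{M}\triangle\mathcal{M}$ with both the pointwise expectation and the $x$-dependent mixing weights without inflating constants, which is precisely where the convexity of the $\mathcal{M}\triangle\mathcal{M}$-divergence must be invoked carefully. Substituting both decompositions back into the single-source inequality and gathering terms then yields the claimed multi-source bound.
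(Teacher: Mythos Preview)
Your core reduction is sound but differs structurally from the paper's argument. The paper does \emph{not} apply Theorem~\ref{thm:apx:error:joint} to the mixture $P_\pi$. Instead it starts from the single-source joint bound of Eq.~\eqref{eq:apx:joint} applied separately to each $P_S^i$, takes the $\pi$-weighted average, inserts the convex-hull projection $\overline{P}_T=\sum_i\pi_iP_S^i$ as a pivot via the triangle inequality, decomposes both pieces with Lemma~\ref{lemma:apx:joint}, and finally invokes the convex-hull Remark~\ref{remark:apx:convex} of \cite{albuquerque2019generalizing} to bound the source-to-$\overline{P}_T$ terms by the maximum pairwise discrepancy, yielding $\zeta_{\textnormal{in}}$ and $\eta_{\textnormal{in}}$.

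Your route is in fact more direct: once you apply Theorem~\ref{thm:apx:error:joint} with source $P_\pi$, the two divergence terms you obtain are \emph{already exactly} $\zeta_{\textnormal{out}}$ and $\eta_{\textnormal{out}}$ as defined in the statement (mixture-to-target prior and posterior discrepancies). Since $\zeta_{\textnormal{in}},\eta_{\textnormal{in}}\geq 0$, the stated bound follows immediately without any further pivoting. Your subsequent triangle-inequality step through a reference source $P_S^{i^\ast}$ is therefore redundant for proving the theorem, and as written it would not recover $\zeta_{\textnormal{out}}$ anyway (the second leg would be $d(P_S^{i^\ast},P_T)$, not the mixture-to-target distance). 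The posterior subtlety you flag is real notationally, but it does not obstruct your argument: Lemma~\ref{lemma:apx:joint} inside Theorem~\ref{thm:apx:error:joint} produces the \emph{true} mixture conditional $P_\pi(\bm{Y}\vert\bm{X})$, which is what $\eta_{\textnormal{out}}$ needs. What you gain is a shorter proof with a tighter intermediate inequality; what the paper's route buys is an explicit appearance of the intra-source discrepancies $\zeta_{\textnormal{in}},\eta_{\textnormal{in}}$ as genuine contributions rather than slack, which they use to motivate cross-domain posterior alignment.
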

\noindent where $\sum_{i=1}^{N}\pi_i = 1$, $\lambda_{\pi} = \inf_{h \in \mathcal{H}}\left[\sum_{i=1}^{N}\pi_i\epsilon_S^i\left(h\right), \epsilon_T\left(h\right)\right]$.
$\zeta_{\textnormal{in}} / \eta_{\textnormal{in}}$ denotes the maximum $\mathcal{H}$-divergence between any pair of the prior/posterior distributions $P\left(\bm{X}\right) / P\left(\bm{Y} \vert \bm{X} = x\right)$.
{\small $\zeta_{\textnormal{out}} = d^{P_S\left(\bm{Y}\vert\bm{X}\right)}_{\mathcal{M} \triangle \mathcal{M}}\left(\sum_{i=1}^{N}\pi_i P_S^i(\bm{X}), P_T(\bm{X})\right)$} and {\small $\eta_{\textnormal{out}} = d_{\mathcal{M} \triangle \mathcal{M}}\left(\sum_{i=1}^{N}\pi_i P_S^i\left(\bm{Y} | \bm{X} = x\right), P_T\left(\bm{Y} | \bm{X} = x\right)\right)$.}

\begin{proof}
	~
	
	\begin{remark}[Bounding the $\mathcal{H}$-divergence between domains in the convex hull \cite{albuquerque2019generalizing}] \label{remark:apx:convex}
		Let a set $S$ of source domains such that $\lvert S \vert = N$ be denoted by $P_S^i, i \in \left[N\right]$.
		The convex hull $\Lambda_S$ of $S$ is defined as the set of mixture distributions given by: $\Lambda_S = \left\{ \overline{P}:= \overline{P}\left(\cdot\right) = \sum_{i=1}^{N} \pi_i P^i_S\left(\cdot\right), \pi_i \in \triangle_N \right\}$.
		Let $d_{\mathcal{H} \triangle \mathcal{H}}\left(P_S^i(\bm{X}), P_S^j(\bm{X})\right) \leq \epsilon, \forall i, j \in \left[N\right]$. 
		The following holds for the $\mathcal{H}$-divergence between any pair of domains $P', P'' \in \Lambda_S^2$:
		\begin{equation*}
			d_{\mathcal{H} \triangle \mathcal{H}}\left(P', P''\right) \leq \epsilon
		\end{equation*}
	\end{remark}
	
	According to this remark, \citet{albuquerque2019generalizing} further introduce unseen target $\overline{P}_T$, the element of $\Lambda_S$ which is closest to $P_T$, i.e., $\overline{P}_T$ is given by $\arg\min_{\pi_1, \dots, \pi_N} d_{\mathcal{H} \triangle \mathcal{H}}\left(P_T, \sum_{i=1}^{N} \pi_i P_S^i\right)$.
	
	Along this line, we extend this divergence analysis to the joint distributions $P_i\left(\bm{X}, \bm{Y} \right), \forall i \in \left[N\right]$, in order to capture both covariate and label shifts.
	Specifically, the covnex hull $\Lambda_S$ can be defined as $\Lambda_S = \left\{\overline{P} := \overline{P}\left(\bm{X}, \bm{Y} \right) = \sum_{i=1}^{N} \pi_i P_S^i\left(\bm{X}, \bm{Y} \right), \pi_i \in \triangle_N\right\}$.
	This definition is well-motivated, as under the two-stage sampling perspective \cite{cao2024mixup}, one first samples a joint distribution from the real-world domain mixture, and then draws specific input–label pairs from the selected joint distribution.
	
	Consequently, Eq. \eqref{eq:apx:joint} can be reformulated in terms of a linear mixture with Lemma \ref{lemma:apx:joint}:
	\begin{align*}
		\sum_{i=1}^{N} \pi_i \epsilon_T\left(h\right) & \leq \sum_{i=1}^{N}\pi_i \left[ \epsilon_S^i\left(h\right) + d_{\mathcal{M} \triangle \mathcal{M}}\left(P_S^i\left(\bm{X}, \bm{Y} \right), P_T\left(\bm{X}, \bm{Y} \right)\right) \right] + \lambda_{\pi} \\
		& \leq \sum_{i=1}^{N}\pi_i \left[ \epsilon_S^i\left(h\right) + d_{\mathcal{M} \triangle \mathcal{M}}\left(P_S^i\left(\bm{X}, \bm{Y} \right), \overline{P}_T\left(\bm{X}, \bm{Y} \right)\right) \right. \\
		& \quad \quad \quad \quad \left. + d_{\mathcal{M} \triangle \mathcal{M}}\left(\overline{P}_T\left(\bm{X}, \bm{Y} \right), P_T\left(\bm{X}, \bm{Y} \right)\right) \right] + \lambda_{\pi} \\
		& \leq \sum_{i=1}^{N}\pi_i \left[ \epsilon_S^i\left(h\right) + d_{\mathcal{M} \triangle \mathcal{M}}^{P_S^i\left(\bm{X}\right)} + \mathbb{E}_{x \sim P_T\left(\bm{X}\right)} d_{\mathcal{M} \triangle \mathcal{M}}^{P_S^i\left(\bm{Y} | \bm{X} = x\right)} \right. \\
		& \quad \quad \quad \quad \left. + d_{\mathcal{M} \triangle \mathcal{M}}^{P_T\left(\bm{X}\right)} + \mathbb{E}_{x \sim P_T\left(\bm{X}\right)} d_{\mathcal{M} \triangle \mathcal{M}}^{P_T\left(\bm{Y} \vert \bm{X} = x\right)} \right] + \lambda_{\pi} \\
		& \leq \sum_{i=1}^{N}\pi_i \epsilon_S^i\left(h\right) + \zeta_{\textnormal{in}} + \zeta_{\textnormal{out}} + \eta_{\textnormal{in}} + \eta_{\textnormal{out}} + \lambda_{\pi}
	\end{align*}
	\noindent where the last inequality is based on Remark \ref{remark:apx:convex}, and 
	$d_{\mathcal{M} \triangle \mathcal{M}}^{P\left(\bm{X}\right)} = d^{P\left(\bm{Y}\vert\bm{X}\right)}_{\mathcal{M} \triangle \mathcal{M}}\left( P\left(\bm{X}\right), \overline{P}_T\left(\bm{X}\right) \right)$, $d_{\mathcal{M} \triangle \mathcal{M}}^{P\left(\bm{Y} \vert \bm{X} = x\right)} = d_{\mathcal{M} \triangle \mathcal{M}}\left( P\left(\bm{Y} \vert \bm{X} = x\right), \overline{P}_T\left(\bm{Y} \vert \bm{X} = x\right) \right)$.
	\hfill $\square$	
\end{proof}

\paragraph{Step 3: Introducing a structural regularization term via margin surrogate error.}

Note that the standard risk $\epsilon_S(h)$ measures the empirical error on the source domain. 
However, when $P\left(\bm{Y}\right)$ is imbalanced, majority classes dominate the optimization of the empirical loss, which biases the decision boundary toward them. 
As a result, the margin around minority samples shrinks, reducing classification confidence and increasing error on rare classes.
To address this, we introduce a convex surrogate error \cite{yuan2021large} on the source domain, denoted by $\gamma\left(h\right)$, defined as:
\begin{equation*}
	\gamma\left(h\right) := \phi\left(h_y\left(x\right) - \max_{k \ne y} h_k\left(x\right)\right)
\end{equation*}
\noindent where $\phi\left(\cdot\right)$ denotes a margin surrogate function, such as $\phi\left( z \right) = \max\left(0, -z\right)$ or $\log\left(1 + e^{z}\right)$.
This term encourages the model to maintain a sufficiently large margin between classes. 

Consequently, $\epsilon_S\left(h\right)$ can be decomposed through margin $\gamma\left(h\right)$ as follows:
\begin{align*}
	\epsilon_S\left(h\right) & = \mathbb{E}_{\left(x, y \right) \sim P_S\left(\bm{X}, \bm{Y} \right)}\ell\left(h\left(x\right), y\right) \\
	& = \underbrace{\mathbb{E}_{\left(x, y \right) \sim P_S\left(\bm{X}, \bm{Y} \right)}\left[\ell\left(h\left(x\right), y\right) \cdot 1_{\gamma\left(h\right) \leq \delta} \right]}_{\textnormal{small-margin loss}} + \underbrace{\mathbb{E}_{\left(x, y\right) \sim P_S\left(\bm{X}, \bm{Y} \right)}\left[\ell\left(h\left(x\right), y\right) \cdot 1_{\gamma\left(h\right) > \delta} \right]}_{\textnormal{large-margin loss}} \\
	& \leq \ell_{max} \cdot \Pr\left[\gamma\left(h\right) \leq \delta \right] + \mathbb{E}_{\left(x, y \right) \sim P_S\left(\bm{X}, \bm{Y} \right)}\left[\ell\left(h\left(x\right), y\right) \cdot 1_{\gamma\left(h\right) > \delta} \right] \\
	& \leq \ell_{max} \cdot \Pr\left[\gamma\left(h\right) \leq \delta \right] + \epsilon_S
\end{align*}
\noindent where $\delta$ is a margin threshold, $\ell_{max}$ denotes the upper bound of the loss function, and $\Pr\left[\gamma\left(h\right) \leq \delta \right]$ quantifies the proportion of samples with margin less than or equal to the threshold $\delta$.

Notably, this inequality holds under the assumption that the loss function is monotonically decreasing with respect to the margin $\gamma\left(h\right)$, whose property can be satisfied by commonly used surrogate losses, such as cross-entropy loss.

Combined with the above, the final upper bound can be defined as:
\begin{align*}
	\epsilon_T\left(h\right) & \leq \sum_{i=1}^{N}\pi_i \left(\epsilon_S^i\left(h\right) + \ell_{max}^i \cdot \Pr\left[\gamma^i\left(h\right) \leq \delta \right] \right) + \zeta_{\textnormal{in}} + \zeta_{\textnormal{out}} + \eta_{\textnormal{in}} + \eta_{\textnormal{out}} + \lambda_{\pi} \\
	& \leq \sum_{i=1}^{N}\pi_i \epsilon_S^i\left(h\right) + \ell_{max} \cdot \Pr\left[\gamma\left(h\right) \leq \delta \right] + \zeta_{\textnormal{in}} + \zeta_{\textnormal{out}} + \eta_{\textnormal{in}} + \eta_{\textnormal{out}} + \lambda_{\pi}
\end{align*}
\noindent where $\ell_{max}$ denotes the upper bound of the loss function across all source domains, and $\gamma\left(h\right)$ denotes the expected margin over the source domains.

\section{Gradient Derivation} \label{sec:apx:grad}

Our proposed InfoNCE-like contrastive loss can be formulated as follows:
\begin{equation*}
	\mathcal{L}_{con}^{\textnormal{InfoNCE-ND}} = \sum_{i \in \mathcal{I}} -\log\left\{ \frac{1}{\lvert \mathcal{N}\left(i\right) \rvert} \sum_{n \in \mathcal{N}\left(i\right)} \frac{1 - \textnormal{s}\left(\bm{p}_i, \bm{p}_n\right)}{\sum_{a \in \mathcal{A}\left(i\right)}\left(1 - \textnormal{s}\left(\bm{p}_i, \bm{p}_a\right)\right)} \right\} 
\end{equation*}

In analogy, SupCon-like contrastive loss can be formulated as follows:
\begin{equation*}
	\mathcal{L}_{con}^{\textnormal{SupCon-ND}} = \sum_{i \in \mathcal{I}} -\frac{1}{\lvert \mathcal{N}\left(i\right) \rvert} \sum_{n \in \mathcal{N}\left(i\right)} {\log\left\{\frac{1 - \textnormal{s}\left(\bm{p}_i, \bm{p}_n\right)}{\sum_{a \in \mathcal{A}\left(i\right)}\left(1 - \textnormal{s}\left(\bm{p}_i, \bm{p}_a\right)\right)} \right\}} 
\end{equation*}

And, classical InfoNCE contrastive loss dominated by positives can be formulated as follows:
\begin{equation*}
	\mathcal{L}_{con}^{\textnormal{InfoNCE}} = \sum_{i \in \mathcal{I}} -\log\left\{ \frac{1}{\lvert \mathcal{P}\left(i\right) \rvert} \sum_{p \in \mathcal{P}\left(i\right)} \frac{\textnormal{s}\left(\bm{p}_i, \bm{p}_p\right)}{\sum_{a \in \mathcal{A}\left(i\right)}\textnormal{s}\left(\bm{p}_i, \bm{p}_a\right)} \right\} 
\end{equation*}

Let $f_{ip} = \textnormal{s}\left(\bm{p}_i, \bm{p}_p\right)$, $Z_i = \sum_{a \in \mathcal{A}\left(i\right)}\textnormal{s}\left(\bm{p}_i, \bm{p}_a\right)$, and $Q_i = \frac{1}{\lvert \mathcal{P}\left(i\right) \rvert}\sum_{p\in \mathcal{P}\left(i\right)}\frac{f_{ip}}{Z_i}$. Then, the derivation of $\mathcal{L}_{con}^{\textnormal{InfoNCE}}$ w.r.t. $\bm{p}_i$ can be expressed as:
\begin{align*}
	\frac{\partial \mathcal{L}_{con}^{\textnormal{InfoNCE}}}{\partial \bm{p}_i} = \nabla_{\bm{p}_i}\mathcal{L}_{con}^{\textnormal{InfoNCE}} & = -\frac{1}{Q_i} \cdot \nabla_{\bm{p}_i}Q_i \\
	& = -\frac{1}{Q_i} \cdot \frac{1}{\lvert \mathcal{P}\left(i\right) \rvert} \sum_{p\in \mathcal{P}\left(i\right)} \nabla_{\bm{p}_i}\left(\frac{f_{ip}}{Z_i}\right) \\
	& \overset{\underset{\textnormal{Quotient rule}}{}}{=} -\frac{1}{Q_i} \cdot \frac{1}{\lvert \mathcal{P}\left(i\right) \rvert} \sum_{p\in \mathcal{P}\left(i\right)} \frac{\nabla_{\bm{p}_i}f_{ip} \cdot Z_i - f_{ip} \cdot \nabla_{\bm{p}_i}Z_i}{Z^2_i} \\
	& = -\frac{1}{Q_i} \cdot \frac{1}{\lvert \mathcal{P}\left(i\right) \rvert} \cdot \frac{1}{Z_i^2} \sum_{p\in \mathcal{P}\left(i\right)} \left[\nabla_{\bm{p}_i}f_{ip} \cdot Z_i - f_{ip} \cdot \nabla_{\bm{p}_i}Z_i\right]
\end{align*}
Next, we organize the final summation terms, and let $\Psi = \sum_{p\in \mathcal{P}\left(i\right)} \left[\nabla_{\bm{p}_i}f_{ip} \cdot Z_i - f_{ip} \cdot \nabla_{\bm{p}_i}Z_i\right]$ and $f_{in} = \textnormal{s}\left(\bm{p}_i, \bm{p}_n\right)$:
\begin{align*}
	\Psi & = \sum_{p\in \mathcal{P}\left(i\right)} \left[\nabla_{\bm{p}_i}f_{ip} \cdot Z_i - f_{ip} \cdot \left(\sum_{p' \in \mathcal{P}\left(i\right)} \nabla_{\bm{p}_i}f_{ip'} + \sum_{n \in \mathcal{N}\left(i\right)}\nabla_{\bm{p}_i}f_{in}  \right)\right] \\
	& = \sum_{p\in \mathcal{P}\left(i\right)} \nabla_{\bm{p}_i}f_{ip} \cdot Z_i - \sum_{p\in \mathcal{P}\left(i\right)} \sum_{p' \in \mathcal{P}\left(i\right)} \nabla_{\bm{p}_i}f_{ip'} \cdot f_{ip} - \sum_{p\in \mathcal{P}\left(i\right)} \sum_{n \in \mathcal{N}\left(i\right)} \nabla_{\bm{p}_i}f_{in} \cdot f_{ip} \\
	& = \sum_{p\in \mathcal{P}\left(i\right)} \nabla_{\bm{p}_i}f_{ip} \cdot Z_i - \sum_{p' \in \mathcal{P}\left(i\right)} \nabla_{\bm{p}_i}f_{ip'} \cdot \sum_{p\in \mathcal{P}\left(i\right)} f_{ip} - \sum_{n \in \mathcal{N}\left(i\right)} \nabla_{\bm{p}_i}f_{in} \cdot \sum_{p\in \mathcal{P}\left(i\right)} f_{ip} \\
	& = \sum_{p\in \mathcal{P}\left(i\right)} \nabla_{\bm{p}_i}f_{ip} \cdot Z_i - \sum_{p \in \mathcal{P}\left(i\right)} \nabla_{\bm{p}_i}f_{ip} \cdot \sum_{p\in \mathcal{P}\left(i\right)} f_{ip} - \sum_{n \in \mathcal{N}\left(i\right)} \nabla_{\bm{p}_i}f_{in} \cdot \sum_{p\in \mathcal{P}\left(i\right)} f_{ip} \\
	& = - \left(\sum_{p\in \mathcal{P}\left(i\right)} \nabla_{\bm{p}_i}f_{ip} \cdot \left(\sum_{p\in \mathcal{P}\left(i\right)} f_{ip} - Z_i\right) + \sum_{n \in \mathcal{N}\left(i\right)} \nabla_{\bm{p}_i}f_{in} \cdot \sum_{p\in \mathcal{P}\left(i\right)} f_{ip} \right) \\
	& = - \left(-\sum_{p\in \mathcal{P}\left(i\right)} \nabla_{\bm{p}_i}f_{ip} \cdot \sum_{n\in \mathcal{N}\left(i\right)} f_{in} + \sum_{n \in \mathcal{N}\left(i\right)} \nabla_{\bm{p}_i}f_{in} \cdot \sum_{p\in \mathcal{P}\left(i\right)} f_{ip} \right) \\
\end{align*}
Therefore, we can obtain:
\begin{align*}
	\frac{\partial \mathcal{L}_{con}^{\textnormal{InfoNCE}}}{\partial \bm{p}_i} & = -\frac{1}{Q_i} \cdot \frac{1}{\lvert \mathcal{P}\left(i\right) \rvert} \cdot \frac{1}{Z_i^2} \left[ - \left(-\sum_{p\in \mathcal{P}\left(i\right)} \nabla_{\bm{p}_i}f_{ip} \cdot \sum_{n\in \mathcal{N}\left(i\right)} f_{in} + \sum_{n \in \mathcal{N}\left(i\right)} \nabla_{\bm{p}_i}f_{in} \cdot \sum_{p\in \mathcal{P}\left(i\right)} f_{ip} \right) \right] \\
	& = \frac{1}{Q_i} \cdot \frac{1}{\lvert \mathcal{P}\left(i\right) \rvert} \cdot \frac{1}{Z_i} \left[- \sum_{p\in \mathcal{P}\left(i\right)} \nabla_{\bm{p}_i}f_{ip} \cdot \frac{\sum_{n\in \mathcal{N}\left(i\right)} f_{in}}{Z_i} +  \sum_{n \in \mathcal{N}\left(i\right)} \nabla_{\bm{p}_i}f_{in} \cdot \frac{\sum_{p\in \mathcal{P}\left(i\right)} f_{ip}}{Z_i} \right] \\
	& = \frac{1}{Z_i} \left[- \sum_{p\in \mathcal{P}\left(i\right)} \nabla_{\bm{p}_i}f_{ip} \cdot \frac{\sum_{n\in \mathcal{N}\left(i\right)} f_{in}}{\sum_{p\in \mathcal{P}\left(i\right)} f_{ip}} + \sum_{n \in \mathcal{N}\left(i\right)} \nabla_{\bm{p}_i}f_{in} \right] \\
	& = \frac{1}{Z_i} \left[- \sum_{p\in \mathcal{P}\left(i\right)} \nabla_{\bm{p}_i}\textnormal{s}\left(\bm{p}_i, \bm{p}_p\right) \cdot \frac{\sum_{n\in \mathcal{N}\left(i\right)} f_{in}}{\sum_{p\in \mathcal{P}\left(i\right)} f_{ip}} + \sum_{n \in \mathcal{N}\left(i\right)} \nabla_{\bm{p}_i}\textnormal{s}\left(\bm{p}_i, \bm{p}_n\right) \right]
\end{align*}

Similar to $\mathcal{L}_{con}^{\textnormal{InfoNCE}}$, the gradient of $\mathcal{L}_{con}^{\textnormal{InfoNCE-ND}}$ can be expressed in terms of the notations $f'_{ip} = 1 - \textnormal{s}\left(\bm{p}_i, \bm{p}_p\right)$, $f'_{in} = 1 - \textnormal{s}\left(\bm{p}_i, \bm{p}_n\right)$, $Z'_i = \sum_{a \in \mathcal{A}\left(i\right)}\left(1 - \textnormal{s}\left(\bm{p}_i, \bm{p}_a\right)\right)$, and $Q'_i = \frac{1}{\lvert \mathcal{N}\left(i\right) \rvert}\sum_{n\in \mathcal{N}\left(i\right)}\frac{f'_{in}}{Z'_i}$:
\begin{align*}
	\frac{\partial \mathcal{L}_{con}^{\textnormal{InfoNCE-ND}}}{\partial \bm{p}_i} & = \frac{1}{Z'_i} \left[\sum_{p\in \mathcal{P}\left(i\right)} \nabla_{\bm{p}_i}f'_{ip} - \sum_{n \in \mathcal{N}\left(i\right)} \nabla_{\bm{p}_i}f'_{in} \cdot \frac{\sum_{p\in \mathcal{P}\left(i\right)} f'_{ip}}{\sum_{n\in \mathcal{N}\left(i\right)} f'_{in}} \right] \\
	& = \frac{1}{Z'_i} \left[- \sum_{p\in \mathcal{P}\left(i\right)} \nabla_{\bm{p}_i}\textnormal{s}\left(\bm{p}_i, \bm{p}_p\right) + \sum_{n \in \mathcal{N}\left(i\right)} \nabla_{\bm{p}_i}\textnormal{s}\left(\bm{p}_i, \bm{p}_n\right) \cdot \frac{\sum_{p\in \mathcal{P}\left(i\right)} f'_{ip}}{\sum_{n\in \mathcal{N}\left(i\right)} f'_{in}} \right]
\end{align*}

These two gradient formulations exhibit fundamentally different optimization focuses, each expressed as a term modulated by a weighted coefficient.
${\partial \mathcal{L}_{con}^{\textnormal{InfoNCE}}}/{\partial \bm{p}_i}$ adaptively emphasizes positive pair tightening by amplifying gradients for hard positives with low similarity or surrounded by similar negatives, while applying uniform repulsion to all negatives. 
In contrast, $\mathcal{L}{con}^{\textnormal{InfoNCE-ND}}$ prioritizes hard negative repulsion by adaptively scaling negative gradients according to overall similarity, while applying uniform attraction to all positives. 
This contrast reveals a fundamental shift: \textbf{from enforcing positive compactness to enhancing negative separation.}
This distinction is particularly relevant in the context of class imbalance.
Under label shift, minority-class samples tend to be absorbed by majority-class clusters, requiring stronger emphasis on negative separation. 
$\mathcal{L}{con}^{\textnormal{InfoNCE-ND}}$ addresses this issue by amplifying the repulsion of confusing majority-class negatives when $\sum_{n \in \mathcal{N}(i)} f_{in}$ is small, thereby preserving minority-class decision boundaries.
In contrast, the gradient of $\mathcal{L}{con}^{\textnormal{InfoNCE}}$ can explode when $\sum{p \in \mathcal{P}(i)} f_{ip}$ becomes vanishingly small, which is exacerbated by the scarcity of positive samples in minority classes.
Consequently, $\mathcal{L}_{con}^{\textnormal{InfoNCE-ND}}$ appears more suitable for IDG.

Finally, the gradient of $\mathcal{L}_{con}^{\textnormal{SupCon-ND}}$ can also be expressed in terms of $f'_{ip} = 1 - \textnormal{s}\left(\bm{p}_i, \bm{p}_p\right)$, $f'_{in} = 1 - \textnormal{s}\left(\bm{p}_i, \bm{p}_n\right)$, and $Z'_i = \sum_{a \in \mathcal{A}\left(i\right)}\left(1 - \textnormal{s}\left(\bm{p}_i, \bm{p}_a\right)\right)$:
\begin{align*}
	\frac{\partial \mathcal{L}_{con}^{\textnormal{SupCon-ND}}}{\partial \bm{p}_i} & = \frac{1}{Z'_i} \left[\sum_{p\in \mathcal{P}\left(i\right)} \nabla_{\bm{p}_i}f'_{ip} - \sum_{n \in \mathcal{N}\left(i\right)} \nabla_{\bm{p}_i}f'_{in} \cdot \left(\frac{Z_i}{\lvert \mathcal{N}\left(i\right) \rvert \sum_{n\in \mathcal{N}\left(i\right)} f_{in}} - 1\right) \right] \\
	& = \frac{1}{Z'_i} \left[- \sum_{p\in \mathcal{P}\left(i\right)} \nabla_{\bm{p}_i}\textnormal{s}\left(\bm{p}_i, \bm{p}_p\right) + \sum_{n \in \mathcal{N}\left(i\right)} \nabla_{\bm{p}_i}\textnormal{s}\left(\bm{p}_i, \bm{p}_n\right) \cdot \left(\frac{Z'_i}{\lvert \mathcal{N}\left(i\right) \rvert \sum_{n\in \mathcal{N}\left(i\right)} f'_{in}} - 1\right) \right]
\end{align*}

Similar to the conclusion in SupCon \cite{khosla2020supervised}, this gradient enforces equal treatment across all negatives, which can suppress the gradient signal from rare but critical minority-class negatives.
In contrast, our proposed $\mathcal{L}_{con}^{\textnormal{InfoNCE-ND}}$ naturally emphasizes closer negatives, thereby more confusing, leading to stronger repulsion for hard negatives and attenuated updates for already-separated ones.
This dynamic aligns well with the goal of maintaining class separation in IDG.
In summary, while SupCon emphasizes balanced treatment of positives, we shift focus by dynamically weighting negatives based on similarity.
This inversion transforms SupCon’s design into a complementary strength under imbalance, enabling targeted repulsion of hard negatives and improved minority-class separation.
Meanwhile, an empirical experiment has been provided in \ref{subsec:apx:exp:con}.

\section{Algorithm} \label{sec:apx:algo}

\begin{algorithm}[h]
	\caption{\ours}
	\label{algo:ours}
	
	\textbf{Input:} Training set, batch size $B$, number of source domains $N$, number of class $K$, maximum iteration $T$, the trade-offs $\alpha$ and $\beta$, and Beta distribution parameter $\rho$
	
	\textbf{Parameters:} A learnable network $f_\theta$
	
	\textbf{Output:} $f_\theta$
	
	\begin{algorithmic}[1]
		\FOR{$t \leftarrow 1$ to $T$}
		\STATE Randomly sample a batch $\bm{S} = \left\{\left(\bm{x}_b, y_b, d_b\right)\right\}_{b=1}^{B}$
		\STATE $\bm{p}_b \leftarrow f_\theta\left(\bm{x}_b\right)$ \\
		\textit{\textcolor{blue}{\# calculate sub-objective function $\mathcal{L}_{ce}$}}
		\STATE $\omega_b \leftarrow \frac{\exp\left(\ell_{ce}\left(\bm{p}_b, y_b\right)\right)}{\sum_{\bm{x}_i \in \bm{X}_k}\exp\left(\ell_{ce}\left(\bm{p}_i, k\right)\right)}$
		\STATE $\mathcal{L}_{ce} \leftarrow$ Eq. \eqref{eq:ce} with $\left\{\left(\bm{p}_b, y_b, \omega_b\right)\right\}_{b=1}^{B}$ \\
		\textit{\textcolor{blue}{\# calculate sub-objective function $\mathcal{L}_{const}$}}
		\STATE $\bm{\mu}^d_k \leftarrow \frac{1}{n^d_k}\sum_{\bm{x}_i \in \bm{X}^d_k} \bm{p}_i$
		\STATE $\mathcal{L}_{const} \leftarrow$ Eq. \eqref{eq:align} with $\left\{\bm{\mu}^d_k \right\}_{d = 1, \dots, N; k=1, \dots, K}$ \\
		\textit{\textcolor{blue}{\# calculate sub-objective function $\mathcal{L}_{con}$}}
		\FOR{a}
		\STATE resort all instances by $p_k$ in batch $\bm{S}$
		\STATE $\widehat{\mathcal{N}}_k \leftarrow$ Eq. \eqref{eq:neg_aug}
		\STATE $\left(\widehat{\bm{p}}_k\right)_i \leftarrow f_\theta\left(\left(\bm{x}_k\right)_i\right), \forall \left(\bm{x}_k\right)_i \in \widehat{\mathcal{N}}_k$
		\ENDFOR
		\STATE $\mathcal{L}_{con} \leftarrow$ Eq. \eqref{eq:con} with $\bigcup_{k=1}^{K} \left\{\left(\widehat{\bm{p}}_k\right)_i\right\}_{i=1}^{\hat{n}_k}$ \\
		\textit{\textcolor{blue}{\# total loss}}
		\STATE $\mathcal{L}_{total} \leftarrow \mathcal{L}_{ce} + \alpha \mathcal{L}_{con} + \beta \mathcal{L}_{const}$
		\STATE Update network parameters
		\ENDFOR
	\end{algorithmic}
\end{algorithm}

Note that the number of the augmented negative set $\widehat{\mathcal{N}}_k$ is $\hat{n}_k$, which is inversely proportional to the number of observed samples of the $k$-th class across all domains.
The underlying intuition is that the classifier is easier to learn on the major classes.

Thus, $p_k$ in Eq. \eqref{eq:neg_aug} is \textit{not a tunable hyper-parameter}, which would be dynamically determined in the current mini-batch.
For example, when mining hard negatives for class 1, we first aim to augment $\hat{n}_k = 100$ instances.
Then, we sort all instances by their predicted probability $p_1$, and subsequently select the \textit{bottom quarter of those} labeled as class 1, e.g., 10 instances, as $X_k^\textnormal{low}$.
Accordingly, $X_{k-}^\textnormal{high}$ consists of the top 10 non-class-1 instances with the highest predicted $p_1$ scores, since $100 / 10 = 10$.
This adaptive mechanism enables the dynamic selection of boundary samples per class and per iteration, ensuring robustness to both class imbalance and evolving model predictions.

\section{Related Works} \label{sec:related}

Existing methods in conventional Domain Generalization can be divided into four taxonomies.
\textit{1) Domain-invariant representation (DIR).} 
MMD \cite{li2018domain} leveraged the kernel mean embedding to align arbitrary order moments of two distributions.
CORAL \cite{sun2016deep} achieved good performance by transferring only second-order moment.
DANN \cite{ganin2016domain} introduced an additional domain classifier to interfere with the representations, so that they do not contain domain information.
IRM \cite{arjovsky2019invariant} constructed a causal diagram from a causal perspective to obtain invariant representations of causality.
RDM \cite{nguyen2024domain} aligned risk distribution to indirectly eliminate domain information in the representation space.
TCRI \cite{salaudeen2024causally} proposed a causal strategy to learn domain general representation, which is similar to DIR.
\textit{2) Data augmentation.}
Mixup \cite{yan2020improve} provided more interpolated instances to enhance the smoothness, thereby improving generalizability.
EDM \cite{cao2024mixup} enlarged the supported region spanned by source domains through generated extrapolated domains.
\textit{3) Optimization-based.}
Fish \cite{shi2022gradient} proposed aligning gradients across domains by minimizing domain-specific gradient discrepancies.
PGrad \cite{wang2023pgrad} leveraged principal gradients to align feature distributions across multiple source domains, enabling better transferability.
\textit{4) Meta learning.}
MLDG \cite{li2018learning} utilized the meta-learning to simulate the marginal shift between domains.
Despite their remarkable success, these methods appear insufficient to fully address the complex challenges posed by the entangled domain and label shifts in IDG.

Recent studies have begun tackling IDG through a divide-and-conquer strategy.
\citet{xia2023generative} propose a two-stage framework focused on synthesizing reliable samples to compensate for minority domains or classes, while incorporating domain-invariant representation learning via adversarial training.
\citet{su2024sharpness} seek flat local minima while amplifying gradients for low-confidence samples, leveraging domain-invariant representations to address long-tailed data across domains.
While intuitive, the former incurs substantial computational overhead, and the latter does not effectively address those coupling effects.
Moreover, the inherent limitation of domain-invariant representations, where target domains may lie outside the support of source domains \cite{cao2024mixup}, is often exacerbated under such coupling.
In contrast, \citet{yang2022multi} propose a unified method that adaptively re-weights each domain-class pair through a designed transferability graph in the representation space to jointly address both shifts.
this method remains biased toward majority classes, and its overemphasis on underrepresented minority classes may lead to overfitting and degraded generalization performance.
In this paper, we first present the generalization bound for IDG to reveal the key factors that influence the generalization of the target domain.
Building on this theoretical insight, our designed \ours{} leverages negative signals to naturally repel samples from neighboring classes, thereby enhancing generalization.
In addition, \textbf{this design cleverly incorporates the structure of InfoNCE objective}, without requiring any modifications to the original data distribution.

\section{Addtional Experiments} \label{sec:app:exp}

In this section, we provide a more detailed experimental analysis.

\begin{figure}[t]
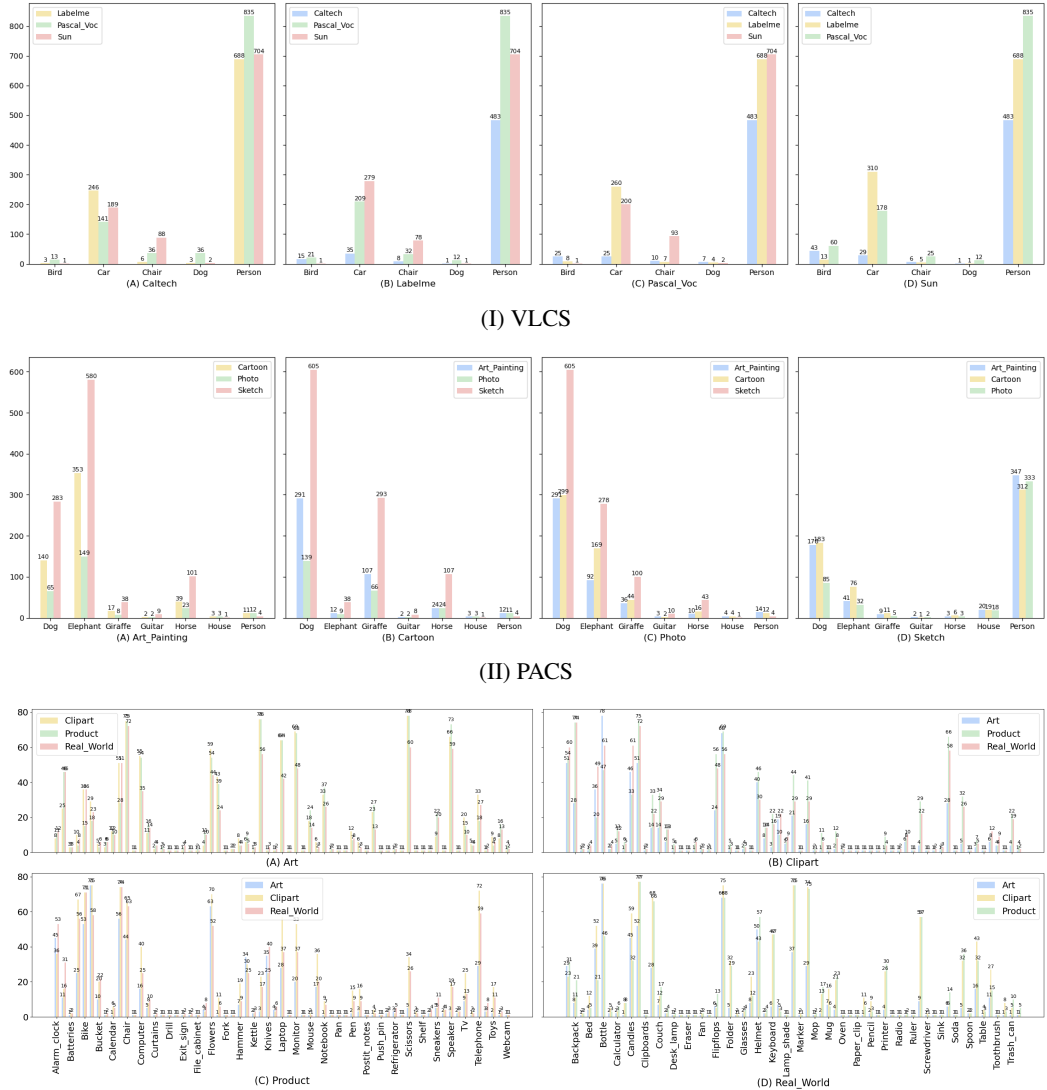

	\centering
	\renewcommand{\thesubfigure}{\Roman{subfigure}}
	\subfloat[VLCS]{
		\includegraphics[width=\linewidth]{figs/vlcs\_totalheavytail}
	}\hfill\vspace{-6pt}
	\subfloat[PACS]{
		\includegraphics[width=\linewidth]{figs/pacs\_totalheavytail}
	}\hfill\vspace{-6pt}
	\subfloat[OfficeHome]{
		\includegraphics[width=\linewidth]{figs/officehome\_totalheavytail}
	}
	\caption{TotalHeavyTail setting on three Benchmark.}
	\label{fig:apx:exp:totalheavytail:setting}
\end{figure}

\subsection{Datasets and Settings}

VLCS is a widely used domain generalization benchmark that combines four distinct datasets: VOC2007 (V), LabelMe (L), Caltech (C), and SUN09 (S). It contains 5 object categories shared across domains, with significant variations in scene style and object depiction, making it suitable for evaluating cross-domain recognition performance.

PACS is a more challenging DG dataset consisting of four visually diverse domains: Photo (P), Art painting (A), Cartoon (C), and Sketch (S). It includes 7 object categories and features large domain shifts due to differences in texture, abstraction, and artistic style.

OfficeHome is another more challenging benchmark for domain generalization, consisting of 65 categories across four diverse domains: Art, Clipart, Product, and Real-World. It exhibits substantial domain discrepancies and category imbalance, thus providing a comprehensive evaluation setting for generalization algorithms.

\begin{figure}[t]
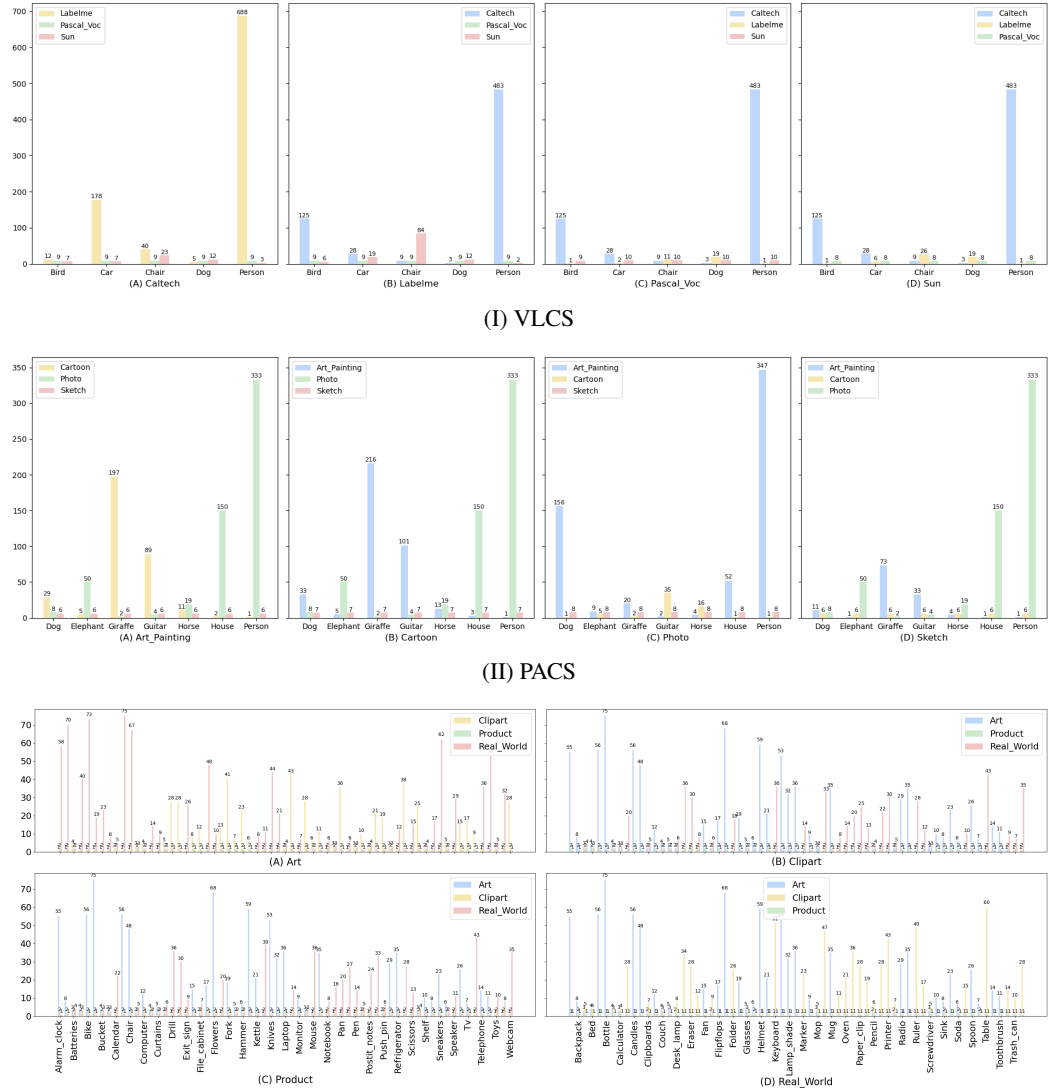

	\centering
	\renewcommand{\thesubfigure}{\Roman{subfigure}}
	\subfloat[VLCS]{
		\includegraphics[width=\linewidth]{figs/vlcs\_duality}
	}\hfill\vspace{-6pt}
	\subfloat[PACS]{
		\includegraphics[width=\linewidth]{figs/pacs\_duality}
	}\hfill\vspace{-6pt}
	\subfloat[OfficeHome]{
		\includegraphics[width=\linewidth]{figs/officehome\_duality}
	}
	\caption{Duality setting on three Benchmark.}
	\label{fig:apx:exp:duality:setting}
\end{figure}

\begin{lstlisting}[style=pythonstyle, caption={Example of generating the TotalHeavyTail setting on PACS}, label={lst:example}]
	# class TotalHeavyTail() and main() are defined in idg_generate.py
	generator = TotalHeavyTail(num_of_validation, percent_of_test, heavytail_distribution_parameter)
	stats = main('PACS', num_of_validation, thred_many, thred_few, generator, file=sys.stdout)
\end{lstlisting}

Since existing works do not provide standardized data splitting protocols, particularly on the OfficeHome Benchmark, and typicallyomit performance breakdowns across both coarse-and-fine-grained class levels \cite{su2024sharpness, xia2023generative}, it becomes challenging to ensure systematic evaluation and reproducibility. 
To fill this gap, we provide a script, namely idg\_generate.py, in the code folder of the supplementary materials, which enables the generation of diverse and controllable dataset configurations.

TotalHeavyTail and Duality are two representative settings generated by this script, where Fig. \ref{fig:apx:exp:totalheavytail:setting} and Fig. \ref{fig:apx:exp:duality:setting} visualize the number of training samples per class in each setting, respectively.
Specifically, the TotalHeavyTail setting represents a cross-domain consistent long-tailed distribution, with relatively balanced domain sampling. 
In contrast, the Duality setting introduces a symmetric long-tailed distribution with noticeable domain-level sampling discrepancies.
Tab. \ref{tab:apx:exp:setting:info} compares the imbalance ratios across the three settings used in the main experiments. 
It can be observed that, compared to the GINIDG setting, our proposed TotalHeavyTail and Duality settings place increased demands on model robustness and generalization capacity.

\begin{table}[t]
	\centering
	\scriptsize
	\tabcolsep 0.65em
	\caption{Statistical information about imbalance ratios across all benchmarks under three different settings. CR denotes the overall class ratio, i.e., $\frac{\max\left(\left\{n_k\right\}_{k=1}^K\right)}{\min\left(\left\{n_k\right\}_{k=1}^K\right)}$. DR denotes the sampling ratio across domains, i.e., $\frac{\max\left(\left\{N^d\right\}_{d=1}^N\right)}{\min\left(\left\{N^d\right\}_{d=1}^N\right)}$. ECR denotes the class ratio in each source domain, i.e., $\frac{\max\left(\left\{n^d_k\right\}_{k=1}^K\right)}{\min\left(\left\{n^d_k\right\}_{k=1}^K\right)}$.}
	\label{tab:apx:exp:setting:info}
	\begin{tabular}{l | c | ccc | ccc | ccc}
		\toprule
		\multicolumn{2}{c|}{}                                 & \multicolumn{3}{c|}{GINIDG} & \multicolumn{3}{c|}{TotalHeavyTail} & \multicolumn{3}{c}{Duality} \\
		\multicolumn{2}{c|}{}                                 & CR & DR & ECR & CR & DR & ECR & CR & DR & ECR \\
		\midrule
		\multirow{4}{*}{\rotatebox[origin=c]{90}{VLCS}}       & C & 10.15 & 1.38 & [30.95, 90.28, 4.90] & 131.00 & 1.21 & [229.33, 64.23, 704.00] & 26.92 & 20.51 & [137.60, 1.00, 7.67] \\
		& L & 10.00 & 2.45 & [14.08, 99.42, 7.36] & 144.42 & 2.04 & [483.00, 69.58, 704.00] & 20.58 & 14.40 & [161.00, 1.00, 42.00] \\
		& S & 10.00 & 2.25 & [19.32, 42.87, 5.22] & 143.28 & 1.97 & [483.00, 688.00, 69.58] & 16.40 & 16.20 & [161.00, 26.00, 1.00] \\
		& V & 26.66 & 2.08 & [13.23, 33.36, 107.14] & 144.23 & 1.81 & [69.00, 172.00, 704.00] & 15.43 & 16.20 & [161.00, 22.00, 1.11] \\
		\midrule
		\multirow{4}{*}{\rotatebox[origin=c]{90}{PACS}}       & A & 12.50 & 3.35 & [8.54, 3.65, 46.44] & 154.57 & 3.88 & [176.50, 74.50, 580.00] & 26.92 & 20.51 & [137.60, 1.00, 7.67] \\
		& C & 12.36 & 3.35 & [7.98, 4.12, 48.27] & 147.86 & 4.16 & [145.50, 69.50, 605.00] & 20.58 & 14.40 & [161.00, 1.00, 42.00] \\
		& P & 13.00 & 2.34 & [7.27, 8.20, 40.75] & 132.78 & 2.31 & [97.00, 149.50, 605.00] & 16.40 & 16.20 & [161.00, 26.00, 1.00] \\
		& S & 16.50 & 1.80 & [19.47, 25.00, 9.00] & 198.40 & 1.27 & [173.50, 312.00, 166.50] & 13.60 & 13.47 & [73.00, 1.00, 166.50] \\
		\midrule
		\multirow{4}{*}{\rotatebox[origin=c]{90}{OfficeHome}} & A & -     & -    & -                    & 74.00 & 1.16 & [78.00, 78.00, 72.00] & 9.75 & 7.42 & [43.00, 1.00, 75.00] \\
		& C & -     & -    & -                    & 66.00 & 1.54 & [78.00, 75.00, 74.00] & 9.75 & 7.04 & [75.00, 1.00, 43.00] \\
		& P & -     & -    & -                    & 69.33 & 1.63 & [75.00, 75.00, 74.00] & 9.75 & 7.04 & [75.00, 1.00, 43.00] \\
		& R & -     & -    & -                    & 70.33 & 1.80 & [76.00, 77.00, 77.00] & 9.62 & 14.08 & [75.00, 60.00, 1.00] \\
		\bottomrule
	\end{tabular}
\end{table}

We adopt the DomainBed protocol \cite{gulrajani2021in} to reproduce all methods, aiming to systematically evaluate which approaches are effective under which settings.
ResNet-50 architecture has been adopted as the backbone, and each input image is resized and cropped to 224$\times$224.
Adam is utilized as the optimizer.
Training data is randomly sampled from the full source domains under each setting. 
Unlike the original DomainBed training-domain validation protocol, which utilizes the remaining data for model selection, we utilize \textbf{a cross-domain balanced validation set} for model selection. 
And, the remaining data from the training domains serves as the in-distribution test set, while the held-out domains are used as target domains.
Batch size is set to 32 for each source domain and 64 for test, respectively.
Maximum iteration $T$ is set to 5,000.
The default learning rate is set to $5e$-$5$.
The range of the trade-offs involved in each method can be referred to the supplementary material.
All codes are run on Python 3.9, PyTorch 1.13 on Arch Linux with NVIDIA GeForce RTX 4090 GPUs.

\subsection{Illustrating IDG Failure Modes on Toy Data} \label{subsec:apx:exp:failure}

In this subsection, we construct additional experiments to demonstrate the challenges of IDG on toy PACS, mentioned in section \ref{sec:introduction}.

\begin{figure}[p]
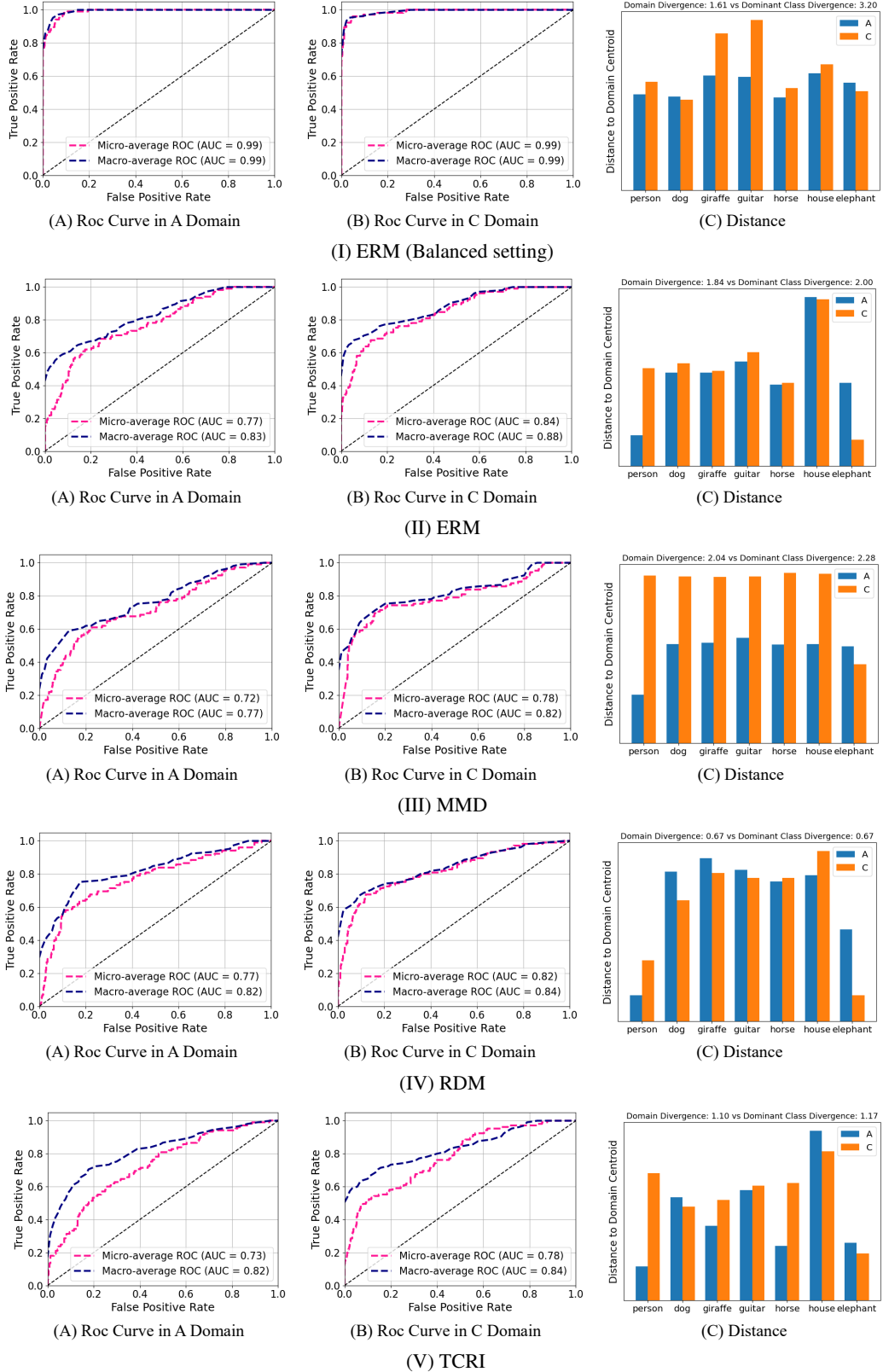

	\centering
	\renewcommand{\thesubfigure}{\Roman{subfigure}}
	\subfloat[ERM (Balanced setting)]{
		\includegraphics[width=0.98\linewidth]{figs/baseline\_align}
	}\hfill\vspace{-6pt}
	\subfloat[ERM]{
		\includegraphics[width=0.98\linewidth]{figs/erm\_align}
	}\hfill\vspace{-6pt}
	\subfloat[MMD]{
		\includegraphics[width=0.98\linewidth]{figs/mmd\_align}
	}\hfill\vspace{-6pt}
	\subfloat[RDM]{
		\includegraphics[width=0.98\linewidth]{figs/rdm\_align}
	}\hfill\vspace{-6pt}
	\subfloat[TCRI]{
		\includegraphics[width=0.98\linewidth]{figs/tcri\_align}
	}
	\caption{An illustration of missalignment caused by heterogeneous long-tailed distributions. (A) and (B) illustrates the Roc curves on the validation set for each source domain. (C) illustrates the distance between each class and its corresponding domain centroid. In Domain A, class sample sizes decrease from left to right along the x-axis, whereas in Domain C, they increase accordingly.}
	\label{fig:apx:exp:drawback:align}
\end{figure}

To demonstrate \textbf{how heterogeneous long-tailed distributions can lead to misalignment}, we design a toy experiment on the PACS dataset with a symmetric cross-domain long-tail distribution.
Specifically, an imbalance ratio of 400:1 is used, as illustrated in Fig. \ref{fig:apx:exp:drawback:align} (C), where person is the dominant class in Domain A and a minority class in Domain C, and vice versa for the other classes.
Several key observations emerge:
1) Under this imbalanced setting, the ROC-AUC on the validation set decreases significantly.
2) Employing domain alignment strategies further reduces the ROC-AUC, suggesting that majority classes dominate the alignment process and likely cause mismatches.
3) The dominant class consistently lies closest to its domain centroid, as illustrated by the bar plots in Fig. \ref{fig:apx:exp:drawback:align} (C).
And, the Wasserstein distance between domains is comparable to the Wasserstein distance \cite{cao2024mixup} between their dominant classes. 
These findings further imply that majority classes play a dominant role in the domain alignment process.	

\begin{figure}[ht!]
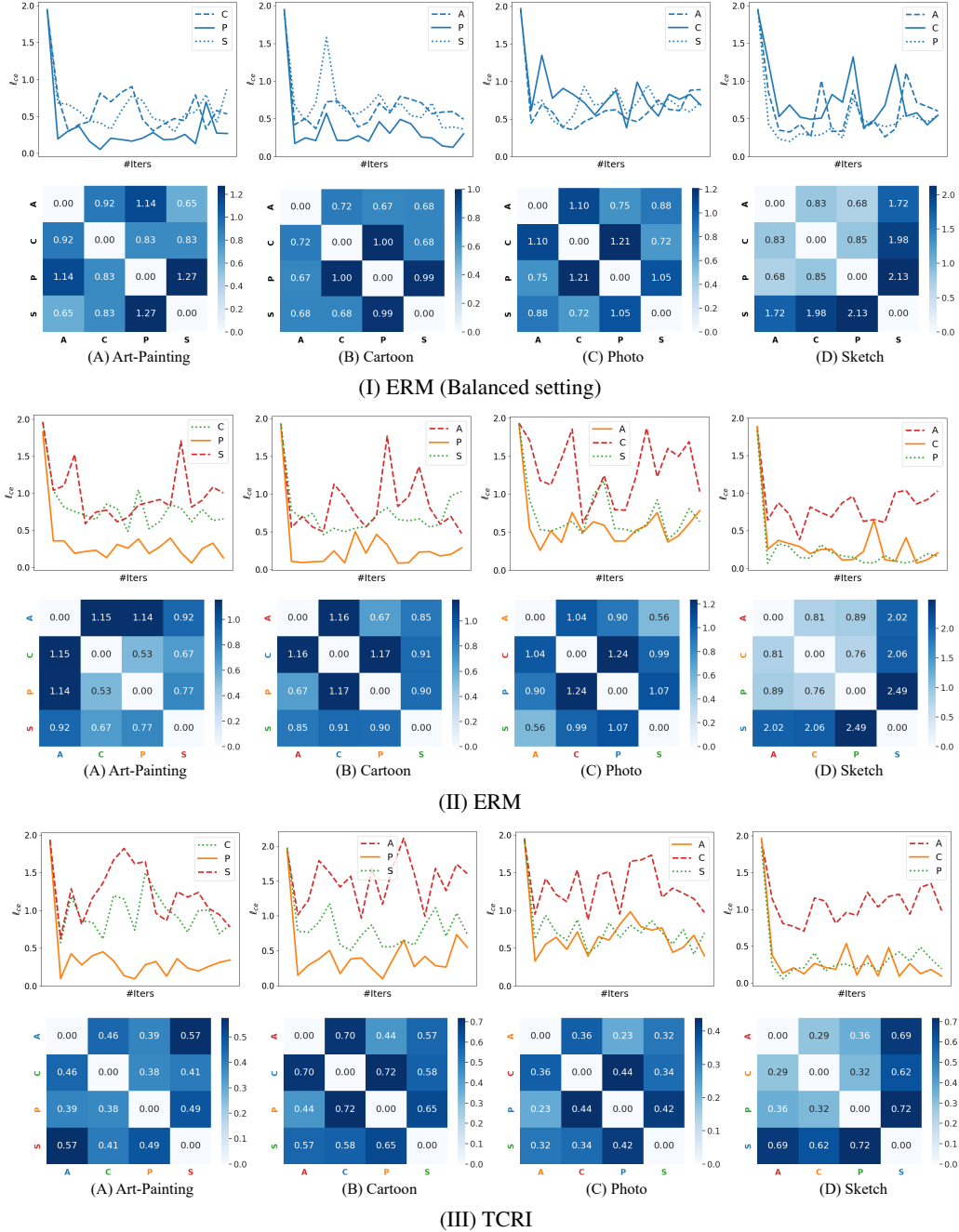

	\centering
	\renewcommand{\thesubfigure}{\Roman{subfigure}}
	\subfloat[ERM (Balanced setting)]{
		\includegraphics[width=0.975\linewidth]{figs/baseline\_absorb}
	}\hfill\vspace{-6pt}
	\subfloat[ERM]{
		\includegraphics[width=0.975\linewidth]{figs/erm\_absorb}
	}\hfill\vspace{-6pt}
	\subfloat[TCRI]{
		\includegraphics[width=0.975\linewidth]{figs/tcri\_absorb}
	}
	\caption{An illustration of underrepresented domains being absorbed by more populous ones. Minority, majority, and medium domains are color-coded in red, orange, and green, respectively. The line plot tracks the validation cross-entropy loss over training epochs for each source domain, while the heatmap quantifies inter-domain discrepancies under the optimal model, measured by the Wasserstein distance\cite{cao2024mixup}.}
	\label{fig:apx:exp:drawback:absorb}
	\vspace{-1.2em}
\end{figure}

To demonstrate \textbf{how inter-domain sampling imbalance can cause underrepresented domains to be absorbed by more populous ones}, thereby supressing inter-domain support set and impairing generalization, we design a toy experiment with imbalanced domain sampling but balanced class sampling within each domain.
Specifically, we sample 100 examples per class for the major domain, 20 per class for the medium domain, and 4 per class for the minority domain.
This setting is compared against a fully balanced baseline in which each domain contributes only 10 samples per class.
As illustrated in Fig. \ref{fig:apx:exp:drawback:absorb}, several key observations emerge:
1) Under the balanced setting, losses across domains remain similar. 
In contrast, the imbalanced setting leads to substantial loss discrepancies, with the minority domain exhibiting the highest and most unstable validation loss. 
This indicates that domain-level sampling imbalance negatively affects the generalization ability of each source domain.
2) Comparing inter-domain discrepancies in SubFig. \ref{fig:apx:exp:drawback:absorb} (I) and (II), we observe that the divergence between majority and minority domains under the imbalanced setting is significantly smaller than in the balanced case, suggesting that the minority domain has been absorbed by the majority.
3) Regarding source-to-target domain shift, the imbalanced setting exhibits notably larger discrepancies compared to the balanced case, confirming that domain-level imbalance undermines cross-domain generalization.
4) Even with the use of a recent distribution alignment method, i.e., TRCI \cite{salaudeen2024causally}, the loss gap between majority and minority domains persists, although the overall inter-domain distributional divergence is noticeably reduced.

\subsection{Impact of Domain-Level Resampling} \label{sub:apx:exp:resampling}

\begin{table}[t]
	\centering
	\tiny
	\caption{Impact of domain-level resampling on BSoftmax under TotalHeavyTail and Duality settings.}
	\label{tab:apx:exp:longtailed}
	\begin{tabular}{l | cccc | cccc}
		\toprule
		& \multicolumn{4}{|c}{TotalHeavyTail} & \multicolumn{4}{|c}{Duality} \\
		& Average & Many & Medium & Few & Average & Many & Medium & Few \\
		\midrule
		Applying & 48.6 $\pm$ 0.5 & 69.5 $\pm$ 0.9 & 61.9 $\pm$ 0.5 & 33.0 $\pm$ 0.5 & 52.8 $\pm$ 0.5 & 67.4 $\pm$ 1.4 & 62.1 $\pm$ 0.6 & 44.4 $\pm$ 1.0 \\
		Omitting & 48.4 $\pm$ 0.5 & 69.1 $\pm$ 0.7 & 61.8 $\pm$ 1.0 & 32.4 $\pm$ 0.5 & 51.9 $\pm$ 0.3 & 65.4 $\pm$ 1.2 & 59.1 $\pm$ 0.8 & 43.7 $\pm$ 0.3\\
		\bottomrule
	\end{tabular}
\end{table}

\begin{table}[t]
	\centering
	\tiny
	\setlength\tabcolsep{0.6em}
	\caption{Discrepancy of prior distributions in learned embeddings with on OfficeHome under TotalHeavyTail and Duality settings across several representative methods. SS indicates source-source divergence. ST indicates source-target divergence.}
	\label{tab:apx:exp:prior}
	\begin{tabular}{cc | ccccc | c | cc | c}
		\toprule
		& & ERM & MMD & RDM & PGrad & TCRI & BSoftmax & GINIDG & BoDA & \ours{} (\textit{ours})  \\
		\midrule
		\multirow{2}{*}{TotalHeavyTail} & SS & 0.45 $\pm$ 0.02 & 0.43 $\pm$ 0.05 & 0.55 $\pm$ 0.09 & 0.36 $\pm$ 0.06 & 0.17 $\pm$ 0.03 & 0.38 $\pm$ 0.06 & 0.77 $\pm$ 0.16 & 0.13 $\pm$ 0.00 & 0.30 $\pm$ 0.03 \\
		& ST & 0.99 $\pm$ 0.08 & 1.03 $\pm$ 0.15 & 1.37 $\pm$ 0.13 & 0.78 $\pm$ 0.13 & 0.38 $\pm$ 0.07 & 0.99 $\pm$ 0.14 & 2.32 $\pm$ 0.19 & 0.29 $\pm$ 0.01 & 0.69 $\pm$ 0.08 \\
		\midrule
		\multirow{2}{*}{Duality} & SS & 0.84 $\pm$ 0.10 & 0.70 $\pm$ 0.07 & 0.82 $\pm$ 0.16 & 0.65 $\pm$ 0.05 & 0.42 $\pm$ 0.04 & 0.63 $\pm$ 0.06 & 1.38 $\pm$ 0.14 & 0.36 $\pm$ 0.02 & 0.53 $\pm$ 0.05  \\
		& ST & 1.10 $\pm$ 0.09 & 1.06 $\pm$ 0.09 & 0.90 $\pm$ 0.17 & 0.72 $\pm$ 0.07 & 0.55 $\pm$ 0.06 & 0.81 $\pm$ 0.11 & 2.36 $\pm$ 0.19 & 0.43 $\pm$ 0.05 & 0.70 $\pm$ 0.08 \\
		\bottomrule
	\end{tabular}
\end{table}	

\begin{table}[h!]
	\centering
	\tiny
	\caption{Comparison of Contrastive Loss Variants for our \ours. \ours$^{\textnormal{InfoNCE-ND}}$ denotes our proposed method in the main manuscript, dominated by negatives with InfoNCE-like objective. \ours$^{\textnormal{SupCon-ND}}$ denotes \ours {} with SupCon-like objective dominated by negatives, while \ours$^{\textnormal{SupCon}}$ denotes \ours {} with SupCon objective dominated by positives.}
	\label{tab:apx:exp:}
	\begin{tabular}{l | cccc | cccc}
		\toprule
		& \multicolumn{4}{|c}{TotalHeavyTail} & \multicolumn{4}{|c}{Duality} \\
		& Average & Many & Medium & Few & Average & Many & Medium & Few \\
		\midrule
		\ours$^{\textnormal{SupCon}}$     & 47.3 $\pm$ 0.2 & 76.1 $\pm$ 0.7 & 64.8 $\pm$ 0.9 & 26.7 $\pm$ 0.4 & 53.1 $\pm$ 0.3 & 70.2 $\pm$ 1.4 & 62.6 $\pm$ 0.5 & 43.7 $\pm$ 0.7 \\
		\ours$^{\textnormal{SupCon-ND}}$  & 48.0 $\pm$ 0.3 & 75.3 $\pm$ 1.3 & 65.3 $\pm$ 0.4 & 27.7 $\pm$ 0.5 & 53.2 $\pm$ 0.2 & 69.7 $\pm$ 1.4 & 63.1 $\pm$ 0.6 & 43.6 $\pm$ 0.4 \\
		\ours$^{\textnormal{InfoNCE-ND}}$ & 49.0 $\pm$ 0.2 & 71.6 $\pm$ 0.8 & 66.0 $\pm$ 0.2 & 30.5 $\pm$ 0.3 & 55.7 $\pm$ 0.2 & 71.4 $\pm$ 1.0 & 65.8 $\pm$ 0.5 & 47.6 $\pm$ 0.6 \\
		\bottomrule
	\end{tabular}
\end{table}

To ensure a fair comparison, a domain-level resampling strategy is implicitly applied to all long-tailed methods.
Specifically, each training iteration consists of samples from all domains. 
This strategy is commonly adopted in domain generalization, particularly by alignment-based approaches.
Tab. \ref{tab:apx:exp:longtailed} reports the impact of applying or omitting this resampling strategy on the BSoftmax method.
These results indicate that under the TotalHeavyTail setting, which is dominated by label imbalance, this strategy has minimal effect, suggesting that class discriminability should be prioritized in this case.
In contrast, under the Duality setting, where heterogeneous label shift reduces inter-class competition and amplifies domain shift, this resampling strategy helps to mitigate domain divergence during training, thereby improving the robustness of long-tailed methods.

\subsection{Prior distribution discrepancies in learned embeddings}

In Tab. \ref{tab:apx:exp:prior}, we investigate the marginal distribution discrepancies, i.e., prior distribution discrepancies, of learned representations across several representative methods. 
Despite not explicitly aligning prior distributions, our proposed \ours{} achieves lower prior discrepancies both among source domains (SS) and between source and target domains (ST), compared to existing domain alignment approaches such as MMD and RDM.
This suggests that our posterior-alignment strategy implicitly mitigates prior mismatches to some extent.
Another potential explanation, discussed in Subsection \ref{subsec:apx:exp:failure}, is that the methods, such as MMD and RDM, may suffer from misalignment due to label shift, particularly under the TotalHeavyTail setting. 
TCRI achieves lower prior discrepancy than ours, possibly due to its causal approach, which aligns support sets rather than full representations and thus avoids misalignment when sampling is sufficient for each domain.
BoDA, which directly manipulates the representations, yields the lowest prior discrepancy overall.

\subsection{Empirical evidence of proposed contrastive objective} \label{subsec:apx:exp:con}

The results in Tab. \ref{tab:apx:exp:} demonstrate the performance differences among three contrastive loss variants integrated into our \ours{} framework.
Both \ours$^{\textnormal{SupCon-ND}}$ and \ours$^{\textnormal{InfoNCE-ND}}$ show clear improvements over \ours$^{\textnormal{SupCon}}$, particularly on minority classes, confirming that negative-dominated separation alleviates the absorption of small clusters into majority ones. 
Among them, \ours$^{\textnormal{InfoNCE-ND}}$, which is our proposed method in the main manuscript, achieves the best balance, yielding the highest overall accuracy while \textbf{substantially boosting \textit{Few} class performance} across both datasets. 
These findings highlight the importance of posterior-aligned negative separation in maintaining stable and generalizable decision boundaries under imbalanced domain shifts, which is consistent with our gradient analysis.

\subsection{Parameter studies} 

\begin{figure}[t]
	\centering
	\includegraphics[width=\linewidth]{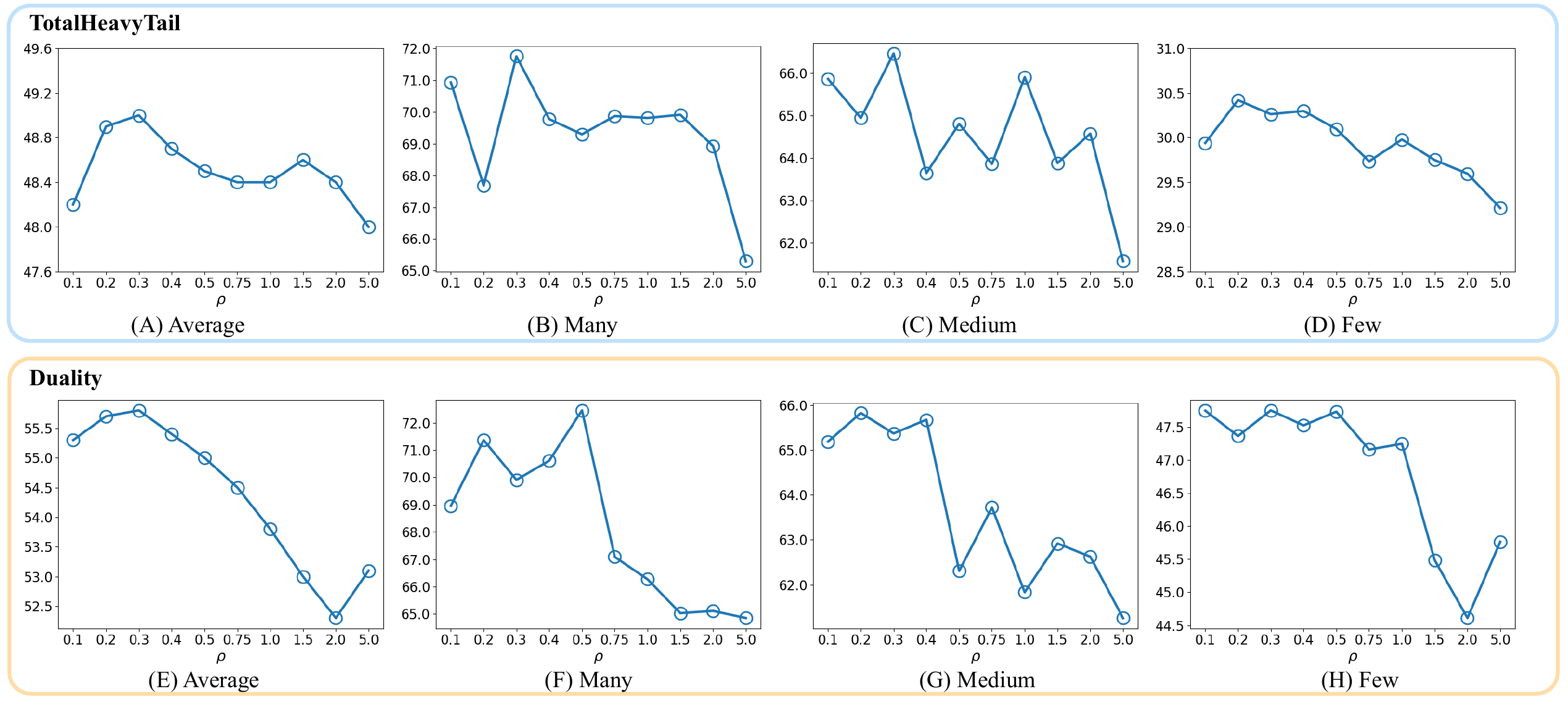}
	\caption{The influence of Beta Distribution parameter $\rho$ on OfficeHome under two different settings. The horizontal axis denotes different values of $\rho$, and the vertical axis denotes accuracy.}
	\label{fig:exp:parameters:rho}
\end{figure}

Fig. \ref{fig:exp:parameters:rho} reports the influence of Beta Distribution parameter $\rho$ involved in our \ours.
Note that $\rho$ is the Beta distribution parameter, not the mixing coefficient $\lambda$ itself.
When $\rho < 1$, the sampled mixing coefficients are biased toward the extremes; when $\rho > 1$, most mixed samples lie near the midpoint of the two inputs.
From this table, we observe that under the TotalHeavyTail setting, performance is relatively insensitive to $\rho$, with the best range around $\left\{0.2, \dots, 2.0\right\}$.
In contrast, under the Duality setting, the effective range narrows to $\left\{0.1, \dots, 0.5\right\}$.
This difference arises because in TotalHeavyTail, the class imbalance is more severe, causing minority samples to cluster near majority ones; thus, mixtures closer to the middle region are more beneficial.
In Duality, however, preserving the semantic structure of the original samples is more critical, favoring more extreme mixing.

\subsection{Computational complexity analysis} \label{sub:apx:exp:cost}

\begin{table}[t]
	\centering
	\tiny
	\setlength\tabcolsep{0.6em}
	\caption{Average per-batch training cost (in seconds).}
	\label{tab:apx:exp:cost}
	\begin{tabular}{l | ccccccc | cccc}
		\toprule
		& RDM & PGrad & TRCI & BSoftMax & GINIDG & BoDA & SAMALTDG & \ours & \ours-NoCon & \ours-NoAug & \ours-NoConst \\
		\midrule
		VLCS       & 0.224 & 0.679 & 0.433 & 0.248 & 0.617 & 0.315 & 0.489 & 0.500 & 0.229 & 0.263 & 0.483 \\
		PACS       & 0.208 & 0.597 & 0.415 & 0.219 & 0.606 & 0.267 & 0.478 & 0.468 & 0.215 & 0.260 & 0.453 \\
		OfficeHome & 0.228 & 0.654 & 0.466 & 0.254 & 0.613 & 0.298 & 0.493 & 0.505 & 0.221 & 0.296 & 0.494 \\
		\bottomrule
	\end{tabular}
\end{table}

\textbf{First}, we would like to clarify that the number of hyper-parameters involved in our \ours {} is comparable to those in other IDG methods. 
\ours {} has 3, GINIDG has 1 but involves 3 sub-objective functions, SAMALTDG has 4, BoDA has 7. 
\textbf{Second}, in Tab. \ref{tab:apx:exp:cost}, we have empirically investigated the computational complexity of our \ours.
From this table, we can observe that: 1) Overall, \ours {} exhibits moderate training cost, significantly lower than PGrad with gradient manipulation and GINIDG with adversarial generation. 
2) Compared to the IDG methods, \ours {} achieves comparable efficiency to SAMALTDG, and although its cost is marginally higher than BoDA, \ours {} consistently outperforms these methods across all cases. 
We believe this constitutes a reasonable trade-off between efficiency and effectiveness. 
3) Compared with \ours {} and \ours-NoConst, the cost of predictive center alignment is relatively minor. 
4) Comparing \ours-NoCon and \ours-NoAug, the cost of hard negative mining is also relatively modest, and both variants are among the more efficient in terms of runtime. These findings indicate that the primary computational cost of \ours {} arises from training on augmented data.

According to Tab. \ref{tab:exp:ablation} in the main manuscript, \ours-NoAug, which does not employ hard negative mining, exhibits a training cost comparable to that of BoDA, while still achieving consistently better performance.

\subsection{Quantitative Evaluation of Margin and Posterior Discrepancy}

\begin{table}[t]
	\centering
	\small
	\caption{Quantitative Evaluation of Margin and Posterior Discrepancy on OfficeHome under the TotalHeavyTail and Duality settings. Avg $\gamma(h)$ denotes the average per-instance margin, and $\Pr[\gamma(h)\le\delta]$ denotes the proportion of samples whose margin is at most $\delta$ where $\delta = 0$. PD denotes the posterior discrepancy measured using the Jensen–Shannon divergence. ($\uparrow$) means higher is better, and ($\downarrow$) means lower is better.}
	\label{tab:apx:exp:quantitative}
	\begin{tabular}{l | ccc | ccc}
		\toprule
		& \multicolumn{3}{c|}{TotalHeavyTail} & \multicolumn{3}{c}{Duality} \\
		\cmidrule{2-7}
		& Avg $\gamma\left(h\right)$ {\tiny($\uparrow$)} & $\Pr\left[\gamma\left(h\right) \le \delta\right]$ {\tiny($\downarrow$)} & PD {\tiny($\downarrow$)} &  Avg $\gamma\left(h\right)$ {\tiny($\uparrow$)} & $\Pr\left[\gamma\left(h\right) \le \delta\right]$ {\tiny($\downarrow$)} & PD {\tiny($\downarrow$)} \\
		\midrule
		PGrad    & 0.046  & 0.540 & 0.300 & 0.222 & 0.427 & 0.208 \\
		TCRI     & 0.008  & 0.545 & 0.291 & 0.111 & 0.489 & 0.234 \\
		BSoftmax & 0.086  & 0.514 & 0.272 & 0.171 & 0.463 & 0.221 \\
		GINIDG   & $-$0.084 & 0.564 & 0.309 & 0.092 & 0.499 & 0.246 \\
		BoDA     & 0.047  & 0.568 & 0.351 & 0.214 & 0.459 & 0.274 \\
		\ours    & 0.136  & 0.485 & 0.254 & 0.230 & 0.432 & 0.189 \\
		\bottomrule
	\end{tabular}
\end{table}

\begin{table}[t]
	\centering
	\small
	\caption{Pearson correlation between theoretical quantities and target accuracy using $\alpha = 0.05$.}
	\label{tab:apx:exp:pearson}
	\begin{tabular}{l | ccc}
		\toprule
		& Avg $\gamma\left(h\right)$ & $\Pr\left[\gamma\left(h\right) \le \delta\right]$ & PD \\
		\midrule
		Correlation & $0.869$ & $-0.934$ & $-0.889$ \\
		$p$-value & $2.439 \times 10^{-4}$ & $8.616 \times 10^{-6}$ & $1.099 \times 10^{-4}$\\
		\bottomrule
	\end{tabular}
\end{table}

Fig. \ref{fig:exp:infos} has visualized per-class margins and posterior discrepancies on the target domain of OfficeHome, where \ours {} achieves noticeably better separation and posterior alignment than competing methods, especially on the Few classes.

To further quantify how margin and posterior discrepancy relate to generalization performance, we report the corresponding metrics with $\delta = 0$ for six representative methods in Tab.~\ref{tab:apx:exp:quantitative}, and examine their Pearson correlations with target-domain accuracy in Tab.~\ref{tab:apx:exp:pearson}.
These quantitative results reveal two key findings.
\begin{enumerate}
	\item \textit{Margin terms strongly correlate with accuracy}.
	Across both settings, \ours{} achieves the largest average margin and the smallest proportion of small-margin samples, while competing methods exhibit lower margins and higher small-margin probability, consistent with their weaker target-domain accuracy.
	Pearson correlation further confirms a strong positive association between average margin and accuracy, and a strong negative association for small-margin probability, both statistically significant.
	This supports the relevance of the margin term in our bound.
	\item \textit{Posterior discrepancy negatively correlates with accuracy}.
	\ours{} consistently yields the lowest posterior discrepancy, whereas other baselines, including alignment-based competitors, suffer larger discrepancies.
	The negative correlation with accuracy is statistically significant, validating the role of the posterior discrepancy term in our theoretical formulation.
\end{enumerate}
In summary, the results provide clear empirical support for the theory: larger margins and smaller posterior discrepancies reliably predict higher target-domain accuracy, confirming the practical relevance of both terms in our bound.

\subsection{Significance Test} \label{sub:apx:exp:friedman}

\begin{figure}[t]
	\centering
	\includegraphics[width=0.9\linewidth]{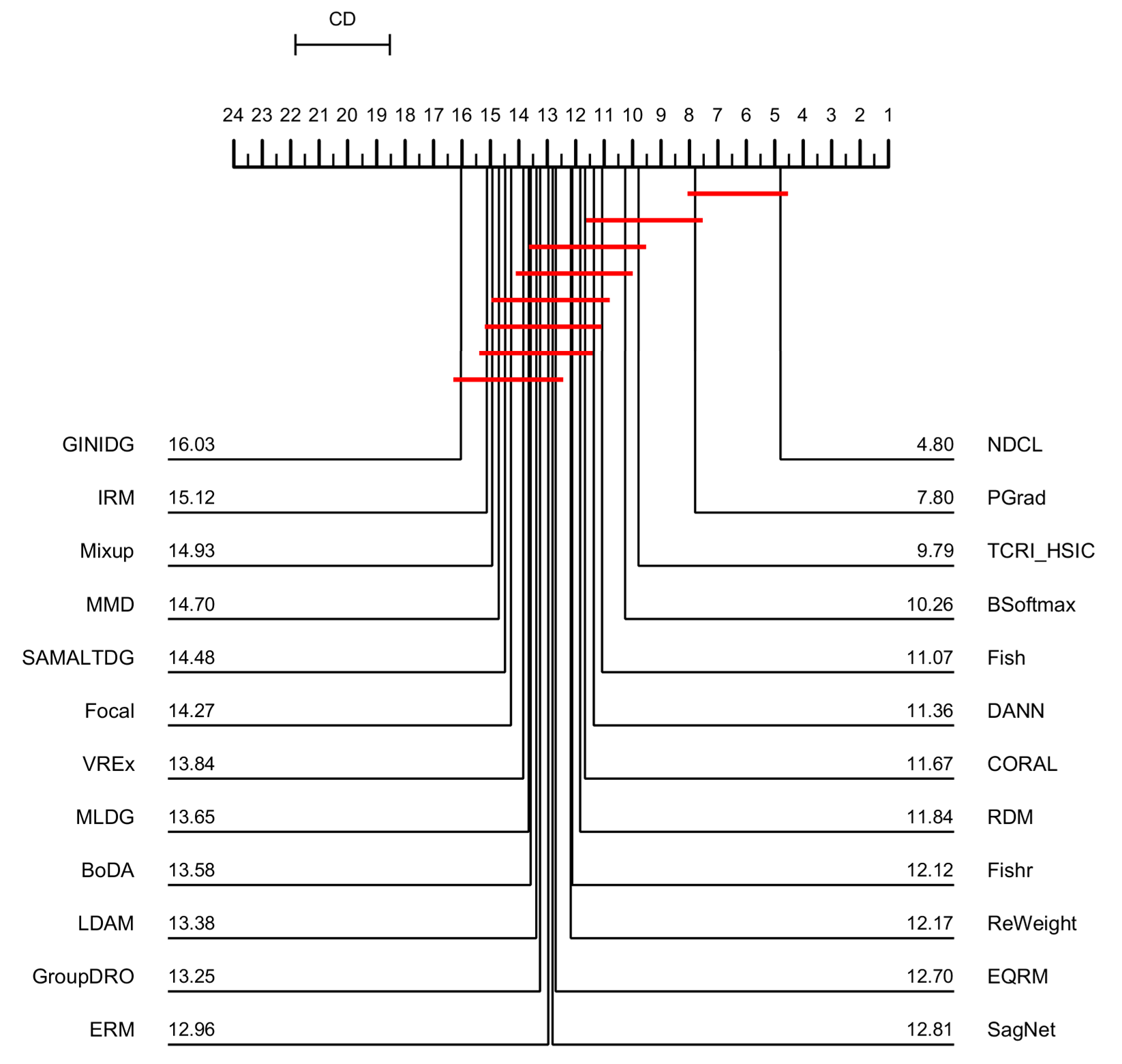}
	\caption{Nonparametric hypothesis testing using the Friedman test on the performance ranks of all compared methods across cases. Here, CD denotes the critical difference, which is 3.71.}
	\label{fig:apx:friedman}
\end{figure}

To complement performance metrics, we additionally report the Friedman test \cite{miller1997beyond} with post-hoc analysis, which statistically ranks all 24 methods with $\alpha = 0.05$.
The results are visualized in Fig. \ref{fig:apx:friedman}, from which several observations emerge:
1) \textit{NDCL achieves the best overall ranking by a clear margin}.
\ours {} obtains an average rank of 4.80, markedly better than all 23 baselines.
The gap between \ours {} and the second-best method (PGrad, 7.80) is substantial, highlighting the consistently strong performance of \ours {} across diverse domains and class-imbalance conditions.
2) \textit{Most existing DG methods form a middle tier with similar and noticeably worse ranks}.
A large cluster of methods exhibit average ranks between 11 and 15, indicating that while they offer moderate improvements, their performance is far from competitive with the top-tier methods.
3) \textit{In conjunction with CD = 3.71, \ours {} is statistically superior to almost all methods}.
Only PGrad falls within the non-significant region; all remaining 22 methods are significantly worse than \ours.
\textbf{Overall}, these findings provide strong and comprehensive evidence for the effectiveness and robustness of our proposed \ours {} across diverse settings.

\subsection{Comprehensive Results}

In this subsection, we report more comprehensive results of our designed experiments.

\begin{table}[t!]
	\centering
	\tiny
	\setlength\tabcolsep{0.282em}
	\caption{Worst-domain \textit{in-distribution} accuracy per target under the Duality setting on three benchmarks. The \textbf{bold}, \underline{underline}, and \dashuline{dashline} items are the best, the second-best, and the third-best results, respectively. Column V on VLCS denotes the in-distribution accuracy of the worst source domain when V is used as the target domain.}
	\label{tab:apx:exp:worst}
	\begin{tabular}{l|cccc|cccc|cccc}
		\toprule
		& \multicolumn{4}{|c|}{VLCS} & \multicolumn{4}{|c|}{PACS} & \multicolumn{4}{|c}{OfficeHome} \\
		& C & L & S & V & A & C & P & S & A & C & P & R \\
		\midrule
		ERM                  & 68.1 $\pm$ 0.4            & 62.9 $\pm$ 1.7            & 62.0 $\pm$ 2.9            & 33.7 $\pm$ 2.5            & 77.8 $\pm$ 0.6            & 79.6 $\pm$ 0.5            & 76.6 $\pm$ 1.5            & 75.8 $\pm$ 0.7            & 67.0 $\pm$ 0.3            & 66.5 $\pm$ 0.6            & 49.4 $\pm$ 0.3            & 55.9 $\pm$ 0.8            \\
		IRM                  & 67.1 $\pm$ 4.6            & 64.1 $\pm$ 1.1            & 60.3 $\pm$ 0.9            & 31.3 $\pm$ 2.8            & 75.1 $\pm$ 1.2            & 75.1 $\pm$ 2.9            & 75.8 $\pm$ 2.0            & 74.9 $\pm$ 1.9            & 64.9 $\pm$ 0.6            & 65.5 $\pm$ 0.7            & 46.9 $\pm$ 0.9            & 54.4 $\pm$ 0.9            \\
		GroupDRO             & 67.6 $\pm$ 1.5            & 64.7 $\pm$ 2.1            & 60.8 $\pm$ 0.4            & \textbf{41.7 $\pm$ 5.9}   & 75.7 $\pm$ 0.2            & 78.4 $\pm$ 0.9            & 78.8 $\pm$ 1.0            & 76.9 $\pm$ 1.7            & 66.1 $\pm$ 0.8            & 66.7 $\pm$ 0.9            & 48.5 $\pm$ 1.1            & 56.9 $\pm$ 0.6            \\
		Mixup                & 71.1 $\pm$ 2.5            & 64.4 $\pm$ 2.5            & 57.8 $\pm$ 1.6            & 36.7 $\pm$ 1.8            & 76.2 $\pm$ 1.1            & 77.3 $\pm$ 1.8            & 78.5 $\pm$ 0.7            & 76.2 $\pm$ 1.2            & 67.0 $\pm$ 1.3            & 67.1 $\pm$ 0.2            & 48.5 $\pm$ 0.2            & 59.5 $\pm$ 0.2            \\
		MLDG                 & 70.2 $\pm$ 2.8            & 63.9 $\pm$ 2.0            & 60.3 $\pm$ 1.5            & 35.7 $\pm$ 1.8            & 76.9 $\pm$ 1.2            & 77.9 $\pm$ 1.2            & 76.3 $\pm$ 1.6            & 75.3 $\pm$ 2.4            & 66.9 $\pm$ 0.7            & 67.7 $\pm$ 0.4            & 48.3 $\pm$ 1.0            & 56.5 $\pm$ 0.8            \\
		CORAL                & 69.8 $\pm$ 2.1            & 65.3 $\pm$ 0.6            & 63.7 $\pm$ 0.2            & 38.4 $\pm$ 0.8            & 76.1 $\pm$ 1.1            & 75.5 $\pm$ 0.9            & 74.8 $\pm$ 1.3            & 73.6 $\pm$ 0.7            & 68.4 $\pm$ 0.8            & 69.2 $\pm$ 0.9            & \dashuline{51.8 $\pm$ 0.4}& 59.7 $\pm$ 0.5            \\
		MMD                  & 63.3 $\pm$ 2.8            & 64.5 $\pm$ 2.6            & 59.9 $\pm$ 2.1            & 31.4 $\pm$ 0.7            & 79.7 $\pm$ 1.4            & 79.9 $\pm$ 0.7            & 76.9 $\pm$ 2.2            & 74.9 $\pm$ 1.1            & 66.5 $\pm$ 1.1            & 66.6 $\pm$ 0.1            & 48.4 $\pm$ 0.4            & 53.7 $\pm$ 0.5            \\
		DANN                 & 71.9 $\pm$ 2.7            & 62.3 $\pm$ 3.0            & 61.1 $\pm$ 3.5            & \underline{41.0 $\pm$ 5.1}& 78.4 $\pm$ 0.3            & 80.5 $\pm$ 0.6            & 75.4 $\pm$ 3.1            & 74.9 $\pm$ 1.2            & 66.8 $\pm$ 0.5            & 66.9 $\pm$ 0.8            & 49.1 $\pm$ 0.5            & 55.8 $\pm$ 0.7            \\
		SagNet               & 71.7 $\pm$ 0.3            & 66.4 $\pm$ 0.8            & \underline{64.8 $\pm$ 1.9}& 32.0 $\pm$ 1.6            & 77.6 $\pm$ 1.0            & 78.0 $\pm$ 0.6            & 76.7 $\pm$ 2.3            & 75.3 $\pm$ 2.3            & \dashuline{68.6 $\pm$ 0.1}& \dashuline{69.5 $\pm$ 0.3}& \textbf{52.3 $\pm$ 0.7}   & 59.4 $\pm$ 0.8            \\
		VREx                 & 65.7 $\pm$ 3.9            & 63.8 $\pm$ 0.6            & \textbf{64.9 $\pm$ 0.9}   & 32.0 $\pm$ 2.5            & 74.6 $\pm$ 2.1            & 77.2 $\pm$ 0.8            & 75.4 $\pm$ 1.1            & \dashuline{78.2 $\pm$ 1.4}& 66.1 $\pm$ 0.7            & 66.4 $\pm$ 0.5            & 49.7 $\pm$ 0.8            & 57.8 $\pm$ 0.4            \\
		Fish                 & 72.6 $\pm$ 2.3            & 66.1 $\pm$ 2.4            & 62.0 $\pm$ 1.4            & 34.5 $\pm$ 2.3            & 76.1 $\pm$ 1.2            & 77.6 $\pm$ 0.7            & 77.3 $\pm$ 1.8            & 77.6 $\pm$ 0.8            & 67.6 $\pm$ 0.3            & 68.8 $\pm$ 0.2            & 50.2 $\pm$ 0.8            & 59.7 $\pm$ 1.2            \\
		Fishr                & 68.8 $\pm$ 2.3            & 66.3 $\pm$ 2.5            & 64.0 $\pm$ 1.5            & 36.8 $\pm$ 1.8            & 77.3 $\pm$ 0.7            & 77.8 $\pm$ 1.1            & 76.4 $\pm$ 1.6            & 76.8 $\pm$ 1.1            & 67.6 $\pm$ 0.7            & 66.4 $\pm$ 0.2            & 49.9 $\pm$ 0.6            & 58.4 $\pm$ 0.5            \\
		EQRM                 & 69.7 $\pm$ 3.8            & 61.0 $\pm$ 1.5            & 62.6 $\pm$ 1.1            & 34.2 $\pm$ 1.9            & 79.0 $\pm$ 1.3            & \dashuline{80.9 $\pm$ 0.4}& 76.9 $\pm$ 1.3            & 77.3 $\pm$ 1.4            & 66.1 $\pm$ 1.1            & 67.1 $\pm$ 0.1            & 50.0 $\pm$ 0.4            & 56.7 $\pm$ 0.3            \\
		RDM                  & 72.5 $\pm$ 1.8            & 65.1 $\pm$ 1.8            & 59.5 $\pm$ 0.4            & 35.5 $\pm$ 2.4            & 76.9 $\pm$ 0.2            & 78.6 $\pm$ 0.1            & \dashuline{79.4 $\pm$ 1.2}& 75.6 $\pm$ 1.6            & 67.8 $\pm$ 0.7            & 67.6 $\pm$ 1.2            & 49.3 $\pm$ 0.9            & 57.2 $\pm$ 0.6            \\
		PGrad                & \underline{74.8 $\pm$ 1.4}& 65.5 $\pm$ 2.3            & 63.5 $\pm$ 0.9            & 33.1 $\pm$ 1.5            & 79.8 $\pm$ 0.5            & \underline{81.1 $\pm$ 0.5}& \underline{80.4 $\pm$ 1.7}& \underline{78.3 $\pm$ 2.5}& \underline{69.2 $\pm$ 0.4}& \textbf{69.9 $\pm$ 0.5}   & \underline{52.1 $\pm$ 0.9}& \textbf{62.0 $\pm$ 0.8}   \\
		TCRI                 & 65.9 $\pm$ 3.0            & \dashuline{67.0 $\pm$ 1.6}& \dashuline{64.7 $\pm$ 3.1}& 31.5 $\pm$ 1.5            & 76.2 $\pm$ 1.1            & 77.8 $\pm$ 1.1            & 77.5 $\pm$ 0.9            & \textbf{79.0 $\pm$ 1.0}   & \underline{69.2 $\pm$ 0.4}& \dashuline{69.5 $\pm$ 0.5}& 50.1 $\pm$ 1.2            & \dashuline{59.9 $\pm$ 0.5}\\
		\midrule
		Focal                & 68.0 $\pm$ 0.7            & 63.9 $\pm$ 2.2            & 61.1 $\pm$ 1.3            & 32.5 $\pm$ 0.3            & 79.6 $\pm$ 0.6            & 75.6 $\pm$ 0.6            & 77.4 $\pm$ 1.4            & 77.0 $\pm$ 0.4            & 66.7 $\pm$ 0.4            & 66.6 $\pm$ 0.4            & 48.1 $\pm$ 0.8            & 56.0 $\pm$ 0.2            \\
		ReWeight             & 64.7 $\pm$ 1.1            & 64.4 $\pm$ 0.9            & 63.3 $\pm$ 1.2            & 33.0 $\pm$ 0.9            & 79.2 $\pm$ 0.5            & 79.4 $\pm$ 0.3            & 78.2 $\pm$ 0.5            & 76.0 $\pm$ 1.8            & 68.2 $\pm$ 0.7            & 67.6 $\pm$ 0.3            & 48.8 $\pm$ 0.5            & 57.1 $\pm$ 0.8            \\
		BSoftmax             & 66.8 $\pm$ 0.9            & \underline{67.3 $\pm$ 1.9}& 61.4 $\pm$ 1.3            & \dashuline{38.7 $\pm$ 2.4}& \textbf{80.2 $\pm$ 0.7}   & 78.6 $\pm$ 1.7            & 77.4 $\pm$ 1.9            & 77.0 $\pm$ 0.6            & 68.0 $\pm$ 0.6            & 67.4 $\pm$ 0.5            & 49.2 $\pm$ 0.8            & 57.6 $\pm$ 0.5            \\
		LDAM                 & 68.9 $\pm$ 1.1            & 63.8 $\pm$ 1.3            & 60.5 $\pm$ 1.1            & 37.3 $\pm$ 1.0            & 78.7 $\pm$ 1.5            & 78.6 $\pm$ 2.1            & 78.5 $\pm$ 0.2            & 74.9 $\pm$ 0.7            & 65.8 $\pm$ 0.4            & 66.5 $\pm$ 0.3            & 47.0 $\pm$ 0.5            & 55.5 $\pm$ 1.0            \\
		\midrule
		BoDA                 & 71.6 $\pm$ 3.0            & 65.1 $\pm$ 2.8            & 64.3 $\pm$ 1.6            & 31.4 $\pm$ 2.4            & 74.9 $\pm$ 0.3            & 76.2 $\pm$ 0.9            & 76.3 $\pm$ 0.5            & 74.2 $\pm$ 2.2            & 68.2 $\pm$ 0.9            & 68.5 $\pm$ 0.6            & \dashuline{51.8 $\pm$ 1.1}& 58.8 $\pm$ 0.2            \\
		GINIDG               & \dashuline{72.9 $\pm$ 1.8}& 62.6 $\pm$ 0.4            & 63.0 $\pm$ 3.4            & 31.7 $\pm$ 3.2            & 69.2 $\pm$ 3.2            & 73.7 $\pm$ 1.1            & 73.9 $\pm$ 1.6            & 72.6 $\pm$ 1.7            & 65.2 $\pm$ 1.3            & 65.8 $\pm$ 0.8            & 49.5 $\pm$ 1.5            & 54.1 $\pm$ 0.7            \\
		SAMALTDG             & 70.7 $\pm$ 0.7            & 63.0 $\pm$ 2.6            & 63.4 $\pm$ 1.7            & 34.3 $\pm$ 0.8            & \dashuline{79.9 $\pm$ 0.3}& 79.3 $\pm$ 0.3            & 77.8 $\pm$ 1.3            & 75.7 $\pm$ 1.6            & 66.3 $\pm$ 0.8            & 66.9 $\pm$ 0.2            & 49.2 $\pm$ 0.5            & 58.0 $\pm$ 0.6            \\
		\midrule
		\ours                & \textbf{74.9 $\pm$ 0.5}   & \textbf{68.1 $\pm$ 1.2}   & 62.9 $\pm$ 1.5            & \underline{41.0 $\pm$ 1.8}& \underline{80.0 $\pm$ 0.4}& \textbf{81.9 $\pm$ 0.2}   & \textbf{81.2 $\pm$ 1.7}   & \underline{78.3 $\pm$ 1.3}& \textbf{69.4 $\pm$ 0.7}   & \underline{69.8 $\pm$ 0.3}& 51.3 $\pm$ 0.5            & \underline{60.0 $\pm$ 0.4}\\
		\bottomrule
	\end{tabular}
\end{table}

Tab. \ref{tab:apx:exp:worst} reports the worst-domain in-distribution accuracy under the Duality setting, that is, the performance on the minority source domain in each experiment.
These results highlight several key findings:
1) Our proposed \ours{} consistently improves performance on the worst-case domain, demonstrating its effectiveness in mitigating the absorption of minority domains by majority ones, as discussed in Subsection \ref{subsec:apx:exp:failure}.
2) Domain generalization methods generally perform well, underscoring domain shift as the primary bottleneck in this setting.
3) Some other methods achieve competitive results sporadically, suggesting that enhancing class discriminability alone can offer limited gains, though less reliably than addressing domain shift directly under this setting for IDG.

Tab. \ref{tab:apx:exp:detailed:vlcs:idg} and \ref{tab:apx:exp:detailed:pacs:idg} report detailed results on VLCS and PACS benchmarks under the GINIDG setting, which are the performance on the target domains.
Tab. \ref{tab:apx:exp:detailed:vlcs:totalheavytail}, \ref{tab:apx:exp:detailed:pacs:totalheavytail}, and \ref{tab:apx:exp:detailed:officehome:totalheavytail} report detailed results on three benchmarks under the TotalHeavyTail setting.
Tab. \ref{tab:apx:exp:detailed:vlcs:duality}, \ref{tab:apx:exp:detailed:pacs:duality}, and \ref{tab:apx:exp:detailed:officehome:duality} report detailed results on three benchmarks under the Duality setting.
From the detailed tables, we observe some variability in performance across different domain combinations and settings. 
Nevertheless, our \ours{} consistently ranks among the top three in most cases. 
\textit{Notably, when there is a large discrepancy between source and target domains, such as when domain S is selected as the target domain on PACS, our \ours{} achieves consistently strong results across all settings.}
This demonstrates the effectiveness and robustness of our \ours{} under various forms of label shift.

\begin{sidewaystable}[p]
	\centering
	\caption{Detailed results of the target domain on VLCS benchmark under the GINIDG setting.}
	\label{tab:apx:exp:detailed:vlcs:idg}
	\scriptsize
	\setlength\tabcolsep{0.6em}

\end{sidewaystable}

\end{document}